\numberwithin{equation}{section}
\newtheorem{theorem}{Theorem}[section]
\newtheorem*{theorem*}{Theorem}
\newtheorem{lemma}[theorem]{Lemma}
\newtheorem*{lemma*}{Lemma}
\newtheorem{corollary}[theorem]{Corollary}
\theoremstyle{definition}
\newtheorem{definition}[theorem]{Definition}
\newtheorem{remark}[theorem]{Remark}
\newtheorem{assumption}[theorem]{Assumption}
\DeclarePairedDelimiter{\abs}{\lvert}{\rvert}
\DeclarePairedDelimiter{\norm}{\lVert}{\rVert}
\DeclarePairedDelimiter{\paren}{(}{)}
\DeclarePairedDelimiter{\braces}{\lbrace}{\rbrace}
\DeclarePairedDelimiter{\inprod}{\langle}{\rangle}
\DeclarePairedDelimiter{\sqbracket}{[}{]}
\DeclarePairedDelimiter{\floor}{\lfloor}{\rfloor}
\DeclarePairedDelimiter{\tripnorm}{\lvert\kern-0.25ex\lvert\kern-0.25ex\lvert}{\rvert\kern-0.25ex\rvert\kern-0.25ex\rvert}
\DeclarePairedDelimiterX{\helperkldiv}[2]{(}{)}{#1\;\delimsize\|\;#2}
\renewcommand{\P}{\mathbb{P}}
\newcommand{\Var}{\text{Var}}
\newcommand{\E}{\mathbb{E}}
\newcommand{\R}{\mathbb{R}}
\newcommand{\C}{\mathbb{C}}
\newcommand{\bolda}{\textbf{a}}
\newcommand{\bolde}{\textbf{e}}
\newcommand{\boldx}{\textbf{x}}
\newcommand{\boldy}{\textbf{y}}
\newcommand{\boldr}{\textbf{r}}
\newcommand{\boldSigma}{\boldsymbol{\Sigma}}
\newcommand{\boldP}{\textbf{P}}
\newcommand{\boldI}{\textbf{I}}
\newcommand{\cross}{\times}
\newcommand{\grad}{\nabla}
\newlength{\dhatheight}
\newcommand{\sign}{\textnormal{sign}}
\newcommand{\Unif}{\textnormal{Unif}}
\title{Online Stochastic Gradient Descent with Arbitrary Initialization Solves Non-smooth, Non-convex Phase Retrieval}
\author{Yan Shuo Tan \footnote{Department of Statistics, University of California, Berkeley, \href{mailto:yanshuo@berkeley.edu}{yanshuo@berkeley.edu}.} \quad\quad\quad Roman Vershynin \footnote{Department of Mathematics, University of California, Irvine, \href{mailto:rvershyn@uci.edu}{rvershyn@uci.edu}.}}
\begin{document}

\maketitle

\begin{abstract}
	In recent literature, a general two step procedure has been formulated for solving the problem of phase retrieval. First, a spectral technique is used to obtain a constant-error initial estimate, following which, the estimate is refined to arbitrary precision by first-order optimization of a non-convex loss function. Numerical experiments, however, seem to suggest that simply running the iterative schemes from a random initialization may also lead to convergence, albeit at the cost of slightly higher sample complexity. In this paper, we prove that, in fact, constant step size online stochastic gradient descent (SGD) converges from arbitrary initializations for the non-smooth, non-convex amplitude squared loss objective. In this setting, online SGD is also equivalent to the randomized Kaczmarz algorithm from numerical analysis. Our analysis can easily be generalized to other single index models. It also makes use of new ideas from stochastic process theory, including the notion of a summary state space, which we believe will be of use for the broader field of non-convex optimization.
\end{abstract}

\section{Introduction}

The mathematical phase retrieval problem is that of recovering a $d$-dimensional signal vector $\boldx^* \in \R^d$ (or $\C^d$) from the magnitudes $b^{(k)} = \abs{\inprod{\bolda^{(k)},\boldx^*}}$ ($k=1,2\ldots,N$) of its projections onto a collection of known sampling vectors $\bolda^{(1)},\bolda^{(2)},\ldots,\bolda^{(N)} \in \R^d$ (or $\C^d$). Clearly, one can only hope to recover $\boldx^*$ up to global phase shift (i.e. up to multiplication by $e^{i\phi}$ for $\phi \in [0,2\pi)$), but mild regularity assumptions on the sampling vectors and $N$ large enough ensures that this is the only ambiguity.

This problem is well motivated by practical concerns, having applications to Coherent Diffraction Imaging (CDI), Electron Microscopy, and X-ray Crystallography, and as such has been a topic of study from at least the early 1980s. We refer the reader to the survey papers \cite{Fienup1982,Shechtman2014,Bendory2017} for a comprehensive account of the contexts in which the problem arises, as well as the techniques that practitioners employ to solve it.

Over the last decade, the phase retrieval problem has also garnered substantial attention from the optimization and machine learning communities, the reason being that it can be formulated as a relatively benign non-convex optimization problem. In other words, it can be solved by minimizing the least squares objective:
\begin{equation} \label{eq:intensities_squared_loss}
f(\boldx) = \frac{1}{2N}\sum_{k=1}^N \paren*{\abs{\inprod{\bolda^{(k)},\boldx}}^2-(b^{(k)})^2}^2.
\end{equation}

Many papers have attempted to study how to optimize this objective given distributional assumptions on the sampling vectors . One popular approach is to use a two-step procedure: First, a spectral technique is used to obtain an initial estimate $\boldx^{(0)}$ so that its distance from a global minimum, $\norm{\boldx^{(0)}-\boldx^*}_2$, is bounded above by a small constant. Next, the estimate is refined to arbitrary precision using an iterative method such as gradient descent or stochastic gradient descent (SGD). This procedure is well-supported by theoretical guarantees for both real and complex signals. Here we note that one can also construct loss functions for phase retrieval which are different from \eqref{eq:amplitudes_squared_loss}. Running variations of gradient descent or SGD on these functions also leads to provable guarantees (see for instance \cite{Candes2015,Zhang2016, Wang2017, Wei2015, Jeong2017, Tan2017}).

Nonetheless, this state of affairs is not entirely satisfactory from an optimization theory perspective, since the use of a spectral initialization diminishes the novelty of being able to provably minimize a non-convex objective using first-order methods. The spectral initialization essentially allows the first-order method to begin within a ``basin of convexity'', thus artificially escaping the difficulties of non-convexity. Such a \emph{deus ex machina} may not be available when trying to optimize other non-convex functions.

Consequently, it is important to investigate whether first order methods converge to $\boldx^*$ from a random or even arbitrary initialization. From here on, we work with only real signals and sampling vectors, i.e. $\boldx^*, \bolda^{(k)} \in \R^d$. In \cite{Sun2016}, the authors performed numerical experiments to show that this was indeed the case. They also analyzed the landscape of the loss function \eqref{eq:intensities_squared_loss} in the real setting, and showed that it enjoyed favorable qualities: given $N = \Omega(d\cdot\textnormal{polylog}(d))$ measurements, there are no spurious minima and all saddle points are proper. Therefore, saddle-point escaping methods like perturbed gradient descent enjoy polynomial time convergence.

More recently, the authors of \cite{Chen2018} showed that vanilla gradient descent converges in $O(\log d)$ iterations, again given $N = \Omega(d\cdot\textnormal{polylog}(d))$ measurements. Up to log factors, this matches the running time guarantees for the original two-stage method. While an important step, their analysis still requires full gradient updates and does not apply to SGD. In the high-dimensional setting, SGD is particularly advantageous because it can be applied in an online, streaming fashion. This lowers the space complexity of the algorithm from $O(Nd)$ to $O(d)$, and allows progress toward the solution to be made even before the analyst gains access to the full data sample.

Moreover, with respect to the non-smooth \emph{amplitude least squares} objective
\begin{equation} \label{eq:amplitudes_squared_loss}
f(\boldx) = \frac{1}{2N}\sum_{k=1}^N \paren*{\abs{\inprod{\bolda^{(k)},\boldx}}-b^{(k)}}^2,
\end{equation}
the question of even gradient descent convergence remains open. This objective is especially interesting because numerical simulations have shown gradient descent and SGD with respect to it to succeed with fewer measurements than are necessary for the alternative objective \eqref{eq:intensities_squared_loss} (See \cite{Wei2015} and \cite{Wang2017}.)

\subsection{Main Results}

In this paper, we prove that for a real signal vector and real sampling vectors, online stochastic gradient descent for the non-smooth objective \eqref{eq:amplitudes_squared_loss} converges to a global minimum from arbitrary initializations given $\Omega(d\log d)$ Gaussian measurements. We believe our work to be among the first in establishing convergence of SGD in the non-smooth, non-convex regime. Furthermore, it will be readily apparent that our analysis framework generalizes easily to other single index models, and we conjecture that similar techniques will also work low-rank matrix sensing models in general.

We perform SGD with respect to \eqref{eq:amplitudes_squared_loss}, using a single data point per iteration. More formally, we form a sequence of signal estimates $\boldx^{(0)}, \boldx^{(1)}, \boldx^{(2)},\ldots$ with the update rule:
\begin{equation} \label{eq:SGD_update}
\boldx^{(k+1)} \coloneqq \boldx^{(k)} + \eta \paren*{\sign\paren{\inprod{\bolda^{(k)},\boldx^{(k)}}} b^{(k)} - \inprod{\bolda^{(k)},\boldx^{(k)}}}  \bolda^{(k)}.
\end{equation}

We typically choose $\eta = \frac{1}{d}$. This is the same as in previous work that analyzed SGD as part of the two-step approach (see \cite{Tan2017}), and allows for a clear geometric interpretation. At each step $k$, we receive the datum $\paren*{\bolda^{(k)},b^{(k)}}$; the solution set to the corresponding equation $\abs{\inprod{\bolda^{(k)},\boldx}} = b^{(k)}$ is then the union of two parallel hyperplanes. Taking an SGD step projects the current iterate $\boldx^{(k-1)}$ onto the closer of these hyperplanes. This iterative projection is strongly reminiscent of the randomized Kaczmarz algorithm for solving linear systems. For a more in-depth discussion of this connection, we again refer the reader to \cite{Tan2017}. 

We make the following assumptions for the rest of the paper.

\begin{assumption} \label{assumptions}
	Assume the following for each positive integer $k$:
	\begin{enumerate}
		\item[(A)] (Fresh measurements) At step $k$ of the algorithm, we use a sampling vector $\bolda^{(k)}$ that is fully independent of the previous measurements $\bolda^{(1)},\ldots,\bolda^{(k-1)}$.
		\item[(B)] (``Gaussian'' measurements) We have $\bolda^{(k)} \sim \sqrt{d}\mathbb{S}^{d-1}$, where $\mathbb{S}^{d-1}$ is the unit sphere in $\R^d$.
		\item[(C)] (No noise) We have $b_k = \abs{\inprod{\bolda^{(k)},\boldx^*}}$.
	\end{enumerate}
\end{assumption}

Note that because of concentration of norm, Assumption B is almost equivalent to requiring that each $\bolda_k$ has a standard Gaussian distribution in $\R^d$. We will use this distributional assumption instead more for the sake of convenience of analysis rather than anything else.

The following is the main result of the paper.

\begin{theorem}[Main result] \label{thm:main_theorem}
	Suppose we run the iterative update \eqref{eq:SGD_update} in $\R^d$ under the assumptions above, and with constant step size $\eta = \frac{\eta_0}{d}$ where $\eta_0$ is a small enough universal constant. Then with probability at least $0.8 - 1/d - C/\log d$, there is a stopping time $T \lesssim d\cdot\paren*{\log d + \log \paren*{\frac{\norm{\boldx^{(0)}}}{\norm{\boldx^*}}\vee 1}}$, such that for all $k \geq T$, we have
	\begin{equation} \label{eq:main_theorem}
	\norm{\boldx^{(k)}-\sigma\boldx^*}^2 \leq \paren*{1-\frac{1}{2d}}^{k-T} \norm{\boldx^*}^2,
	\end{equation}
	where $\sigma = \sign(\inprod{\boldx^{(k)},\boldx^*})$. Furthermore, there is some constant $D$, such that if $d \geq D$, we may choose $\eta_0=1$.
\end{theorem}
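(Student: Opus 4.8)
\emph{Reduction to a two-dimensional ``summary'' chain.} The plan is to replace the $d$-dimensional iteration by a two-dimensional Markov chain. After rescaling we may assume $\norm{\boldx^*}=1$; write $\boldu = \boldx^*$, $\alpha_k = \inprod{\boldx^{(k)},\boldu}$ and $\beta_k = \norm{\boldx^{(k)} - \alpha_k\boldu}$, so that $r_k^2 := \norm{\boldx^{(k)}}^2 = \alpha_k^2+\beta_k^2$ and, with $\sigma = \sign(\alpha_k)$,
\[
\norm{\boldx^{(k)}-\sigma\boldx^*}^2 = \paren*{\abs{\alpha_k}-1}^2 + \beta_k^2,
\]
so that \eqref{eq:main_theorem} is equivalent to driving $(\alpha_k,\beta_k) \to (\pm1,0)$. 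Decomposing $\bolda^{(k)}$ along $\boldu$, along $(\boldx^{(k)}-\alpha_k\boldu)/\beta_k$, and orthogonally, with first two coordinates $a_1,a_2$ and writing $p := \inprod{\bolda^{(k)},\boldx^{(k)}} = \alpha_k a_1 + \beta_k a_2$, $s := \sign(p)$, the update \eqref{eq:SGD_update} becomes
\[
\alpha_{k+1} = \alpha_k + \eta(s\abs{a_1}-p)a_1, \qquad \beta_{k+1}^2 = \beta_k^2 + 2\eta(s\abs{a_1}-p)\beta_k a_2 + \eta^2(s\abs{a_1}-p)^2(d-a_1^2).
\]
Since $\bolda^{(k)}\sim\sqrt d\,\mathbb{S}^{d-1}$ is rotation invariant, $(a_1,a_2)$ is close in law to $\N{0}{\boldI_2}$ and $d-a_1^2$ is close to $d$, so $(\alpha_k,\beta_k)$ is, up to explicitly quantifiable errors, a Markov chain driven by i.i.d.\ standard Gaussians; the proof tracks the coarse behaviour of this chain on a finite collection of macroscopic regions of the $(\alpha,\beta)$-plane — its ``summary state space''.

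\emph{The three phases.} \textbf{Phase 0 (norm equilibration).} Using $\norm{\bolda^{(k)}}^2 = d$ exactly and $\eta\eta_0 = \eta^2 d$, a one-line computation gives, for $\eta_0=1$, the identity $\E[r_{k+1}^2\mid\mathcal F_k] = r_k^2 + \tfrac1d(1-r_k^2)$, i.e.\ geometric contraction of $r_k^2$ toward $1$ at rate $1-1/d$ (the ``rigidity'' of the Kaczmarz projection); for $\eta_0<1$ one still gets geometric contraction of $r_k^2$ toward an $O(1)$ band whenever $r_k$ is large. With a maximal inequality this shows the norm enters — and thereafter stays, with high probability — a band $r_k\asymp 1$ after $O\paren*{d\log(\norm{\boldx^{(0)}}/\norm{\boldx^*}\vee1)}$ steps. \textbf{Phase 1 (correlation amplification).} On that band the conditional drift of the parallel component is, to leading order, $\E[\alpha_{k+1}-\alpha_k\mid\mathcal F_k] \approx \eta\,\alpha_k\paren*{\tfrac{4}{\pi\beta_k}-1}$, a \emph{positive} multiple of $\alpha_k$ throughout the band (there $\beta_k < 4/\pi$), while the increment has conditional second moment $\Theta(\eta^2)$. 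Hence $Y_k := \alpha_k^2 + \Theta(\eta)$ is a geometric submartingale, $\E[Y_{k+1}\mid\mathcal F_k]\ge(1+\Theta(\eta))Y_k$, started from $Y_0 = \Theta(\eta)=\Theta(1/d)$ even when $\alpha_0=0$ exactly; controlling also the conditional second moment of $Y_{k+1}$ and applying a Paley--Zygmund-type argument shows that after $O(d\log d)$ further steps $\abs{\alpha_k}/r_k$ exceeds a fixed constant with probability at least a constant — the origin of the $0.8$. \textbf{Phase 2 (local linear convergence).} With $\eta_0=1$ a step sends $\boldx^{(k)}$ to its projection onto one of the two hyperplanes $\{\boldy:\inprod{\bolda^{(k)},\boldy}=\pm b^{(k)}\}$, and $\sigma\boldx^*$ lies on one of them; the step projects onto \emph{that} one unless $\abs{\inprod{\bolda^{(k)},\boldx^*}} < \abs{\inprod{\bolda^{(k)},\boldx^{(k)}-\sigma\boldx^*}}$, an event of probability $O(\norm{\boldx^{(k)}-\sigma\boldx^*}/\norm{\boldx^*})$ on which the distance to $\sigma\boldx^*$ increases by only $O(1/d)$ in a controlled sense. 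Pythagoras on the good event and this bound on the bad one yield
\[
\E\!\left[\norm{\boldx^{(k+1)}-\sigma\boldx^*}^2\;\middle|\;\mathcal F_k\right] \le \paren*{1-\tfrac{\Theta(1)}{d}}\norm{\boldx^{(k)}-\sigma\boldx^*}^2 \quad\text{whenever } \norm{\boldx^{(k)}-\sigma\boldx^*}\le\norm{\boldx^*}.
\]
Taking $T$ to be the first time this basin is reached — which by Phases 0--1 is $\lesssim d\paren*{\log d + \log(\norm{\boldx^{(0)}}/\norm{\boldx^*}\vee1)}$ — and applying Doob's maximal inequality to the supermartingale $\norm{\boldx^{(k)}-\sigma\boldx^*}^2(1-\tfrac1{2d})^{-k}$, together with a stopping-time argument that the basin is never exited after $T$ (so $\sigma$ never changes), gives \eqref{eq:main_theorem}.

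\emph{Main obstacle and remaining points.} The crux is Phase 1: rigorously controlling $\alpha_k$ in the diffusion-dominated neighbourhood of the saddle manifold $\{\alpha=0\}$, where the drift is vanishingly small. A positive \emph{expected} multiplicative drift on $\alpha_k^2$ does not by itself force escape; one must combine it with the non-degeneracy of the noise and with an upper bound on the growth of $\E[\alpha_k^2]$ (hence the two-sided submartingale/second-moment control and the Paley--Zygmund step), while keeping the process inside the region where the linearization above is valid and the norm stays in its band. This is exactly where the summary-state-space machinery and the stochastic-process estimates advertised in the abstract are needed, and also why only a constant success probability — not $1-o(1)$ — is obtained. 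The remaining work is more routine: (i) quantifying the non-Gaussianity of $\bolda^{(k)}\sim\sqrt d\,\mathbb{S}^{d-1}$ and the gap between $d-a_1^2$ and $d$, negligible when $\eta_0<1$ but requiring $d\ge D$ to close the borderline case $\eta_0=1$; (ii) the union bound over the three phases and over the $\Theta(d\log d)$ steps within each, which produces the $-1/d-C/\log d$ correction; and (iii) a further maximal-inequality argument that the norm band, once entered in Phase 0, is not left during Phases 1--2.
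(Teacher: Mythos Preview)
Your architecture matches the paper's almost exactly: the reduction to a two-coordinate Markov chain, the three-phase decomposition (norm equilibration, correlation growth from the saddle, local contraction), the identification of the saddle-escape as the crux, and the supermartingale/stopping-time argument in the basin. Phases~0 and~2 in your sketch correspond to the paper's Sections~3 and~7 and are essentially correct as written.

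The gap is in your Phase~1. You propose to show directly that $Y_k=\alpha_k^2+\Theta(\eta)$ is a geometric submartingale, control its second moment, and apply Paley--Zygmund. The difficulty you are glossing over is that the multiplicative drift rate $\bar\beta(r^2,s)/s$ is \emph{not constant} on the band (it ranges between $\bar b_{\min}$ and $\bar b_{\max}$, cf.\ Lemma~\ref{lem:lower_bound_for_drift}). The second-moment lower bound picks up $\bar b_{\min}$, the fourth-moment upper bound picks up $\bar b_{\max}$, and after $T\asymp d\log d$ steps the ratio $\E[\alpha_T^2]^2/\E[\alpha_T^4]$ behaves like $d^{-c(\bar b_{\max}-\bar b_{\min})}\to 0$, so Paley--Zygmund gives nothing. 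The paper's fix (Sections~\ref{sec:phase_2a}--\ref{sec:phase2a_pt2}) is to construct a comparison process $\hat s_k$ with a \emph{fixed, small} drift constant $\kappa$ and approximately Gaussian increments, prove $s_{kB}^2\succeq_\delta \hat s_k^2$ via an epoch-based Berry--Esseen coupling and a monotonicity property of soft-thresholded Gaussians, and only then run the matched second/fourth-moment recursions on $\hat s_k$. This stochastic-dominance step---not the Paley--Zygmund inequality itself---is the real technical content of the saddle-escape argument, and your sketch does not contain it.

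Two smaller omissions: (i) the paper inserts a further deterministic ``Phase~2b'' (Section~\ref{sec:phase_2b}) to go from $|s_k|\ge\gamma_1$ (small constant) to $\Psi(\boldy^{(k)})\le\gamma_2$ (inside the basin), via comparison with the drift ODE; your sketch jumps straight from ``correlation exceeds a constant'' to the basin, which does not quite close. (ii) The constant $0.8$ does \emph{not} come from Paley--Zygmund. The paper first boosts the Paley--Zygmund constant $p$ to $0.9$ by a geometric-restart argument (repeat the Phase~2 block $O(1)$ times), and the $0.8$ is then what remains after two applications of the supermartingale inequality in Phase~3.
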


This theorem tells us that in $T$ steps, SGD brings us to a point in the ``basin of convexity'' around a global minimum, within which we get linear convergence. As a consequence of this theorem, it is easy to see that we can get an $\epsilon$-relative-error estimate using $O(d \log(d/\epsilon))$ measurements (and the same number of steps). In fact, the analysis in \cite{Tan2017} tells us that within the ``basin of convexity'', linear convergence still holds if we resample from amongst $O(d)$ measurements. By doing this instead, our final sample complexity is $O(d \log d)$.

The novelty of this result is not simply the convergence of the algorithm, but rather its convergence with \emph{near optimal sample and time complexity}. Indeed, it is well known that as we let the step size decrease to zero, SGD under our assumptions approximates gradient flow for the population loss function. It is easy to show that gradient flow converges to a global minimum. With smaller steps, however, the algorithm takes a longer time to converge.

In addition, our chosen step size $\frac{1}{d}$ is the smoothness parameter for SGD if we were optimizing a least squares system objective under the same assumptions on $\bolda^{(k)}$. It is interesting that although our objective is no longer smooth, $\frac{1}{d}$ nonetheless remains the right scaling.

The proof of the result is somewhat complicated and makes use of several new ideas. The first idea is to think of the sequence of SGD iterates as a Markov chain on a two-dimensional \emph{summary state space} $\mathcal{Y}$. The two coordinates are the squared Euclidean norm of the iterate, $r^2 = \norm{\boldx}^2$, and the correlation with the signal, $s = \inprod{\boldx,\frac{\boldx^*}{\norm{\boldx^*}}}$. Just as how in thermodynamics state variables such as temperature, pressure, and volume suffice to determine the evolution of a thermodynamic system, so too in our case do the state variables $r^2$ and $s$ suffice to determine the progress of the SGD algorithm.

The state space is obviously independent of the dimension $d$. In fact, one can show that the distribution of the update in the state space is effectively independent of the dimension up to overall scaling (see Theorem \ref{thm:state_space_MC}). As such, as $d$ tends to infinity, the stochastic dynamics of the Markov chain when initialized at a \emph{fixed} $\boldy \in \mathcal{Y}$ approximates the solution of an ODE system whose corresponding vector field is given by the rescaled drift of the process.

Unfortunately, we are hit with the curse of dimensionality: if we take a random initialization $\boldx^{(0)} \sim \mathcal{N}(0,\boldI)$, then it is well-known that with high probability,
\begin{equation} \label{eq:curse_of_dimensionality}
s_0 = \frac{\abs{\inprod{\boldx^{(0)},\boldx^*}}}{\norm{\boldx^{(0)}}\norm{\boldx^*}} \lesssim \frac{1}{\sqrt{d}}.
\end{equation} 
This implies that the initial correlation with the signal decays as the dimension increases, and in fact, at the corresponding point $(r_0^2,s_0)$ in the state space, the drift and the fluctuations have the same magnitude, and it is no longer appropriate to approximate the stochastic dynamics with a deterministic process.

Overcoming this is the most difficult part of the proof. To do so, we use a small-ball probability argument. In other words, we show that the distribution of $s_K = s(\boldx^{(K)})$ is anti-concentrated away from 0 when $K$ is large enough. This involves comparing the process $s_0,s_1,\ldots$ with a more well-understood process $\hat{s}_0,\hat{s}_1,\ldots$ via stochastic dominance. The distribution of this new sequence can in turn be controlled via recursive inequalities bounding 4th moments from above and 2nd moments from below. We conclude by applying the Paley-Zygmund inequality.

\subsection{Related Work}

\subsubsection{Phase retrieval}

There is already a large body of work on phase retrieval, and it is impossible to give a full account of the literature. We have already mentioned survey papers for how phase retrieval arises in various engineering problems. On the theoretical side, we have already discussed the two-step non-convex optimization approach, and will further mention here the convex relaxation approaches pioneered in the papers \cite{Cand??s2013,Goldstein2016,Bahmani2016a,Hand2016a}.

\subsubsection{SGD as a Markov chain}

It has long be observed that constant step-size SGD can be thought of as a Markov chain, and there seems to be a resurgence of interest in this view of SGD. For instance, \cite{Dieuleveut2017} uses this approach to analyze the limiting distribution of SGD iterates for strongly convex functions. Furthermore, \cite{Mandt2017} uses this interpretation to see how SGD can be used as a sampling algorithm. Both these works have drawn inspiration from the recent body of work on Langevin algorithms for sampling from log-concave distributions. The idea of analyzing SGD through diffusion approximation is also present in \cite{Li2016}.

\subsubsection{Non-convex optimization and first-order methods}

The Kaczmarz method is a classical method in numerical analysis for solving large scale overdetermined linear systems. A randomized version of it was first analyzed by \cite{Strohmer2009}. In our earlier work \cite{Tan2017}, we proposed adapting the method to the setting of phase retrieval, where it coincides with SGD under the Gaussian measurement setting,

Stochastic first-order methods have emerged as the optimization method of choice for modern machine learning. In particular, deep neural networks are trained almost exclusively using SGD and variants like ADAM. The loss functions for these models, however, are non-convex functions, for which there has traditionally been little theory on how first-order methods behave.

Unsurprisingly, there has been a concerted push over the last few years to address this issue. One line of work studies how gradient descent or SGD can be made to escape saddle points quickly (see \cite{Ge2015,Jin2017,Jin2019}). Another line of work has focused on identifying regimes for shallow and deep neural networks for which gradient descent or SGD can be shown to converge (see for instance \cite{Mei2018,Allen-Zhu2018}).

\subsection{Notation}

Scalars are denoted in normal font, whereas vectors and matrices are denoted in bold. Subscripts (usually) denote components of a vector, while superscripts denote the index of a quantity when it is part of a sequence. Sets and events are denoted using calligraphic font. The indicator of a set $\mathcal{A}$ is denoted by $\boldsymbol{1}_{\mathcal{A}}$. The Euclidean norm is denoted with subscript omitted: $\norm{-}$. No other norms are used in this paper, so there is no risk of confusion. When $X$ is a subexponential random variable, $\norm{X}_{\psi_1}$ denotes its subexponential norm (for a definition, see Appendix \ref{sec:properties_of_subexponentials}). We use $\mathcal{B}(\R^d)$ to denote the Borel $\sigma$-algebra for $\R^d$. Sequences of the form $x_0, x_1, x_2,\ldots$ are denoted by $\braces{x_k}_k$. Throughout the paper, $C$, $c$, $C_1$, $C_2$ and $C_3$ denote positive constants that may change from line to line.

\section{Outline of proof}

We start by making some simplifying assumptions. First, note that the algorithm and our guarantee \eqref{eq:main_theorem} are both invariant with respect to scaling and rotation. As such, we may assume without loss of generality that $\boldx^* = \bolde_1$, the first coordinate basis vector. We also only analyze the case where $d$ is large enough so that $\eta_0=1$ and the step size is set to be $\eta = \frac{1}{d}$. The extension to smaller $d$ will be obvious.

\subsection{State space Markov chain} \label{subsec:state_space_MC}

The state space is the set $\mathcal{Y} \coloneqq \braces*{\boldy = (r^2,s) \in \R^2 ~\colon s^2 \leq r^2}$, where $r$ is the Euclidean norm of the SGD iterate and $s$ is its projection onto the signal direction. In other words, the natural projection map $\pi\colon \R^d \to \mathcal{Y}$ is defined by
\begin{equation*}
r^2(\boldx) \coloneqq \norm{\boldx}^2 \quad\quad\textnormal{and}\quad\quad s(\boldx) \coloneqq \inprod{\boldx,\boldx^*}.
\end{equation*}
The reason we choose to use $r^2$ instead of $r$ is for the convenience of obtaining formulas for the stochastic update, as will be evident later. We further define $\theta = \theta(\boldx) \coloneqq \arccos(s/r)$. This is the smaller angle between $\boldx$ and $\boldx^*$.

Note that we can track the progress of SGD purely in terms of the state variables. Indeed, we have
\begin{align} \label{eq:definition_for_Psi}
\norm*{\boldx-\sign(\inprod{\boldx,\boldx^*})\boldx^*}^2 & = \norm*{\boldx}^2 - 2\abs*{\inprod{\boldx,\boldx^*}} + \norm*{\boldx^*}^2 \nonumber \\
& = r^2 - 2\abs{s} + 1 \nonumber \\
& \eqqcolon \Psi(r^2,s),
\end{align}
so that the error of the $k$-th step estimate $\boldx^{(k)}$ is equal to $\Psi(\pi(\boldx^{(k)}))$. Note that $-\boldx^*$ and $\boldx^*$ are mapped onto $(1,-1)$ and $(1,1)$, so that $\Psi$ is uniquely minimized at these values. We hence wish to show that $r^2$ and $s$ coordinates of our iterates converge to $1$ and $\pm 1$ respectively.

It is clear that the SGD sequence $\boldx^{(0)},\boldx^{(1)}, \boldx^{(2)},\ldots$ is a Markov chain on $(\R^d,\mathcal{B}(\R^d))$, with the update rule \eqref{eq:SGD_update} giving a random mapping representation for the transition kernel. Now, consider the sequence $\boldy^{(0)}, \boldy^{(1)}, \boldy^{(2)},\ldots$ where we define $\boldy^{(k)} \coloneqq \pi(\boldx^{(k)})$ for each index $k$. We are now ready to state the first key insight:

\begin{theorem} \label{thm:state_space_MC}
	The sequence $\boldy^{(0)}, \boldy^{(1)}, \boldy^{(2)},\ldots$ is a Markov chain on $(\mathcal{Y},\mathcal{B}(\mathcal{Y}))$ whose transition kernel has the random mapping representation \begin{equation} \label{eq:state_space_update}
	\boldy^{(k+1)} = \boldy^{(k)} + \frac{1}{d}\paren*{\alpha(\boldy^{(k)}),\beta(\boldy^{(k)})},
	\end{equation}
	where
	\begin{equation} \label{eq:formula_for_alpha}
	\alpha(r^2,s) \coloneqq (1-r^2\cos^2\theta) \cdot u^2-r^2\sin^2\theta \cdot v^2 +r^2\sin\theta\cos\theta \cdot uv,
	\end{equation}
	\begin{equation} \label{eq:formula_for_beta}
	\beta(r^2,s) \coloneqq (1-r\cos\theta- 2\boldsymbol{1}_{\mathcal{A}}) \cdot u^2 - r\sin\theta \cdot uv.
	\end{equation}
	Here, the randomness is supplied by $(u,v)$, which is a 2-dimensional marginal of the uniform distribution on $\sqrt{d}\mathbb{S}^{d-1}$, while $\mathcal{A} = \mathcal{A}(\theta)$ is the event that $\sign(\cos\theta u + \sin\theta v) \neq \sign(u)$.
\end{theorem}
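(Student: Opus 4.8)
The plan is to prove the statement in two stages: first that $\braces{\boldy^{(k)}}_k$ is genuinely a Markov chain on $\mathcal{Y}$, and then that its transition kernel has the claimed random mapping representation, the latter essentially by a direct computation of how $r^2$ and $s$ transform under one SGD step.

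For the Markov property, I would start from the observation that $\braces{\boldx^{(k)}}_k$ is itself a Markov chain on $\R^d$, since by Assumption~A the update \eqref{eq:SGD_update} at step $k$ uses a fresh $\bolda^{(k)}$ independent of $\boldx^{(0)},\dots,\boldx^{(k)}$. To descend to $\mathcal{Y}$, the essential point is rotational symmetry. Let $G$ be the subgroup of $O(d)$ fixing $\boldx^*$. The law $\Unif(\sqrt{d}\,\mathbb{S}^{d-1})$ of $\bolda^{(k)}$ is $O(d)$-invariant, hence $G$-invariant, and the update map is $G$-equivariant: for $R\in G$, applying \eqref{eq:SGD_update} to $R\boldx$ with measurement $\bolda$ yields $R$ applied to the result of \eqref{eq:SGD_update} on $\boldx$ with measurement $R^{-1}\bolda$, because $\inprod{\bolda,R\boldx}=\inprod{R^{-1}\bolda,\boldx}$ and $\abs{\inprod{\bolda,\boldx^*}}=\abs{\inprod{R^{-1}\bolda,\boldx^*}}$ (as $R\boldx^*=\boldx^*$). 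Since $\pi\circ R=\pi$ and $G$ acts transitively on each fiber $\pi^{-1}(\boldy)$, for any $\boldx,\boldx'$ with $\pi(\boldx)=\pi(\boldx')$ there is $R\in G$ with $\boldx'=R\boldx$, and the equivariance together with $G$-invariance of the law of $\bolda^{(k)}$ shows that $\boldy^{(k+1)}=\pi(\boldx^{(k+1)})$ has the same conditional law given $\boldx^{(k)}=\boldx$ as given $\boldx^{(k)}=\boldx'$. Hence this conditional law depends on $\boldx^{(k)}$ only through $\boldy^{(k)}=\pi(\boldx^{(k)})$, and combined with the Markov property of $\braces{\boldx^{(k)}}_k$ this produces a well-defined transition kernel for $\braces{\boldy^{(k)}}_k$ on $(\mathcal{Y},\mathcal{B}(\mathcal{Y}))$.

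For the explicit formulas I would compute in a convenient frame. By the invariances above take $\boldx^*=\bolde_1$ and an orthonormal basis with $\boldx^{(k)}=r\cos\theta\,\bolde_1+r\sin\theta\,\bolde_2$, where $r=r(\boldx^{(k)})$, $\theta=\theta(\boldx^{(k)})$, and $s(\boldx^{(k)})=r\cos\theta$ (if $\sin\theta=0$ the choice of $\bolde_2$ is irrelevant). Put $u=\inprod{\bolda^{(k)},\bolde_1}$ and $v=\inprod{\bolda^{(k)},\bolde_2}$; these are two coordinates of $\bolda^{(k)}$ with respect to an orthonormal basis, so $(u,v)$ is distributed as a two-dimensional marginal of $\Unif(\sqrt{d}\,\mathbb{S}^{d-1})$. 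Then $p\coloneqq\inprod{\bolda^{(k)},\boldx^{(k)}}=r\cos\theta\,u+r\sin\theta\,v$, $b^{(k)}=\abs{u}$, and the SGD coefficient is $c=\tfrac1d(\sign(p)\abs{u}-p)$. Using $\norm{\bolda^{(k)}}^2=d$ one gets $r^2(\boldx^{(k+1)})-r^2(\boldx^{(k)})=2cp+c^2d=\tfrac1d\bigl(2p(cd)+(cd)^2\bigr)=\tfrac1d(u^2-p^2)$, the $\abs{p}\abs{u}$ cross-terms cancelling after expansion; substituting $p$ and $s=r\cos\theta$ and collecting terms yields \eqref{eq:formula_for_alpha}. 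Likewise $s(\boldx^{(k+1)})-s(\boldx^{(k)})=c\inprod{\bolda^{(k)},\bolde_1}=cu$, and writing $\sign(p)\abs{u}\,u=\sign(p)\sign(u)\,u^2=(1-2\boldsymbol{1}_{\mathcal{A}})u^2$, where $\mathcal{A}=\braces{\sign(p)\neq\sign(u)}$ is a function of $\theta$ and $(u,v)$ only (since $\sign p=\sign(\cos\theta\,u+\sin\theta\,v)$), gives $d\bigl(s(\boldx^{(k+1)})-s(\boldx^{(k)})\bigr)=(cd)u=\beta(r^2,s)$ as in \eqref{eq:formula_for_beta}. Finally $\boldy^{(k+1)}=\pi(\boldx^{(k+1)})\in\mathcal{Y}$ by Cauchy--Schwarz, and since the distribution of $(u,v)$ does not depend on $\boldx^{(k)}$ while the $\bolda^{(k)}$ are independent across $k$, equations \eqref{eq:state_space_update}--\eqref{eq:formula_for_beta} constitute a genuine random mapping representation.

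The algebra in the second stage is mechanical; the step I expect to demand the most care is the symmetry argument for the Markov property --- turning ``by rotational invariance'' into an honest statement about transition kernels, which requires a measurable selection of the frame $\bolde_2$ as a function of $\boldx^{(k)}$ and a careful treatment of the degenerate cases $p=0$, $\sin\theta=0$, and $r=0$, all negligible for the dynamics but needing attention for the kernel to be well-defined on all of $\mathcal{Y}$.
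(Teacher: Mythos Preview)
Your proposal is correct and follows essentially the same route as the paper: introduce an orthonormal frame adapted to $\boldx^*$ and $\boldx^{(k)}$, set $(u,v)$ to be the first two coordinates of $\bolda^{(k)}$ in that frame, and compute $r^2$- and $s$-increments directly to obtain \eqref{eq:formula_for_alpha} and \eqref{eq:formula_for_beta}. The only real difference is that you spell out the Markov property via the $G$-equivariance argument and transitivity on fibers, whereas the paper leaves this implicit by simply observing that the computed $\alpha(\boldx),\beta(\boldx)$ depend on $\boldx$ only through $(r^2,s)$ together with a two-dimensional marginal of $\bolda$ whose law is basis-independent; your version is the more careful of the two.
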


\begin{proof}
	Deferred to Appendix \ref{sec:state_space_calculations}.
\end{proof}

This theorem tells us that the state space sequence $\boldy^{(0)}, \boldy^{(1)}, \boldy^{(2)},\ldots$ suffices not just to track our progress, as discussed earlier in the section, but also to determine its own dynamics. We hence no longer need to concern ourselves with the original SGD sequence, and instead work with this object for the rest of the paper. Henceforth, we let $\braces{\mathcal{F}_k}_k$ denote the filtration defined by this sequence.

\subsection{Doob decomposition and continuous time limit as $d \to \infty$} \label{subsec:Doob_decomposition}

Let us try to understand the random mappings \eqref{eq:formula_for_alpha} and \eqref{eq:formula_for_beta} better. It is well-known that $(u,v)$ converges in distribution to a standard 2-dimensional Gaussian $\mathcal{N}(0,\boldI_2)$ as the ambient dimension $d$ tends to infinity. Therefore, the only essential dependence of the update formula \eqref{eq:state_space_update} on $d$ is through the overall $\frac{1}{d}$ scaling. If we think of the indices $k=1,2,\ldots$ as a time variable, rescale time by a factor of $\frac{1}{d}$, we can think of the sequence as being generated by an Euler discretization of a continuous-time process.

While we do not actually take this approach in our rigorous analysis, is it instructive to see what intuition this gives us. To do this, we do a Doob decomposition of the process $\braces*{\boldy^{(k)}}_{k=0}^\infty$, separating it into a drift term and a fluctuation term. Denote the drift terms using
\begin{equation*}
\bar{\alpha}(\boldy) \coloneqq \E\braces{\alpha(\boldy)} \quad\quad\textnormal{and}\quad\quad \bar{\beta} \coloneqq \E\braces{\beta(\boldy)}.
\end{equation*}
Letting $(\alpha_j,\beta_j)$ denote the random mapping used in the $j$-th step of the Markov chain, we have
\begin{equation} \label{eq:Doob_decomposition}
\boldy^{(k)} - \boldy^{(0)} = \underbrace{\frac{1}{d}\sum_{j=1}^k \paren*{\bar{\alpha}(\boldy^{(j-1)}), \bar{\beta}(\boldy^{(j-1)}) }}_{\textnormal{drift}} + \underbrace{\frac{1}{d}\sum_{j=1}^k \paren*{\alpha_j(\boldy^{(j-1)})-\bar{\alpha}(\boldy^{(j-1)}), \beta_j(\boldy^{(j-1)}) - \bar{\beta}(\boldy^{(j-1)}) }}_{\textnormal{fluctuation}}.
\end{equation}

We now try to do a heuristic comparison of the relative magnitudes of the two terms. Suppose $k$ is small enough so that we have $\boldy^{(j)} \approx \boldy^{(0)}$ for $j=1,\ldots,k$. Then the drift can be approximated by
\begin{equation*}
\frac{1}{d}\sum_{j=1}^k \paren*{\bar{\alpha}(\boldy^{(j-1)}), \bar{\beta}(\boldy^{(j-1)}) } \approx \frac{k}{d}\paren*{\bar{\alpha}(\boldy^{(0)}),\bar{\beta}(\boldy^{(0)})  }.
\end{equation*}
Meanwhile, we also have
\begin{equation*}
\frac{1}{d}\sum_{j=1}^k \paren*{\alpha_j(\boldy^{(j-1)})-\bar{\alpha}(\boldy^{(j-1)}), \beta_j(\boldy^{(j-1)}) - \bar{\beta}(\boldy^{(j-1)}) } \approx \frac{1}{d}\sum_{j=1}^k \paren*{\alpha_j(\boldy^{(0)})-\bar{\alpha}(\boldy^{(0)}), \beta_j(\boldy^{(0)}) - \bar{\beta}(\boldy^{(0)}) },
\end{equation*}
so that the fluctuation term has standard deviation approximately equal to
\begin{equation*}
\frac{\sqrt{k}}{d}\paren*{\Var\braces{\alpha(\boldy^{(0)})}^{1/2},\Var\braces{\beta(\boldy^{(0)})}^{1/2}}.
\end{equation*}

Therefore, for any \emph{fixed} $\boldy^{(0)}$, we see that the drift dominates the fluctuations as $d$ tends to infinity. This means that the continuous time limit of the process trajectory should be an \emph{integral curve} associated to the vector field on the state space $\mathcal{Y}$ defined by $\paren*{\bar{\alpha},\bar{\beta}}$. While this picture is incomplete, it offers a good first approximation, and the next step we take is to analyze the solutions to this first order ODE system.

\subsection{Drift in continuous time limit}

Miraculously, it is actually possible to derive a closed form formula for the vector field. We state it in the following lemma.

\begin{lemma}[Formula for drift] \label{lem:SGD_drift}
	With the notation $\bar{\alpha}(\boldy) \coloneqq \E\braces{\alpha(\boldy)}$ and $\bar{\beta} \coloneqq \E\braces{\beta(\boldy)}$, we have
	\begin{equation} \label{eq:expectation_of_alpha}
	\bar{\alpha}(r^2,s) = 1-r^2,
	\end{equation}
	\begin{equation} \label{eq:expectation_of_beta}
	\bar{\beta}(r^2,s) = 1 - s - \frac{1}{\pi}\paren*{2\theta-\sin(2\theta)}.
	\end{equation}
\end{lemma}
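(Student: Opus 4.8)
The plan is to compute both expectations directly from the closed forms \eqref{eq:formula_for_alpha} and \eqref{eq:formula_for_beta}, relying only on a few elementary properties of the pair $(u,v)$, the projection of a uniform point on $\sqrt{d}\,\mathbb{S}^{d-1}$ onto a fixed coordinate $2$-plane: $\E\braces{u^2} = \E\braces{v^2} = 1$ (the squared norm $d$ is shared evenly among coordinates in expectation), $\E\braces{uv} = 0$ (reflection symmetry $u \mapsto -u$ on the sphere), and — by rotational invariance in the $(\bolde_1,\bolde_2)$-plane — the polar representation $(u,v) = R(\cos\Phi,\sin\Phi)$ with $\Phi \sim \Unif[0,2\pi)$ independent of $R \geq 0$ and $\E\braces{R^2} = 2$. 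All of these are exact for every $d$, so the lemma will in fact hold verbatim for the finite-dimensional chain, not just in the limit.

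For $\bar{\alpha}$ the computation is immediate: taking expectations in \eqref{eq:formula_for_alpha} annihilates the $uv$ term and replaces each of $u^2, v^2$ by $1$, leaving $\bar{\alpha}(r^2,s) = (1 - r^2\cos^2\theta) - r^2\sin^2\theta = 1 - r^2$.

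For $\bar{\beta}$, taking expectations in \eqref{eq:formula_for_beta} likewise kills the $uv$ term and gives $\bar{\beta} = (1 - r\cos\theta) - 2\,\E\braces{\boldsymbol{1}_{\mathcal{A}}\,u^2}$; since $\cos\theta = s/r$ we have $r\cos\theta = s$, so it remains to show $\E\braces{\boldsymbol{1}_{\mathcal{A}}\,u^2} = \tfrac{1}{2\pi}\paren*{2\theta - \sin 2\theta}$. Using the polar representation, $\cos\theta\,u + \sin\theta\,v = R\cos(\Phi - \theta)$, so $\mathcal{A} = \braces{\sign\cos\Phi \neq \sign\cos(\Phi-\theta)}$ depends on $\Phi$ alone, while $u^2 = R^2\cos^2\Phi$; hence by independence,
\[
\E\braces{\boldsymbol{1}_{\mathcal{A}}\,u^2} \;=\; \E\braces{R^2}\cdot\frac{1}{2\pi}\int_{\mathcal{A}}\cos^2\phi\,d\phi \;=\; \frac{1}{\pi}\int_{\mathcal{A}}\cos^2\phi\,d\phi .
\]
On the circle, $\mathcal{A}$ is the symmetric difference of the two half-circles $\braces{\cos\phi > 0}$ and $\braces{\cos(\phi-\theta) > 0}$, a union of two opposite arcs of total length $2\theta$; since both signs flip under $\phi \mapsto \phi + \pi$ this set is $\pi$-periodic, and since $\cos^2$ is $\pi$-periodic the integral equals twice the integral over the single arc $\phi \in (\pi/2, \pi/2+\theta)$. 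Evaluating $\int \cos^2\phi\,d\phi = \tfrac{\phi}{2} + \tfrac{\sin 2\phi}{4}$ on this arc yields $\int_{\mathcal{A}}\cos^2\phi\,d\phi = 2\paren*{\tfrac{\theta}{2} - \tfrac{\sin 2\theta}{4}} = \theta - \tfrac12\sin 2\theta$, so $\E\braces{\boldsymbol{1}_{\mathcal{A}}u^2} = \tfrac{1}{2\pi}\paren*{2\theta - \sin 2\theta}$ and therefore $\bar{\beta} = 1 - s - \tfrac{1}{\pi}\paren*{2\theta - \sin 2\theta}$.

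The one place calling for care is the last step: one must verify that $\mathcal{A}$ is indeed two opposite arcs of total angular measure $2\theta$ for the full range $\theta = \arccos(s/r) \in [0,\pi]$, and that the $+\pi$ symmetry reduces the integral to the canonical arc irrespective of where $\theta$ lies; the extreme cases $\theta = 0$ (then $\mathcal{A}$ is null) and $\theta = \pi$ (then $\mathcal{A}$ is the whole circle, and both sides equal $1$) serve as quick sanity checks. An alternative that avoids the arc bookkeeping is to substitute $\boldsymbol{1}_{\mathcal{A}} = \tfrac12\paren*{1 - \sign(\cos\Phi)\,\sign(\cos(\Phi-\theta))}$ into the integral and expand, but the resulting computation is of the same flavor. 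Everything else is routine.
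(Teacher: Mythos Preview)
Your proof is correct and follows essentially the same route as the paper: use $\E u^2=\E v^2=1$, $\E uv=0$ for the linear pieces, then pass to polar coordinates $(u,v)=R(\cos\Phi,\sin\Phi)$ with $\Phi$ uniform and independent of $R$ to evaluate $\E\braces{\boldsymbol{1}_{\mathcal{A}}u^2}$ by integrating $\cos^2\phi$ over the arc $(\pi/2,\pi/2+\theta)$. If anything, your write-up is more careful than the paper's about identifying $\mathcal{A}$ as two opposite arcs and invoking the $\pi$-periodicity to reduce to a single one, and about checking the full range $\theta\in[0,\pi]$; the paper states the arc identification without elaboration.
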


\begin{proof}
	Deferred to Appendix \ref{sec:state_space_calculations}.
\end{proof}

Studying the vector field plot in Figure \ref{fig:vector_field}, it is obvious that $\boldy^* \coloneqq (1,1)$ and $-\boldy^* = (-1,1)$ are the only attracting fixed points, with basins of attraction the sets $\mathcal{Y}_+ \coloneqq \mathcal{Y}\cap \braces{s > 0}$ and $\mathcal{Y}_- \coloneqq \mathcal{Y}\cap \braces{s < 0}$ respectively. While this assures us that the system has the right qualitative long-term behavior, the visualization alone is not sufficient to give quantitative bounds on convergence rates. This analysis turns out to be somewhat tricky. Given an integral curve $\bar{\boldy}^{(t)} = (\bar{r}_t^2,\bar{s}_t)$ starting from an arbitrary initialization $\bar{\boldy}^{(0)}$, we will analyze its convergence rate by breaking it into three separate phases, as depicted in the figure.

\begin{figure}[h]
	\includegraphics[scale=0.6]{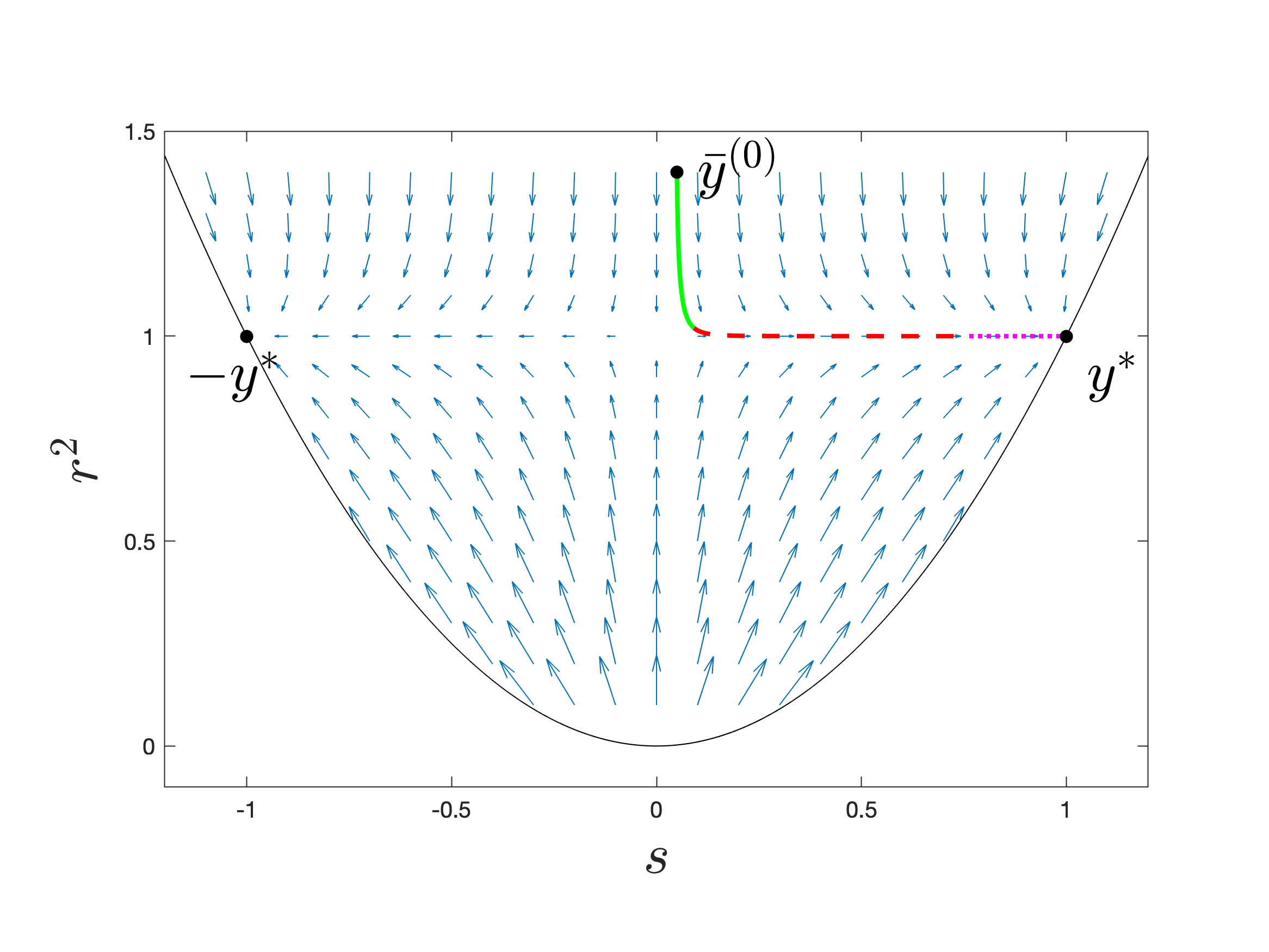}
	\centering
	\caption{Vector field defined by $\paren*{\bar{\alpha},\bar{\beta}}$. For the purposes of symmetry, we have chosen to plot $r^2$ along the vertical axis and $s$ along the horizontal axis. An integral curve from an initialization $\boldy^{(0)}$ is plotted. Our analysis for the convergence rate will be broken into three phases. Phase 1 concerns the portion of the curve colored in green, Phase 2 concerns that colored in red, and Phase 3 concerns the last portion colored in magenta.}
	\label{fig:vector_field}
\end{figure}

To demarcate the phases, we define two stopping times as follows. We let $\bar{\tau}_1$ be the earliest time $t$ for which $\abs*{\bar{r}_t^2-1} \leq 0.1$, and we let $\bar{\tau}_2$ be the earliest time $t$ for which the Lyapunov function $\Psi$ defined in \eqref{eq:definition_for_Psi} satisfies $\Psi(\bar{\boldy}^{(t)}) \leq 0.2$. Phase 1 is then the portion of the curve traversed between time $0$ and time $\bar{\tau}_1$, Phase 2 the portion traversed between time $\bar{\tau}_1$ and time $\bar{\tau}_2$, with Phase 3 the remainder of the curve traversed after $\bar{\tau}_2$.

Let us compute the duration of Phase 1, which is the same as bounding $\bar{\tau}_1$. To do this, we solve \eqref{eq:expectation_of_alpha} to get $\bar{r}_t^2-1 = e^{-t}(\bar{r}_0^2 -1)$, so that $\bar{\tau}_1 \lesssim \log(\bar{r}_0^2)\vee 1$. The second phase is trickier due to the unwieldiness of \eqref{eq:expectation_of_beta}. As such, we compute more amenable lower and upper bounds for the expression as follows.

\begin{lemma}[Bounds for horizontal drift] \label{lem:lower_bound_for_drift}
	There is a constant $\bar{b}_{max}$ such that we have
	\begin{equation*}
	\sup_{r \geq 1/2} \frac{\bar{\beta}(r^2,s)}{s} \leq \bar{b}_{max}.
	\end{equation*}
	Furthermore, for any $\epsilon > 0$ small enough, there is some $\eta = \eta(\epsilon) > 0$ and some constant $\bar{b}_{min} = \bar{b}_{min}(\epsilon) > 0$ such that
	\begin{equation*}
	\inf_{(r^2,s)\in \mathcal{D}}\frac{\bar{\beta}(r^2,s)}{s} \geq \bar{b}_{min},
	\end{equation*}
	where $\mathcal{D} \coloneqq \braces{(r^2,s) \in \mathcal{Y} ~\colon~ \abs{s} \leq 1-\epsilon, \abs{r^2-1} \leq \eta}$.
\end{lemma}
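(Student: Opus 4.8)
The plan is to work directly with the closed-form expression $\bar\beta(r^2,s) = 1 - s - \frac{1}{\pi}(2\theta - \sin 2\theta)$ from Lemma~\ref{lem:SGD_drift}, recalling that $\theta = \arccos(s/r)$. For the upper bound, I would first observe that $s = r\cos\theta$, so that $\bar\beta = 1 - r\cos\theta - \frac{1}{\pi}(2\theta - \sin 2\theta)$ and the quantity to bound is $\bar\beta/s = (1 - r\cos\theta)/s - \frac{1}{\pi}(2\theta - \sin 2\theta)/(r\cos\theta)$. The natural approach is to treat the cases $s > 0$ and $s < 0$ separately (the map is symmetric under $s \mapsto -s$, $\theta \mapsto \pi - \theta$, so it suffices to handle one sign), and to further split according to whether $\theta$ is close to $0$ (equivalently $s$ close to $r$) or bounded away. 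When $\theta$ is bounded away from $0$ and $\pi$, the denominator $s$ is comparable to $r \geq 1/2$, so the ratio is bounded by a constant times the magnitude of the numerator, which is itself $O(1 + r)$; but that alone doesn't give a uniform bound since $r$ can be large — so I'd need to use that the dominant term $-s$ in the numerator gives $\bar\beta/s \to -1 + O(1/s)$ as $r \to \infty$, hence the supremum over large $r$ is negative. The delicate regime is $\theta \to 0^+$ with $r$ near a value making $s$ small — but $\theta \to 0$ forces $s/r \to 1$, so $s$ small forces $r$ small, contradicting $r \geq 1/2$ unless $s$ is bounded below; a Taylor expansion $2\theta - \sin 2\theta = \frac{4}{3}\theta^3 + O(\theta^5)$ near $\theta = 0$ shows that term is negligible and $\bar\beta/s \approx (1-r)/s \cdot$(something), which one checks is bounded. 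Assembling these cases yields a finite universal constant $\bar b_{\max}$.

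For the lower bound, I would restrict attention to the compact region $\mathcal{D} = \{(r^2,s) : |s| \leq 1-\epsilon,\ |r^2 - 1| \leq \eta\}$ and argue by a compactness-plus-strict-positivity argument, being careful about the boundary where the naive ratio could degenerate. On $\mathcal{D}$ we have $r^2 \in [1-\eta, 1+\eta]$, so $r$ is within $O(\eta)$ of $1$; thus $s = r\cos\theta$ is within $O(\eta)$ of $\cos\theta$, and $|s| \leq 1 - \epsilon$ forces $\cos\theta$ (hence $|\theta|$, resp. $|\pi - \theta|$) to be bounded away from the endpoints by a margin depending on $\epsilon$ once $\eta$ is chosen small relative to $\epsilon$. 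By symmetry take $s > 0$. Then I want to show $\bar\beta(r^2,s)/s \geq \bar b_{\min} > 0$, i.e., $\bar\beta(r^2,s) \geq \bar b_{\min}\, s$. Writing $g(\theta) := 1 - \cos\theta - \frac{1}{\pi}(2\theta - \sin 2\theta)$, one has $\bar\beta(r^2,s) = g(\theta) + (1-r)\cos\theta$, and I'd verify that $g(\theta) > 0$ for $\theta \in (0,\pi)$ — this is the key one-variable inequality, provable by noting $g(0) = 0$, $g(\pi) = 2 - 2 = 0$, and checking $g$ is strictly positive in between via its derivative $g'(\theta) = \sin\theta - \frac{2}{\pi}(1 - \cos 2\theta) = \sin\theta - \frac{4}{\pi}\sin^2\theta = \sin\theta(1 - \frac{4}{\pi}\sin\theta)$, which controls the single interior critical point. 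Moreover $g(\theta)/\sin\theta$ is bounded below by a positive constant on any interval $[\theta_1, \theta_2] \subset (0,\pi)$, and $s = r\cos\theta$ together with $|1-r| = O(\eta)$ lets me absorb the $(1-r)\cos\theta$ correction: choosing $\eta$ small enough that $|1-r|$ is at most half of $\bar b_{\min}^{(0)}$ (the bound for $g(\theta)/s$ on the relevant $\theta$-range), the correction term cannot kill positivity. This produces the desired $\bar b_{\min}(\epsilon) > 0$ and the accompanying $\eta(\epsilon)$.

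The main obstacle I anticipate is not the upper bound — which is a somewhat routine case analysis exploiting that the $-s$ term dominates for large $r$ and the cubic smallness of $2\theta - \sin 2\theta$ near $\theta = 0$ — but rather pinning down the interaction between the three parameters $\epsilon$, $\eta$, and $\bar b_{\min}$ in the lower bound, and in particular establishing the clean scalar inequality $g(\theta) > 0$ on $(0,\pi)$ with a quantitative lower bound on $g(\theta)/\sin\theta$ away from the endpoints. The endpoints $\theta = 0, \pi$ are exactly where $g$ vanishes, so the reason we must cap $|s| \leq 1 - \epsilon$ is precisely to stay a definite distance from them; getting the bookkeeping right so that $\eta$ shrinks with $\epsilon$ at the correct rate (roughly $\eta \lesssim \epsilon$, since the region $|r^2 - 1|\le\eta$ perturbs $\cos\theta$ by $O(\eta)$ and we need this dominated by the $\epsilon$-margin) is the fiddly part. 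I would also double-check the claimed symmetry $(s,\theta) \leftrightarrow (-s, \pi-\theta)$ leaves $2\theta - \sin 2\theta$ transforming correctly — indeed $2(\pi-\theta) - \sin(2\pi - 2\theta) = 2\pi - 2\theta + \sin 2\theta$, and combined with $1 - (-s) = 1 + s$ one verifies $\bar\beta(r^2,-s) = -\bar\beta(r^2,s) + 2 - 2 = -\bar\beta(r^2,s)$ after using $2 - 2\pi/\pi$ cancellation, so the ratio $\bar\beta/s$ is genuinely even in $s$ and the reduction to $s>0$ is legitimate.
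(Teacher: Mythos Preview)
Your overall strategy is sound and would eventually work, but it takes a considerably more laborious route than the paper and contains two slips that need correction.

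The paper's proof rests on a single structural observation you do not use: for each fixed $r>0$, the map $s\mapsto\bar\beta(r^2,s)$ is \emph{concave} on $[0,r]$, since $\partial_s\bar\beta(r^2,s)=\tfrac{4}{\pi}\tfrac{\sqrt{r^2-s^2}}{r^2}-1$ is decreasing in $s$. Combined with $\bar\beta(r^2,0)=0$, this gives both conclusions in one stroke. For the upper bound, concavity implies the graph lies below the tangent at the origin, so $\bar\beta(r^2,s)/s\le\partial_s\bar\beta|_{s=0}=\tfrac{4}{\pi r}-1\le\tfrac{8}{\pi}-1$ for $r\ge 1/2$; no case analysis, no Taylor expansions, no separate treatment of large $r$. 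For the lower bound, concavity gives the chord inequality $\bar\beta(r^2,s)/s\ge\bar\beta(r^2,1-\epsilon)/(1-\epsilon)$ for all $0<s\le 1-\epsilon$, and then one only needs continuity of $r\mapsto\bar\beta(r^2,1-\epsilon)$ near $r=1$ together with $\bar\beta(1,1-\epsilon)>0$.

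Regarding your argument: the decomposition $\bar\beta=g(\theta)+(1-r)\cos\theta$ is correct, but your claim that $g(\theta)>0$ on $(0,\pi)$ is false, since $g(\pi/2)=1-0-\tfrac{1}{\pi}(\pi-0)=0$. This is harmless because after reducing to $s>0$ you are really on $(0,\pi/2)$, where $g>0$ does hold; but it should be stated correctly. More importantly, the ratio you must bound below is $\bar\beta/s=g(\theta)/(r\cos\theta)+(1-r)/r$, so the relevant quantity is $g(\theta)/\cos\theta$, not $g(\theta)/\sin\theta$ as you wrote. Since $g(\theta)\sim\theta^2/2$ as $\theta\to 0^+$ while $\cos\theta\to 1$, the ratio $g(\theta)/\cos\theta$ tends to $0$ there, so you genuinely need $\theta$ bounded away from $0$; this is precisely what $|s|\le 1-\epsilon$ with $r\approx 1$ buys you, and after that your compactness-plus-perturbation argument goes through. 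The concavity route avoids all of this bookkeeping.
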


\begin{proof}
	Deferred to Appendix \ref{sec:drift_lemmas}.
\end{proof}

One can show that $\eta(0.1) \geq 0.1$, and therefore, we have the bound $\frac{d\bar{s}_t}{dt} \geq \bar{b}_{min}\bar{s}$ for $\bar{\tau}_1 \leq t \leq \bar{\tau}_2$. Solving this gives $\bar{s}_t = \bar{s}_{\bar{\tau}_1}e^{\bar{b}_{min}(t-\bar{\tau}_1)}$, and we have the estimate $\bar{\tau}_2-\bar{\tau}_1 \lesssim \log(1/\abs{\bar{s}_{\bar{\tau}_1}}) \vee 1$.

Finally, Phase 3 corresponds to portion of the integral curve that lies within the ``basin of convexity'' around $\boldy^*$. Indeed we compute:

\begin{align*}
\frac{d}{dt}\Psi(\bar{\boldy}^{(t)}) & = \partial_{r^2}\Psi\cdot\bar{\alpha} + \partial_s\Psi\cdot\bar{\beta} \\
& = 1-\bar{r}_t^2 - 2\paren*{1 - \bar{s}_t - \frac{1}{\pi}\paren*{2\bar{\theta}_t-\sin(2\bar{\theta}_t)}} \\
& \lesssim - \paren*{\bar{r}_t^2-2\bar{s}_t + 1} \\
& = -\Psi(\bar{\boldy}^{(t)}).
\end{align*}
Here, the inequality in the third line comes from a relative bound on the error term $\frac{1}{\pi}\paren*{2\bar{\theta}_t-\sin(2\bar{\theta}_t)}$ provided by the geometry of the basin region. As such, we also get linear convergence $\Psi(\bar{\boldy}^{(t)}) \leq \Psi(\bar{\boldy}^{(\bar{\tau}_2)})e^{-c(t-\bar{\tau}_2)}$.

Putting everything together, we see that for any $\epsilon > 0$, if we would like $\Psi(\bar{\boldy}_t) \leq \epsilon$, it suffices for

\begin{equation} \label{eq:cts_time_convergence}
t \gtrsim \log(\bar{r}_0^2)\vee 1 + \log(1/\abs{\bar{s}_{\bar{\tau}_1}})\vee 1 + \log(1/\epsilon).
\end{equation}

\subsection{Discretizing the drift}

We now examine what this means for the Markov chain $(r^2_k,s_k) = \boldy^{(k)} = \boldy^{(k,d)}$, where for clarity, we have made the dependence on $d$ in \eqref{eq:state_space_update} explicit as a component of the indexing. We have argued that the integral curve $\bar{\boldy}^{(t)}$ is the limit of the trajectories $\braces{ t \mapsto \boldy^{(\floor{t/d},d)}}$ as $d$ tends to infinity. The number of iterations $k$ needed for convergence of $\boldy^{(k)}$ to $\pm\boldy^*$ is thus the quantity \eqref{eq:cts_time_convergence} scaled by a factor of $d$.

There may also some additional dependence on $d$ implicit in the definitions of $\bar{r}_0^2$ and $\bar{s}_{\bar{\tau}_1}$. The first has to do with the conditioning of the problem, and it is reasonable to assume that $\bar{r}_0^2 \lesssim 1$. On the other hand, if we take a random initialization, measure concentration inflicts us with the curse of dimensionality shown in \eqref{eq:curse_of_dimensionality}, so that $\log(1/\abs{\bar{s}_{\bar{\tau}_1}}) \gtrsim \log d$. The final time complexity estimate is therefore $O(d\log (d/\epsilon))$.

\subsection{Accounting for fluctuations}

Comparing the time complexity estimate for the drift process given in the last section with the guarantee given by Theorem \ref{thm:main_theorem} suggests to us that the fluctuations do not ultimately affect the rate of convergence for the random process $\boldy^{(k)}$. This is true, and indeed, we may make our heuristic arguments rigorous and show that the fluctuation ``error term'' is dominated by the drift over the discretized versions of Phase 1 and Phase 3.

The situation is more complicated for Phase 2. While the length of the phase for the random process remains the same, the underlying \emph{dynamics} of the random process is very different from that of the deterministic drift process. This is because when $\abs{s_k} \lesssim 1/\sqrt{d}$, as would be the case under random initialization, the magnitude of the fluctuations are of the same order as the drift, thereby invalidating the approximation argument.

There is no hope of upper bounding the fluctuations of $\boldy^{(k)}$ around iteration $k \approx d\bar{\tau}_1$, so we change course and instead aim at bounding the cumulative variance of the fluctuations \emph{from below}. We seek a small ball probability bound for the horizontal marginal $s_{d\bar{\tau}_1 + k'}$ for $k' \asymp d\log d$. More precisely, we would like
\begin{equation} \label{eq:small_ball_prob}
\lim_{d\to\infty} \P\braces*{\abs{s_{d\bar{\tau}_1+cd\log d}} \geq \eta } \geq  \delta 
\end{equation}
for some universal constants $c$, $\eta$ and $\delta$. Once $s_k$ is of a constant distance away from zero, we may return to using the approximation argument and treat the fluctuations as an error term.

\subsection{Small ball probability bound through diffusion approximation}

In order to obtain a small ball probability bound, we will need a better understanding of the distribution of the fluctuation term in \eqref{eq:Doob_decomposition}, rather than merely control over its tail probabilities. Fortunately, this term constitutes a martingale difference process, and the martingale Central Limit Theorem tells us that it converges to Brownian motion. In other words, we should expect the random process $\boldy^{(\floor{t/d},d)}$ to be well-approximated by a \emph{diffusion} process $\tilde{\boldy}^{(t)}$ satisfying the stochastic differential equation
\begin{equation*}
d\tilde{\boldy}^{(t)} = \paren*{\bar{\alpha}(\tilde{\boldy}^{(t)}),\bar{\beta}(\tilde{\boldy}^{(t)})} dt + \frac{\boldSigma(\tilde{\boldy}^{(t)})}{\sqrt{d}} d\textbf{B}^{(t)}.
\end{equation*}
Here, $d\textbf{B}^{(t)}$ is standard, two-dimensional Brownian motion, while for each $\boldy$, $\boldSigma(\boldy)$ is a positive semidefinite matrix reflecting the fluctuation covariance at state $\boldy$.

It is not easy to compute a closed form solution to this SDE. To perform a heuristic analysis, we instead solve a simplified form of the equation for the $s$-marginal:
\begin{equation*}
d\tilde{s}_t = b \tilde{s}_t dt + \frac{\sigma}{\sqrt{d}}dB_t.
\end{equation*}
This is a good approximation if we assume $\abs{\tilde{r}_t^2-1}, \abs{\tilde{s}_t} \ll 1$, so that $\boldSigma(\boldy)$ is approximately constant in $\boldy$, while $\bar{\beta}(r^2,s)$ is close to its linear approximation in $s$. Solving this equation gives
\begin{equation*}
\tilde{s}_t = \tilde{s}_0 e^{bt} + \frac{e^{bt}}{\sqrt{d}}\int_0^t e^{-b\tau} dB_\tau \sim \mathcal{N}\paren*{\tilde{s}_0 e^{bt},\frac{e^{2bt} - 1}{bd}}.
\end{equation*}
As such, no matter the value of $\tilde{s}_0$, we have that $\tilde{s}_{c\log d}$ is a Gaussian with variance bounded below by a constant, which implies the desired estimate \eqref{eq:small_ball_prob}.

\subsection{Outline for rest of paper}

In this section so far, we have sketched a heuristic proof for SGD convergence, where the main idea was to consider the continuous time limit of the state space Markov chain, and solve the resulting differential equation or stochastic differential equation. In our rigorous analysis, we will not adopt this approach, and instead solve finite difference equations coming from the Markov chain while obtaining non-asymptotic control over the fluctuation process.

For the rest of this paper, we work with a fixed initialization $\boldy^{(0)}$, and let $\boldy^{(1)}, \boldy^{(1)}, \boldy^{(2)},\ldots$ be the sequence of iterates generated by repeated applying the Markov kernel \eqref{eq:state_space_update}. For each $k$, we will use $r_k^2$ and $s_k$ to denote the coordinates of $\boldy^{(k)}$. We will continue to do a multi-phase analysis of convergence, and as such, define analogues of the stopping times $\bar{\tau}_1$ and $\bar{\tau}_2$. We set
\begin{equation} \label{eq:tau_1}
\tau_1 \coloneqq \min\braces{k \colon \abs{r_k^2-1} \leq \log d/\sqrt{d}}
\end{equation}
\begin{equation} \label{eq:tau_2a}
\tau_{2a} \coloneqq \min\braces{k \geq \tau_1 \colon \abs{s_k} \geq \gamma_1}
\end{equation}
\begin{equation} \label{eq:tau_2b}
\tau_{2b} \coloneqq \min\braces{k \geq \tau_{2a} \colon \Psi(\boldy^{(k)})} \leq \gamma_2.
\end{equation}
Here, $\gamma_1$ and $\gamma_2$ are constants to be determined later. 

We shall set $T = \tau_{2b}$ in Theorem \ref{thm:main_theorem}. As such, we wish to prove that with high probability, $\tau_{2b} \lesssim d\log\paren*{d\norm{\boldx^{(0)}}\vee 1}$, and that linear convergence in expectation occurs after $\tau_{2b}$. The second statement follows easily from the theory we established in \cite{Tan2017}, while the first is, as mentioned before, the main result of this paper. Our strategy is to bound $\tau_{2b}$ by bounding $\tau_1$, $\tau_{2a}-\tau_1$, and $\tau_{2b}-\tau_{2a}$ separately.

In Section \ref{sec:uniform_control_of_r_k}, we bound $\tau_1$, and also establish uniform control over $\abs{r_k^2-1}$, which is needed for the rest of the proof. In Sections \ref{sec:phase_2a} and \ref{sec:phase2a_pt2}, we bound $\tau_{2a}-\tau_1$. This is the most difficult part of the proof, and relies on developing a nuanced notion of stochastic dominance with which we compare the sequence $\braces{s_k}_k$ with a carefully constructed sequence $\braces{\hat{s}_k}_k$. We then apply a small ball probability argument to the latter. In Section \ref{sec:phase_2b}, we bound $\tau_{2b} - \tau_{2a}$ by approximating the sequence with that obtained by removing the fluctuations. Finally, we will complete the proof of Theorem \ref{thm:main_theorem} in Section \ref{sec:proof_of_main_thm}, before concluding and discussing the broader implications of the result in Section \ref{sec:conclusion}.

\section{Uniform control over $r_k$} \label{sec:uniform_control_of_r_k}

In this section, we will show that the sequence of squared norms, $\braces{r_k^2}_k$, quickly converges to a small interval of width $O(\log d/\sqrt{d})$ around the value 1, and thereafter remains within this interval for at least $C d \log d$ iterations. In subsequent sections, we will show that this is sufficient time for the $\braces{s_k}_k$ sequence also to converge. The reason why such uniform control is necessary is because the formula for the horizontal update \eqref{eq:formula_for_beta} taken from a point $\boldy$ depends on $r(\boldy)$. By showing that $\braces{r_k^2}_k$ concentrates uniformly, we thereby also obtain control over the drift and fluctuations of $\braces{s_k}_k$, which enables us to do an essentially univariate analysis of this latter sequence.

\begin{lemma}[Bound for duration of Phase 1] \label{lem:bound_for_duration_of_phase1}
	There exists universal constants $C_1$ and $C_2$ such that $\tau_1 \leq C_2d\cdot(\log d + \log\abs{r_0^2-1})$ with probability at least $1-C_2/\log d$.
\end{lemma}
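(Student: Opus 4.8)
The plan is to track the scalar deviation $D_k := r_k^2 - 1$ and to show that it contracts geometrically, in mean square, at rate $1-\tfrac1d$ until it enters the window $[-\log d/\sqrt d,\ \log d/\sqrt d]$; the number of steps needed is then read off directly. By Theorem~\ref{thm:state_space_MC} the norm update is $r_{k+1}^2 = r_k^2 + \tfrac1d\,\alpha(\boldy^{(k)})$ with the randomizing pair $(u,v)$ independent of $\mathcal{F}_k$, so $\E[\alpha(\boldy^{(k)})\mid\mathcal{F}_k] = \bar{\alpha}(\boldy^{(k)}) = 1 - r_k^2$ by Lemma~\ref{lem:SGD_drift}. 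Hence
\[
\E[D_{k+1}\mid\mathcal{F}_k] = D_k + \tfrac1d\,(1 - r_k^2) = \bigl(1-\tfrac1d\bigr)D_k ,
\]
so $\{D_k\}$ is mean-contracting (equivalently, $(1-\tfrac1d)^{-k}D_k$ is a martingale).

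The second ingredient is a conditional second-moment bound. Writing $\xi_k := D_{k+1} - (1-\tfrac1d)D_k = \tfrac1d\bigl(\alpha(\boldy^{(k)}) - \bar{\alpha}(\boldy^{(k)})\bigr)$, so that $\E[\xi_k\mid\mathcal{F}_k]=0$, I would note from \eqref{eq:formula_for_alpha} that $\alpha$ is the fixed quadratic form $c_1 u^2 + c_2 v^2 + c_3 uv$ whose coefficients obey $|c_i| \lesssim r_k^2 = 1 + D_k \lesssim 1 + |D_k|$, while $u^2$, $v^2$, $uv$ have variances bounded by a universal constant (a standard fact about $2$-dimensional marginals of $\Unif(\sqrt d\,\mathbb{S}^{d-1})$; cf.\ Appendix~\ref{sec:properties_of_subexponentials}). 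Consequently $\Var\bigl(\alpha(\boldy^{(k)})\mid\mathcal{F}_k\bigr) \lesssim (1+|D_k|)^2 \lesssim 1 + D_k^2$, hence $\E[\xi_k^2\mid\mathcal{F}_k] \le C(1+D_k^2)/d^2$, and therefore
\[
\E[D_{k+1}^2\mid\mathcal{F}_k] = \bigl(1-\tfrac1d\bigr)^2 D_k^2 + \E[\xi_k^2\mid\mathcal{F}_k] \le \Bigl(\bigl(1-\tfrac1d\bigr)^2 + \tfrac{C}{d^2}\Bigr)D_k^2 + \tfrac{C}{d^2} \le \bigl(1-\tfrac1d\bigr)D_k^2 + \tfrac{C}{d^2},
\]
the last step valid once $d$ exceeds a universal constant. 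Taking expectations and iterating gives $\E[D_k^2] \le (1-\tfrac1d)^k D_0^2 + C/d$ for every $k$.

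To finish, set $k^{\ast} := \bigl\lceil C_2\, d\,(\log d + \log(|D_0|\vee 1))\bigr\rceil$. For $C_2$ a large enough universal constant, $(1-\tfrac1d)^{k^{\ast}} D_0^2 \le e^{-k^{\ast}/d} D_0^2 \le 1/d$, so $\E[D_{k^{\ast}}^2] \le C'/d$, and Chebyshev's inequality yields
\[
\P\bigl[\,|D_{k^{\ast}}| > \log d/\sqrt d\,\bigr] \le \frac{d\,\E[D_{k^{\ast}}^2]}{(\log d)^2} \le \frac{C'}{(\log d)^2} \le \frac{C'}{\log d}.
\]
On the complementary event one has $\tau_1 \le k^{\ast}$, which is exactly the claimed bound after renaming constants (and noting that when $|D_0| < 1$ either $\tau_1 = 0$ already, or $\log d$ dominates and the target is still reached within $O(d\log d)$ steps). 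The one genuinely delicate point is the variance estimate: one must check that the $r_k^2$-dependence of the coefficients in \eqref{eq:formula_for_alpha} yields a bound of the precise form $C(1+D_k^2)/d^2$, so that it is absorbed by the geometric recursion with no stopping-time truncation; if the fluctuations grew faster than $|D_k|$ one would instead run the argument up to the first time $|D_k|$ exceeds, say, $2|D_0|+1$ and separately argue that this time exceeds $k^{\ast}$ with high probability.
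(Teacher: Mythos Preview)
Your proof is correct and follows essentially the same approach as the paper: establish the conditional contraction $\E[D_{k+1}\mid\mathcal{F}_k]=(1-\tfrac1d)D_k$, derive a recursive moment bound using $\Var(\alpha\mid\mathcal{F}_k)\lesssim (1+D_k^2)/d^2$, iterate to get $\E[D_k^2]\lesssim (1-\tfrac1d)^k D_0^2 + C/d$, and finish with Chebyshev. The only cosmetic difference is that the paper tracks the mean and variance of $D_k$ separately (using the law of total variance and an intermediate bound on $\E[r_k^4]$), whereas you track the second moment $\E[D_k^2]$ directly and absorb the $D_k^2$ part of the conditional variance into the geometric recursion; your bookkeeping is marginally cleaner but the argument is the same.
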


\begin{proof}
We first compute $\E\braces{r_k^2-1}$. Using \eqref{eq:state_space_update} and \eqref{eq:expectation_of_alpha}, we have

\begin{align*}
\E\braces{r_{k+1}^2-1~\vline~ \mathcal{F}_k} & = r_k^2-1 + \frac{1}{d}\E\braces{\alpha_{k+1}(\boldy^{(k)})~\vline~ \mathcal{F}_k} \\
& = \paren*{1-\frac{1}{d}}\paren*{r_k^2-1}.
\end{align*}

Iterating this gives

\begin{equation} \label{eq:contraction_in_exp_for_r_k}
\E\braces{r_k^2-1} = \paren*{1-\frac{1}{d}}^k\paren*{r_0^2-1}.
\end{equation}
Hence, whenever
\begin{equation} \label{eq:num_iterates_for_norm_convergence}
k \gtrsim \frac{\log d +\log(\abs{r_0^2-1})}{\log(1-1/d)} \approx d \paren*{\log d + \log\abs{r_0^2-1}}
\end{equation}
we have $\abs{\E\braces{r_k^2-1}} \leq \frac{1}{\sqrt{d}}$.

Next, we may obtain a recursive bound for the variance of the iterates using the law of total variance. We have

\begin{align} \label{eq:recursive_bound_for_r_k_var}
\Var\braces*{r_{k+1}^2-1} & = \E\braces{\Var\braces{ r_{k+1}^2-1 ~\vline~ \mathcal{F}_k}} + \Var\braces*{\E\braces{r_{k+1}^2-1 ~\vline~ \mathcal{F}_k}} \nonumber \\
& = \E\braces{\Var\braces{\alpha_{k+1}(\boldy^{(k)}) ~\vline~ \mathcal{F}_k}} + \Var\braces*{\paren*{1-1/d}\paren*{r_k^2-1}} \nonumber \\
& \leq \frac{C\E\braces{r_k^4}}{d^2} + \paren*{1-\frac{1}{d}}^2\Var\braces*{r_k^2-1}.
\end{align}

We may expand
\begin{equation*}
\E\braces{r_k^4} = \Var\braces{r_k^2 - 1} + \E\braces{r_k^2}^2,
\end{equation*}
and apply \eqref{eq:contraction_in_exp_for_r_k} to get
\begin{equation*}
\E\braces{r_k^2}^2 = \paren*{\E\braces{r_k^2-1}+1}^2 \lesssim \paren*{1-\frac{1}{d}}^k (r_0^2-1)^2 + 1.
\end{equation*}
Plugging these back into \eqref{eq:recursive_bound_for_r_k_var}, we get
\begin{equation*}
\Var\braces*{r_{k+1}^2-1} \leq \paren*{1-\frac{1}{2d}}\Var\braces*{r_k^2-1} + \frac{C}{d^2}\paren*{1-\frac{1}{d}}^k (r_0^2-1)^2 + \frac{C}{d^2},
\end{equation*}
and solving this recursion gives
\begin{equation*}
\Var\braces*{r_k^2-1} \leq \frac{C}{d^2} \cdot \paren*{k\cdot\paren*{1-1/2d}^k(r_0^2-1)^2 + \frac{1 - (1-1/2d)^{2k+2}}{1-(1-1/2d)^2}}.
\end{equation*}

It easy to check that this quantity is bounded by $C/d$ whenever \eqref{eq:num_iterates_for_norm_convergence} holds. In this case, we may apply Chebyshev's inequality to conclude that

\begin{align*}
r_k^2-1 \leq \E\braces{r_k^2-1} + \frac{\Var\braces{r_k^2-1}^{1/2}}{\delta} \leq \frac{1}{\sqrt{d}} + \frac{C}{\delta\sqrt{d}}
\end{align*}
with probability at least $1-\delta$. We may also similarly bound $r_k^2-1$ from below. Choosing $\delta = C/\log d$ gives us the probability bound we want.
\end{proof}

After $r_k$ has contracted to a value close to 1, we need to show that it remains close to 1 throughout the time scale needed for the algorithm to converge. Although it is clear from the formula \eqref{eq:formula_for_alpha} that the increments are subexponential, a naive union bound is not tight enough for our purposes. To overcome this, we make use of a maximal Bernstein inequality.

\begin{lemma}[Maximal Bernstein for subexponential martingale differences] \label{lem:uniform_Bernstein}
	Let $X_1,X_2,\ldots,X_M$ be a martingale difference sequence that is adapted to a filtration $\braces{\mathcal{G}_t}$. Suppose there is a constant $K > 0$ such that the following pointwise inequality holds almost surely for any time $t$ and $0 < \lambda \leq 1/2K$:
	\begin{equation*}
	\E\braces{e^{\lambda X_{t+1}} ~\vline~ \mathcal{G}_t} \leq e^{\lambda^2K^2}.
	\end{equation*}
	Denote $S_t = \sum_{i=1}^t X_i$, for $t = 1,2\ldots$. Then for all $\epsilon > 0$, we have the uniform tail bound
	\begin{equation} \label{eq:uniform_Bernstein}
	\P\braces{\exists t \leq M ~\colon~\abs{S_t} \geq \epsilon } \leq 2\exp\paren*{-\frac{1}{4}\braces*{\frac{\epsilon^2}{MK^2}\wedge\frac{\epsilon}{K}}}.
	\end{equation}
\end{lemma}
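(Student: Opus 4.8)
The plan is to combine the standard exponential-moment (Chernoff) argument for the ordinary Bernstein inequality with Doob's maximal inequality for the supermartingale that the exponential moment bound generates. First I would fix $\lambda \in (0, 1/2K]$ and consider the process $Z_t := \exp(\lambda S_t - t\lambda^2 K^2)$. Using the hypothesis $\E\{e^{\lambda X_{t+1}} \mid \mathcal{G}_t\} \leq e^{\lambda^2 K^2}$ and the tower property, one checks that
\[
\E\{Z_{t+1} \mid \mathcal{G}_t\} = Z_t \cdot e^{-\lambda^2 K^2}\,\E\{e^{\lambda X_{t+1}} \mid \mathcal{G}_t\} \leq Z_t,
\]
so $\{Z_t\}$ is a nonnegative supermartingale with $\E Z_0 = 1$ (or $\E Z_1 \le 1$, depending on indexing). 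By Doob's maximal inequality for nonnegative supermartingales,
\[
\P\Bigl\{\max_{t \le M} Z_t \geq a\Bigr\} \leq \frac{1}{a}
\]
for any $a > 0$. Since $M\lambda^2 K^2 \ge t\lambda^2 K^2$ for $t \le M$, the event $\{S_t \geq \epsilon\}$ is contained in $\{Z_t \geq e^{\lambda\epsilon - M\lambda^2 K^2}\}$, so taking $a = e^{\lambda\epsilon - M\lambda^2K^2}$ gives
\[
\P\Bigl\{\exists t \le M : S_t \geq \epsilon\Bigr\} \leq \exp\bigl(-\lambda\epsilon + M\lambda^2 K^2\bigr).
\]

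Next I would optimize over $\lambda \in (0, 1/2K]$. The unconstrained minimizer of $M\lambda^2 K^2 - \lambda\epsilon$ is $\lambda^* = \epsilon/(2MK^2)$, yielding exponent $-\epsilon^2/(4MK^2)$. If $\lambda^* \le 1/2K$, i.e. $\epsilon \le MK$, this is admissible and we get the bound $\exp(-\epsilon^2/(4MK^2))$. If instead $\epsilon > MK$, we take the boundary value $\lambda = 1/2K$, giving exponent $-\epsilon/(2K) + M/4 \le -\epsilon/(2K) + \epsilon/(4K) = -\epsilon/(4K)$. In both regimes the exponent is at most $-\tfrac14\bigl(\tfrac{\epsilon^2}{MK^2} \wedge \tfrac{\epsilon}{K}\bigr)$, which is exactly the one-sided version of \eqref{eq:uniform_Bernstein}. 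Applying the same argument to the martingale difference sequence $\{-X_i\}$ (which satisfies the same hypothesis, since the bound on $\E\{e^{\lambda X_{t+1}} \mid \mathcal{G}_t\}$ is symmetric in sign only if we assume it for both $\pm X$ — more on this below) controls $\P\{\exists t \le M : S_t \le -\epsilon\}$, and a union bound over the two one-sided events produces the factor $2$.

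The one genuine subtlety — and the step I would be most careful about — is the sign issue: the stated hypothesis is a bound on $\E\{e^{\lambda X_{t+1}} \mid \mathcal{G}_t\}$ only, not on $\E\{e^{-\lambda X_{t+1}} \mid \mathcal{G}_t\}$. To get the two-sided conclusion with the factor $2$ one should either (i) read the hypothesis as holding for $X_{t+1}$ and apply it to $-X_{t+1}$ as well, which is the natural reading when the increments arise from a symmetric-enough source such as the subexponential $\alpha_k$ increments here, or (ii) note that since $\{X_t\}$ is a martingale difference sequence, $\E\{X_{t+1} \mid \mathcal{G}_t\} = 0$, and a subexponential-type control in one direction combined with the zero-mean property gives control in the other direction up to constants, at the cost of adjusting $K$ by a universal factor. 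Either way the argument is routine; the only thing to verify carefully is that the constants absorb cleanly so that the final exponent is exactly $-\tfrac14(\tfrac{\epsilon^2}{MK^2} \wedge \tfrac{\epsilon}{K})$ as claimed. Everything else is the textbook Chernoff-plus-Doob recipe.
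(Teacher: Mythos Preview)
Your proposal is correct and is essentially the paper's own argument: define the exponential supermartingale $L_t(\lambda)=\exp(\lambda S_t - t\lambda^2K^2)$, apply Doob's maximal (supermartingale) inequality, and then split into the two regimes $\epsilon\le MK$ and $\epsilon>MK$ with the same choices of $\lambda$ you give. Your observation about the sign issue is apt; the paper handles the lower tail in exactly the way you suggest in option (i), simply writing ``considering the negative of the sequence gives the lower tail'' and implicitly assuming the hypothesis applies to $-X_{t+1}$ as well.
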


\begin{proof}
	This result is an easy consequence of combining two classical arguments: the supermartingale inequality and an exponential martingale inequality. This argument also appears with more sophistication in the exponential line-crossing method recently developed by \cite{Howard2018}. The proof details are deferred to Appendix \ref{sec:uniform_Bernstein}.
\end{proof}

\begin{remark}
	The tail bound on the right hand side of \eqref{eq:uniform_Bernstein} is exactly the Bernstein tail at time $M$. See \cite{Vershynin}.
\end{remark}

\begin{lemma}[Maximal Bernstein for process with contracting drift] \label{lem:contracting_markov_process}
	Let $W_1,W_2,\ldots,W_M$ be a real-valued stochastic process adapted to a filtration $\braces{\mathcal{G}_t}$. Suppose that there is some $0 < \rho < 1$ such that for each time $t$, we have \begin{equation} \label{eq:conditional_expectation_assumption}
	\E\braces{W_{t+1}~\vline~\mathcal{G}_t} = \rho W_t.
	\end{equation}
	Furthermore, assume that there is a constant $K > 0$ such that for any $t$, the following pointwise inequality holds almost surely  for any time $t$ and $0 < \lambda \leq 1/2K$:
	\begin{equation} \label{eq:subexponential_assumption}
	\E\braces{\exp\paren{\lambda (W_{t+1} - \rho W_t)} ~\vline~ \mathcal{G}_t} \leq e^{\lambda^2K^2}.
	\end{equation}
	Then for all $\epsilon > 0$, we have the uniform tail bound
	\begin{equation} \label{eq:uniform_bound_for_contracting_stoch_process}
	\P\braces{\exists t \leq M ~\colon~\abs{W_t} \geq \abs{W_0} + \epsilon } \leq 2\exp\paren*{-\frac{1}{4}\braces*{\frac{\epsilon^2}{MK^2}\wedge\frac{\epsilon}{K}}}.
	\end{equation}
\end{lemma}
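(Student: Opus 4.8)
The plan is to reduce Lemma~\ref{lem:contracting_markov_process} to the already-established Lemma~\ref{lem:uniform_Bernstein} by peeling off the contracting drift. First I would define the martingale difference sequence $X_{t+1} \coloneqq W_{t+1} - \rho W_t$; by hypothesis \eqref{eq:conditional_expectation_assumption} this is indeed a martingale difference adapted to $\braces{\mathcal{G}_t}$, and by \eqref{eq:subexponential_assumption} it satisfies exactly the conditional MGF bound required by Lemma~\ref{lem:uniform_Bernstein}. The key algebraic step is to express $W_t$ as a weighted sum of these increments: unrolling the recursion $W_{t} = \rho W_{t-1} + X_t$ gives
\begin{equation*}
W_t = \rho^t W_0 + \sum_{i=1}^{t} \rho^{t-i} X_i.
\end{equation*}
Since $0 < \rho < 1$, the first term has absolute value at most $\abs{W_0}$, so on the event $\braces{\abs{W_t} \geq \abs{W_0} + \epsilon}$ we must have $\abs*{\sum_{i=1}^t \rho^{t-i}X_i} \geq \epsilon$.

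The next step is to handle the geometric weights $\rho^{t-i}$, which are not present in Lemma~\ref{lem:uniform_Bernstein}. The cleanest route is to observe that $\braces{\rho^{-i}X_i}_i$ is still a martingale difference sequence (multiplying by the $\mathcal{G}_{i-1}$-measurable constant $\rho^{-i}$ preserves the conditional-mean-zero property), but its increments are no longer uniformly subexponential with a fixed constant — the scaling $\rho^{-i}$ blows up. So instead I would absorb the weights differently: note $\sum_{i=1}^t \rho^{t-i} X_i = \rho^t \sum_{i=1}^t \rho^{-i} X_i$, and more usefully, for the running maximum one can bound $\abs*{\sum_{i=1}^t \rho^{t-i}X_i} \le \sum_{i=1}^t \rho^{t-i}\abs{X_i}$ crudely, but that loses the martingale cancellation. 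The better approach, which I expect to be the main obstacle, is to directly adapt the supermartingale/exponential-inequality proof of Lemma~\ref{lem:uniform_Bernstein} to the weighted sum: define $M_t^{(\lambda)} \coloneqq \exp\paren*{\lambda \sum_{i=1}^t \rho^{t-i} X_i - \tfrac{1}{2}\cdot 2\lambda^2 K^2 \sum_{i=1}^t \rho^{2(t-i)}}$ — but the dependence of the weights on the terminal time $t$ breaks the supermartingale property. To fix this, I would instead fix the horizon: for each target time $t \le M$, consider $N_j \coloneqq \exp\paren*{\lambda \sum_{i=1}^{j} \rho^{-i} X_i - \lambda^2 K^2 \sum_{i=1}^{j}\rho^{-2i}}$ for $j \le M$, which \emph{is} a supermartingale by \eqref{eq:subexponential_assumption}, apply Ville's maximal inequality to it, and then translate back through multiplication by $\rho^t$. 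The geometric sum $\sum_{i=1}^t \rho^{-2i}$ is dominated by its last term up to a constant $\frac{1}{1-\rho^2}$, but the appearance of $\rho^{-2M}$ is catastrophic; this tells me the fixed-horizon trick with unbounded weights is the wrong normalization.

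The correct fix, and what I would actually carry out, is to keep the weights bounded by running the exponential process \emph{backward} in a doubling/peeling sense is overkill; simpler is to note $\rho^{2(t-i)} \le 1$ for all $i \le t$, so $\sum_{i=1}^t \rho^{2(t-i)} \le \frac{1}{1-\rho^2} \le$ something — no, we want a bound independent of $\rho$ near $1$. Actually the cleanest and fully rigorous path: since all weights satisfy $0 \le \rho^{t-i} \le 1$, define $Y_i \coloneqq \rho^{m-i}X_i$ only works per-horizon. I will therefore simply invoke a weighted version of Lemma~\ref{lem:uniform_Bernstein}: inspect its proof (Appendix~\ref{sec:uniform_Bernstein}), which builds a supermartingale $\exp(\lambda S_t - \lambda^2 K^2 t)$; replacing $S_t$ by $\sum_{i=1}^t \rho^{-i}X_i$ and $t$ by $\sum_{i=1}^t \rho^{-2i}$ keeps the supermartingale property exactly, Ville's inequality applies verbatim, and after multiplying the event $\braces{\abs{\sum \rho^{-i}X_i} \ge \epsilon \rho^{-t}}$ through — and using $\sum_{i=1}^t \rho^{-2i} \le \rho^{-2t}/(1-\rho^2)$ together with $\rho^{-2t}$ cancelling against $\epsilon^2\rho^{-2t}$ — one recovers the Bernstein tail $\exp\paren*{-\tfrac14\braces*{\tfrac{\epsilon^2}{MK^2}\wedge\tfrac{\epsilon}{K}}}$ after replacing the factor $1/(1-\rho^2)$ by $M$ (valid since $\sum_{i=1}^t \rho^{2(t-i)} \le t \le M$, which is the bound one should use from the start rather than the geometric-series bound). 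Finally, union-bounding over $t \le M$ via the maximal inequality (which is already baked into Ville) and recalling $\abs{\rho^t W_0} \le \abs{W_0}$ completes the argument. I expect the main obstacle to be exactly this bookkeeping — making sure the exponential supermartingale is defined with the running weight-sum $\sum_{i\le t}\rho^{2(t-i)}$ (bounded by $M$) rather than a naive $t$ or a geometric series, so that the final tail matches \eqref{eq:uniform_Bernstein} with no $\rho$-dependent constants.
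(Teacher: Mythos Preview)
Your proposal has a genuine gap at the point where you try to extract a maximal-in-$t$ bound from a weighted supermartingale. You correctly observe that $\exp\paren*{\lambda \sum_{i\le j} \rho^{-i} X_i - \lambda^2 K^2 \sum_{i\le j} \rho^{-2i}}$ is a supermartingale in $j$, but only for $\lambda \le \rho^M/(2K)$: the MGF hypothesis \eqref{eq:subexponential_assumption} requires the coefficient multiplying each increment to be at most $1/(2K)$, and here that coefficient is $\lambda\rho^{-i}$. This constraint on $\lambda$ is fatal; when you optimize the resulting tail you pick up $\rho^{-M}$ factors that do not cancel. Your alternative --- building, for each fixed horizon $t$, a supermartingale with bounded weights $\rho^{t-i}\le 1$ --- does give the correct Bernstein tail for that single $t$, but it is a \emph{different} supermartingale for each $t$, and Ville's inequality controls the running maximum of one process, not of a family indexed by the horizon. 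Your parenthetical claim that the maximal bound is ``already baked into Ville'' is therefore the unjustified step.

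The paper sidesteps the weighted sum entirely. It applies Lemma~\ref{lem:uniform_Bernstein} to the \emph{unweighted} partial sums $S_t=\sum_{i\le t}X_i$, obtaining a genuine maximal bound $\sup_{t\le M}\abs{S_t}\le\epsilon$ in one shot. It then invokes a purely deterministic Abel-summation identity (Lemma~\ref{lem:shrinking_partial_sums}),
\[
W_t \;=\; S_t \;-\;(1-\rho)\sum_{i=1}^{t-1}\rho^{\,t-1-i}S_i\;+\;\rho^t W_0,
\]
from which $\sup_t\abs{S_t}\le C$ immediately gives $\sup_t\abs{W_t}\le 2C+\abs{W_0}$, since the geometric coefficients sum to at most $1$. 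The key point you are missing is that a uniform bound on the unweighted partial sums is already a maximal statement, and converting it into a bound on the contracted process $W_t$ requires no further probabilistic input --- just this summation-by-parts trick.
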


\begin{proof}
	For any $t$, we set $X_t = W_t - \E\braces{W_t~\vline~\mathcal{G}_t}$ to obtain the recursive formula:
	\begin{equation*}
	W_{t+1} = \rho W_t + X_{t+1}.
	\end{equation*}
	Since $X_1,X_2,\ldots$ form a martingale difference sequence satisfying the assumptions of Lemma \ref{lem:uniform_Bernstein}, its partial sums are bounded, and we may apply an easy combinatorial result (Lemma \ref{lem:shrinking_partial_sums}) to bound the tails of $W_t$.
\end{proof}

We are now ready to state the guarantee that the radius remains uniformly close to 1.

\begin{lemma}[Uniform bounds for $r_k$] \label{lem:uniform_radius_bound}
	For any constant $C_1$, $M \leq C_1d\log d$, we have
	\begin{equation} \label{eq:radius_uniform_concentration}
	\sup_{0 \leq k \leq M} \abs{r_{\tau_1+k}^2-1} \leq \frac{C_2\log d}{\sqrt{d}}
	\end{equation}
	with probability at least $1-\frac{1}{d}$, where $C_2$ is a universal constant depending only on $C_1$. Furthermore, for $M \leq d^2/\log d$, there is a constant $C_3$ such that
	\begin{equation} \label{eq:radius_uniform_concentration_pt2}
	\sup_{0 \leq k \leq M} r_{\tau_1+k} \leq C_3
	\end{equation}
	with probability at least $1-1/d$.
\end{lemma}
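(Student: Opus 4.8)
The plan is to bound the fluctuations of $\{r_k^2 - 1\}_k$ after time $\tau_1$ using the maximal Bernstein machinery established in Lemmas~\ref{lem:uniform_Bernstein} and~\ref{lem:contracting_markov_process}. The key structural observation is that, by the computation in the proof of Lemma~\ref{lem:bound_for_duration_of_phase1}, the process $W_t \coloneqq r_{\tau_1+t}^2 - 1$ satisfies the contracting-drift relation $\E\braces{W_{t+1}~\vline~\mathcal{F}_{\tau_1+t}} = (1-1/d)W_t$, so it fits the hypotheses of Lemma~\ref{lem:contracting_markov_process} with $\rho = 1-1/d$ provided we can verify the uniform subexponential bound \eqref{eq:subexponential_assumption} on the martingale increments $W_{t+1} - \rho W_t = \alpha_{t+1}(\boldy^{(\tau_1+t)})/d - \E\braces{\alpha_{t+1}(\boldy^{(\tau_1+t)})~\vline~\cdot}/d$. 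From the formula \eqref{eq:formula_for_alpha}, $\alpha(r^2,s)$ is a quadratic form in the Gaussian-like coordinates $(u,v)$ with coefficients controlled by $r^2$, hence its centered version is subexponential with $\psi_1$-norm $\lesssim r^2/d$ (appealing to the subexponential properties collected in the appendix). This is where the circularity must be broken: the subexponential constant depends on $r_k^2$, which is the very quantity we are trying to control.

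To handle this, I would run a bootstrapping / continuity argument. First apply Lemma~\ref{lem:contracting_markov_process} on the \emph{stopped} process that freezes once $r_k^2$ first exceeds some threshold like $2$ (or more precisely, restrict attention to the event on which $r_{\tau_1+j}^2 \leq C_3$ for all $j \leq t$); on this event the increment constant $K$ is a genuine universal constant times $1/d$. Applying \eqref{eq:uniform_bound_for_contracting_stoch_process} over a horizon $M \leq C_1 d\log d$ with $\epsilon \asymp \log d/\sqrt{d}$ gives a tail of order $\exp(-c(\log d/\sqrt d)^2/(d\log d \cdot d^{-2})) = \exp(-c\log d) = d^{-c}$; choosing the constant $C_2$ large enough relative to $C_1$ pushes this below $1/d$. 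Since at time $\tau_1$ we already have $\abs{W_0} = \abs{r_{\tau_1}^2-1}\leq \log d/\sqrt d$ by definition \eqref{eq:tau_1}, the bound $\abs{W_t}\leq \abs{W_0}+\epsilon$ yields exactly \eqref{eq:radius_uniform_concentration}. A standard stopping-time argument then shows the stopped process and the true process agree on the high-probability event, closing the loop: if $\sup_k \abs{r_{\tau_1+k}^2-1}\leq C_2\log d/\sqrt d < 1$ with high probability, then in particular $r_{\tau_1+k}\leq C_3$ never triggers the stopping, so the conditioning was vacuous.

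For the second claim \eqref{eq:radius_uniform_concentration_pt2} over the much longer horizon $M \leq d^2/\log d$, the same argument degrades — the $\epsilon$ we can afford grows, since $\sqrt{M K^2}/K \asymp \sqrt M/d$ can be as large as $1/\sqrt{\log d}$ in the relevant regime and $\epsilon^2/(MK^2)$ must stay $\gtrsim \log d$, forcing $\epsilon \gtrsim \log d \cdot \sqrt{M}/d$, which for $M = d^2/\log d$ is $\gtrsim \sqrt{\log d}$, only an $O(1)$ bound rather than $O(\log d/\sqrt d)$. That is still enough: combined with $\abs{W_0}\leq \log d/\sqrt d \leq 1$, it gives $\abs{r_{\tau_1+k}^2 - 1}\leq C$ uniformly, hence $r_{\tau_1+k}\leq \sqrt{1+C} =: C_3$, with probability at least $1 - 1/d$ after tuning constants. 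One subtlety to be careful about is that $\tau_1$ is itself random; I would invoke the strong Markov property so that, conditionally on $\mathcal{F}_{\tau_1}$, the shifted sequence $\{\boldy^{(\tau_1+k)}\}_k$ is again a Markov chain generated by \eqref{eq:state_space_update}, with initial squared-norm error bounded by $\log d/\sqrt d$ deterministically, which is what legitimizes treating $W_0$ as essentially zero.

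I expect the main obstacle to be exactly the self-referential dependence of the increment's subexponential norm on $r_k^2$: making the freeze-at-threshold argument fully rigorous (choosing the threshold, verifying the stopped increments still satisfy \eqref{eq:subexponential_assumption} with a clean constant, and checking that the event of never hitting the threshold has the claimed probability) is the delicate bookkeeping. The martingale Bernstein inputs and the geometric-series solution of the variance recursion are by contrast routine.
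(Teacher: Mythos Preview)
Your proposal is correct and follows essentially the same approach as the paper: identify the contracting drift $\E\braces{W_{t+1}\mid\mathcal{F}} = (1-1/d)W_t$, recognize that the subexponential constant for the increments depends on $r_k^2$ itself, and break this circularity by passing to an auxiliary process (the paper couples to a process that evolves deterministically after $\abs{r_k-1}$ first exceeds $1/2$, whereas you freeze at a threshold; both are the same stopping-time device) before invoking Lemma~\ref{lem:contracting_markov_process} with $K\asymp 1/d$ and $\epsilon \asymp \log d/\sqrt d$. Your treatment of the second claim is in fact more explicit than the paper's, which simply asserts the argument is similar and omits it.
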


\begin{proof}
	For ease of notation, assume $\tau_1 = 0$. Set $W_k \coloneqq r_k^2-1$ for $k=0,1,2,\ldots$. Recall also that $\braces{\mathcal{F}_k}_k$ is the filtration generated by the stochastic updates. We want to show that these satisfy the assumptions of Lemma \ref{lem:contracting_markov_process}. By Lemma \ref{lem:SGD_drift}, we see that \eqref{eq:conditional_expectation_assumption} is satisfied with $\rho = 1-\frac{1}{d}$, and we would like to use Lemma \ref{lem:psi_1_norm_bounds} to verify \eqref{eq:subexponential_assumption}. However, Lemma \ref{lem:psi_1_norm_bounds} is only valid when $\boldy$ lies in a bounded subset of $\mathcal{Y}$, and we don't assume this a priori.
	
	In order to overcome this, we use a coupling argument. Define an auxiliary seqeuence $\braces*{\tilde{r}_k}_k$ with the same initialization $\tilde{r}_0 = r_0$, and coupled with $r_k$ until the first time $\tau$ such that $\abs{r_\tau-1} \geq 1/2$. Thereafter, we evolve $\tilde{r}_k$ deterministically via the formula
	\begin{equation*}
	\tilde{r}_{k+1}^2 - 1 \coloneqq \paren*{1-\frac{1}{d}}\paren*{\tilde{r}_k^2-1}.
	\end{equation*}
	In other words, we enforce \eqref{eq:conditional_expectation_assumption} and \eqref{eq:subexponential_assumption} by fiat for the sequence $\braces{\tilde{r}_k^2-1}_{k \geq 0}$. This allows us to invoke Lemma \ref{lem:contracting_markov_process} for this sequence with $K = \frac{c}{d}$ for some universal constant $c > 0$. Plugging our choice of $M$ and $K$ into \eqref{eq:uniform_bound_for_contracting_stoch_process}, and choosing $\epsilon = \frac{C\log d}{\sqrt{d}}$, we that for $C$ large enough, the right hand side in \eqref{eq:uniform_bound_for_contracting_stoch_process} is bounded above by $\frac{1}{d}$.
	
	As such we have proved that
	\begin{equation*}
	\sup_{0 \leq k \leq M} \abs{\tilde{r}_k^2-1} \leq \frac{C\log d}{\sqrt{d}}
	\end{equation*}
	with probability at least $1-\frac{1}{d}$. Meanwhile, on the same event, we have $\tau > M$, so that the bound also holds for the original sequence, thereby giving us \eqref{eq:radius_uniform_concentration}. The proof of \eqref{eq:radius_uniform_concentration_pt2} is similar and is hence omitted.
\end{proof}

Let us denote $\tau_r \coloneqq \min\braces{k \colon \abs{r_{\tau_1+k}^2-1} \geq C_2\log d /\sqrt{d} }$. The previous lemma tells us that $\tau_r \geq C_1d\log d$ with probability at least $1-1/d$. In order to obtain the control over $\braces{s_k}_k$ promised at the start of this section, we would ideally like to condition on this event, but doing so would destroy the Markov nature of the process $\boldy^{(k)}$ and invalidate the update formula \eqref{eq:formula_for_beta}.

Instead, we will define a sequence $\braces{\tilde{\boldy}^{(k)}}_k$ that we evolve according to
\begin{equation}
\tilde{\boldy}^{(k+1)} = \begin{cases}
\boldy^{(k)} & \textnormal{if } k \leq \tau_1, \\
\Pi\paren*{\tilde{\boldy}^{(k)} + \frac{1}{d}\paren*{\alpha(\tilde{\boldy}^{(k)}),\beta(\tilde{\boldy}^{(k)})}} & \textnormal{if } k > \tau_1.
\end{cases}
\end{equation}
Here, $\Pi \colon \mathcal{Y} \to \mathcal{Y}$ leaves the $s$ component fixed, but projects the $r^2$ component to the interval $\sqbracket{1-C_2\log d/\sqrt{d}, 1 + C_2\log d/\sqrt{d}}$. We also continue to couple $\tilde{\boldy}^{(k)}$ and $\boldy^{(k)}$ for $k  > \tau_1$ by using the same random seeds for their updates.

To summarize, we have $\boldy^{(k)} = \tilde{\boldy}^{(k)}$ for $k \leq \tau_r$. Furthermore, the projection step means that the regularity of the horizontal updates for $\tilde{\boldy}^{(k)}$ is enforced by fiat. For the rest of this paper, we will work with this sequence instead, even going so far as to drop the tildes and use the same notation for both sequences, since the reason for the creation of this alternate sequence is purely technical and has no intuitive value.

\section{Phase 2a: Stochastic dominance argument} \label{sec:phase_2a}

We have broken up the analysis of ``Phase 2'' of the SGD process into two sub-phases. The first sub-phase concerns the portion of the process in which $\abs{s_k} = o(1)$, so that the drift and fluctuations are of comparable magnitudes. The goal of this section and the next is to bound the length of this phase by proving the following theorem.

\begin{theorem}[Bound for duration of Phase 2a] \label{thm:bound_tau_2a}
	There is some $0 < \gamma_1 < 1/2$ and success probability $p > 0$, such if we use this value of  $\gamma_1$ in \eqref{eq:tau_2a}, for $d$ large enough, $\tau_{2a} - \tau_1 \leq Cd\log d$ with probability at least $p$.
\end{theorem}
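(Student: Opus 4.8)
The plan is to establish a small-ball probability bound for $s_k$ at time $k \asymp d\log d$ after $\tau_1$: specifically, we want to show that $\P\{|s_{\tau_1 + Cd\log d}| \geq \gamma_1\} \geq p$ for suitable constants. If we can do this, then by definition $\tau_{2a} - \tau_1 \leq Cd\log d$ on the corresponding event, and we are done. The difficulty is that $s_k$ starts at a value that may be as small as $1/\sqrt{d}$ (the curse of dimensionality), so that the drift $\bar\beta/d$ and the fluctuation standard deviation $\sqrt{\Var(\beta)}/d$ are of the same order; we cannot ignore either one, and we cannot simply iterate an expectation bound.

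The key idea is a stochastic-dominance comparison. First I would Taylor-expand $\bar\beta(r^2,s)$ around $s = 0$ using the formula \eqref{eq:expectation_of_beta} and the uniform control $|r_k^2 - 1| \lesssim \log d/\sqrt{d}$ from Section \ref{sec:uniform_control_of_r_k}; since $\theta = \arccos(s/r)$, near $s = 0$ the angle $\theta$ is near $\pi/2$, and one checks that $\bar\beta(r^2,s) \geq b\, s$ for a positive constant $b$ whenever $|s|$ is small and $r^2$ is near $1$. Thus $\E\{s_{k+1} \mid \mathcal{F}_k\} \geq (1 + b/d) s_k$ in this regime — the drift is expanding. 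However, an expectation bound alone is useless when $s_k$ can be negative or tiny, because sign cancellations destroy the growth. The remedy, which is the heart of Sections \ref{sec:phase_2a}–\ref{sec:phase2a_pt2}, is to construct an auxiliary sequence $\{\hat s_k\}_k$ — morally $\hat s_k = |s_k|$ with a pessimistic (absorbing-at-zero, or reflecting) modification — that is stochastically dominated by $|s_k|$ in an appropriate conditional sense, so that a lower bound on $\hat s_k$ yields a lower bound on $|s_k|$. For $\{\hat s_k\}_k$ one then proves two recursive moment inequalities: an upper bound on the fourth moment $\E\{\hat s_k^4\}$ and a lower bound on the second moment $\E\{\hat s_k^2\}$. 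The second-moment lower bound comes from combining the expanding drift $(1+b/d)$ with the additive variance injection of order $\Var(\beta)/d^2 \asymp 1/d^2$ at each step; iterating gives $\E\{\hat s_k^2\} \gtrsim \frac{1}{d}\cdot\frac{(1+b/d)^{2k}-1}{b} \gtrsim$ a constant once $k \gtrsim d\log d$. The fourth-moment upper bound is obtained by a similar but more delicate recursion, using subexponential tail control of $\beta$ (Lemma-type $\psi_1$ bounds referenced in the paper) together with the uniform radius control, and shows $\E\{\hat s_k^4\} \lesssim (\E\{\hat s_k^2\})^2$ up to a constant — i.e., the distribution does not spread its mass too thinly. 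Having both bounds, I would apply the Paley–Zygmund inequality: $\P\{\hat s_k^2 \geq \tfrac12 \E[\hat s_k^2]\} \geq \tfrac14 (\E[\hat s_k^2])^2/\E[\hat s_k^4] \geq p$ for a universal $p > 0$. Translating back through the stochastic dominance gives $\P\{|s_{\tau_1+Cd\log d}| \geq \gamma_1\} \geq p$ with $\gamma_1 \asymp (\E[\hat s_k^2])^{1/2}$, which we may take to be a small enough constant below $1/2$; this is exactly \eqref{eq:tau_2a} firing before time $\tau_1 + Cd\log d$.

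Several technical points need care and I would handle them in this order. (1) One must confine the whole argument to the time window $[\tau_1, \tau_1 + C_1 d\log d]$ on which Lemma \ref{lem:uniform_radius_bound} guarantees $|r_k^2-1| \lesssim \log d/\sqrt{d}$; outside this window the linearized drift bound $\bar\beta \geq bs$ and the variance estimates are not valid. Since we work with the projected sequence $\tilde{\boldy}^{(k)}$ introduced at the end of Section \ref{sec:uniform_control_of_r_k}, the radius bound is enforced by fiat, and the small loss of probability $1/d$ is absorbed into $p$. (2) The linearized drift bound only holds while $|s_k|$ stays small — but that is precisely the regime of Phase 2a, and the moment '$\hat s_k$' reaching constant size is exactly the event $\tau_{2a} \leq k$ we want, so there is no circularity: we can define $\hat s_k$ to be frozen (or to stop growing its drift) once it exceeds $\gamma_1$, which only helps the lower bound argument. (3) The construction of $\{\hat s_k\}_k$ and the precise sense of stochastic dominance (it must be a coupling that respects the conditional distributions step by step, not just marginals) is the genuinely subtle part — this is what the paper flags as "the most difficult part of the proof," and the reason Sections \ref{sec:phase_2a} and \ref{sec:phase2a_pt2} are split. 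I expect the main obstacle to be setting up this comparison process so that (a) it is genuinely dominated by $|s_k|$, (b) its drift is still expanding with the right constant, and (c) its moment recursions close cleanly; handling the event $\mathcal{A}(\theta)$ in \eqref{eq:formula_for_beta} (the sign-flip indicator), whose probability is $\theta/\pi \approx 1/2$ near $s=0$, inside this coupling is where the bookkeeping is heaviest. Once the comparison sequence is in hand, the moment recursions and Paley–Zygmund step are essentially mechanical.
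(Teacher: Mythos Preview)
Your high-level architecture matches the paper exactly: construct a comparison process $\{\hat s_k\}$ dominated in squared magnitude by $\{s_k\}$, establish recursive lower and upper bounds on its second and fourth moments, and finish with Paley--Zygmund. The genuine gap is in the construction of $\hat s_k$, which you correctly flag as the hard part but guess wrongly. The paper does not work step by step, nor is $\hat s_k$ a reflected or absorbed version of $|s_k|$. Instead it groups iterations into epochs of length $B=d^{2/3}\log d$ and compares $s_{kB}$ to $\hat s_k$. Within one epoch the true update $P^B(\boldy)$ is shown (Lemmas~\ref{lem:bound_for_distance_moved}--\ref{lem:error_per_epoch}) to lie within an explicit error $\epsilon(s)$ of a \emph{batch} update $Q(\boldy)=\boldy+\sum_{j=1}^B(\alpha_j(\boldy),\beta_j(\boldy))$ with i.i.d.\ summands, whose $s$-marginal is then Gaussian-approximated via Berry--Esseen. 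The comparison kernel is $L(s,\boldy)=\rho_{\epsilon(s)}\bigl[(1+\kappa B/d)s+\sqrt{B}\,\sigma(\boldy)g\bigr]$, a soft-thresholded Gaussian. This Gaussian structure is essential: it supplies the monotonicity $(s')^2\geq s^2\Rightarrow L(s',\boldy)^2\succeq L(s,\boldy)^2$ (Lemma~\ref{lem:stochastic_dominance_for_truncated_Gaussians}), which is precisely what allows one-step dominance to be iterated into process-wise dominance by induction (Lemma~\ref{lem:process_wise_stochastic_dominance}). With the raw increments $\beta$---containing $u^2$, $uv$, and the indicator $\mathbf 1_{\mathcal A}$---this monotonicity in $s^2$ is not available, and a reflecting or absorbing boundary would not manufacture it.

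Two further ingredients you did not anticipate: (i) because Berry--Esseen and the $P^B\!\approx\!Q$ coupling each cost a Kolmogorov error per epoch, the paper introduces a relaxed relation $X\succeq_\delta Y$ (stochastic dominance up to failure probability $\delta$), and must check that the accumulated error $T\delta$ over $T\asymp(d/B)\log d$ epochs is $o(1)$; (ii) the soft-threshold $\rho_{\epsilon(s)}$ eats into the drift in the second-moment recursion, and one verifies by casework on the size of $|\hat s_k|$ (Lemma~\ref{lem:recursive_moment_bounds}) that the loss is only a $1/\log^2 d$ relative factor, so that the ratio $\E[\hat s_T^2]^2/\E[\hat s_T^4]$ stays bounded below at the target time.
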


For ease of notation and making use of the strong Markov property, we may assume again that $\tau_1 = 0$. Our strategy for proving the theorem is to show that for iterations satisfying $\tau_1 \leq k \leq \tau_{2a}$, a subsequence of the horizontal marginals of the process, $\braces{s_{Bk}}_k$, stochastically dominate another process $\braces{\hat{s}_k}_k$ in magnitude. The second process will be constructed so that we will have precise control over its second and fourth moments, which will allow us to apply the Paley-Zygmund inequality to get a small ball probability argument. In this section, we will focus on constructing the comparison process and fleshing out the stochastic dominance argument.

In order to get control over moments, the process $\braces{\hat{s}_k}_k$ will be constructed to have approximately Gaussian increments. The comparison will thus be based on normal approximation using the Berry-Essen theory. It may be possible to obtain appropriate Berry-Essen bounds for martingale sequences, but the theory on this topic seems to be incomplete at the time of writing. We shall bypass this by showing that over epochs of an appropriate length $B$, the update $s_k \to s_{k+B}$ is well-approximated by the update we would get had we done a batch update, summing up $B$ independent steps taken from $\boldy^{(k)}$. Since this is a sum of i.i.d. random variables, the classical Berry-Eseen bound can then be applied to this latter update.

More precisely, Let $P$ denote the random mapping defined by \eqref{eq:SGD_update}. It will be helpful to use this notation in this section, as we will have to deal with a few different stochastic sequences. Fix the epoch length to be $B = d^{2/3}\log d$. We define a Markov kernel on the state space via the random mapping $Q$:
\begin{equation}
Q(\boldy) \coloneqq \boldy + \sum_{k=1}^B \paren*{\alpha_k(\boldy),\beta_k(\boldy)}
\end{equation}
where $\alpha_k(\boldy)$ and $\beta_k(\boldy)$ for $k=1,2,\ldots,B$ are independent realizations of the random variables defined in \eqref{eq:formula_for_alpha} and \eqref{eq:formula_for_beta}. Denoting $\sigma(\boldy)^2 \coloneqq \Var\braces{\beta(\boldy)}$, we note here that the drift and variance of this batch update satisfy the formulas 
\begin{equation*}
\E\braces{s(Q(\boldy))} = s(\boldy) + B\bar{\beta}(\boldy), \quad\quad\quad \Var\braces{s(Q(\boldy))} = B\sigma(\boldy)^2.
\end{equation*}

\begin{lemma}[Distance moved in one epoch] \label{lem:bound_for_distance_moved}
	Fix $\boldy^{(m)}$ for $\tau_1 \leq m \leq \tau_1 + \tau_r$. With probability at least $1-1/d^2$, we have
	\begin{equation*}
	\abs*{s_{m+t} - s_m} \leq \frac{CB}{d}\paren*{\abs*{s_m} + \sqrt{\frac{\log d}{B}}}
	\end{equation*}
	for $0 \leq t \leq B$.
\end{lemma}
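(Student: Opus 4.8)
The plan is to write the one-epoch displacement $s_{m+t}-s_m$ through its Doob decomposition into a predictable (drift) part and a martingale (fluctuation) part, bound the drift by a quantity proportional to $\tfrac1d\sum_j\abs{s_j}$, control the fluctuation uniformly over $0\le t\le B$ using the maximal Bernstein inequality (Lemma~\ref{lem:uniform_Bernstein}), and then close the resulting self-referential estimate with a short Gr\"onwall-type bootstrap. Throughout, we use that we are working with the radially projected sequence, so that $\abs{r_{m+t}^2-1}\le C_2\log d/\sqrt d$ for every $t$ in the epoch and $\boldy^{(m+t)}$ stays inside a fixed bounded region of $\mathcal Y$ on which Lemmas~\ref{lem:lower_bound_for_drift} and~\ref{lem:psi_1_norm_bounds} apply with universal constants.

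For the drift, write $s_{m+t}-s_m=\tfrac1d\sum_{j=m}^{m+t-1}\bar\beta(\boldy^{(j)})+\tfrac1d\sum_{j=m}^{m+t-1}\paren*{\beta_{j+1}(\boldy^{(j)})-\bar\beta(\boldy^{(j)})}$. Since $\bar\beta$ vanishes on $\braces{s=0}$ and, in the regime at hand, has the same sign as $s$, Lemma~\ref{lem:lower_bound_for_drift} gives $\abs{\bar\beta(\boldy^{(j)})}\le \bar{b}_{max}\abs{s_j}$; with at most $B$ terms and $\abs{s_j}\le\abs{s_m}+\max_{0\le t'\le t}\abs{s_{m+t'}-s_m}$ this makes the first sum at most $\tfrac{\bar{b}_{max}B}{d}\paren*{\abs{s_m}+\max_{0\le t'\le t}\abs{s_{m+t'}-s_m}}$. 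For the fluctuation, the increments $X_{j+1}\coloneqq\tfrac1d\paren*{\beta_{j+1}(\boldy^{(j)})-\bar\beta(\boldy^{(j)})}$ form a martingale difference sequence for the filtration $\braces{\mathcal F_{m+t}}_t$, and because $\boldy^{(j)}$ lies in the bounded region, Lemma~\ref{lem:psi_1_norm_bounds} yields the conditional subexponential bound $\norm{\beta(\boldy^{(j)})-\bar\beta(\boldy^{(j)})}_{\psi_1}\le c$, hence a conditional MGF bound with parameter $K\asymp 1/d$. Re-indexing the epoch to start at time $1$ and applying Lemma~\ref{lem:uniform_Bernstein} with $M=B$ and $\epsilon=\epsilon_0\coloneqq\tfrac{CB}{d}\sqrt{\log d/B}=\tfrac{C}{d}\sqrt{B\log d}$, and using $B=d^{2/3}\log d$ so that $\epsilon_0^2/(BK^2)\asymp\log d$ and $\epsilon_0/K\asymp d^{1/3}\log d$, shows that for $C$ a large enough universal constant, $\max_{0\le t\le B}\abs{\tfrac1d\sum_{j=m}^{m+t-1}\paren*{\beta_{j+1}(\boldy^{(j)})-\bar\beta(\boldy^{(j)})}}\le\epsilon_0$ with probability at least $1-1/d^2$.

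On that event, put $D_t\coloneqq\max_{0\le t'\le t}\abs{s_{m+t'}-s_m}$. Combining the two displays gives $D_t\le\tfrac{\bar{b}_{max}B}{d}\paren*{\abs{s_m}+D_t}+\epsilon_0$ for every $t\le B$. Since $\bar{b}_{max}B/d=\bar{b}_{max}\log d/d^{1/3}\to 0$, the coefficient of $D_t$ is at most $\tfrac12$ once $d$ is large; rearranging and recalling the value of $\epsilon_0$ gives $D_B\le \tfrac{2\bar{b}_{max}B}{d}\abs{s_m}+2\epsilon_0\le \tfrac{CB}{d}\paren*{\abs{s_m}+\sqrt{\log d/B}}$, which is the asserted bound.

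The step I expect to be the main obstacle is exactly this self-referential feature of the drift estimate: the epoch displacement is bounded in terms of the very quantities $\abs{s_j}$ that we are trying to control, so the argument only closes because the choice $B=d^{2/3}\log d$ simultaneously makes $\bar{b}_{max}B/d=o(1)$ (so the bootstrap contracts) and makes the Bernstein tail $2\exp\paren*{-\tfrac14\paren*{\epsilon_0^2/(BK^2)\wedge\epsilon_0/K}}$ summable in $d$ at the scale $\epsilon_0$ we actually want; balancing these two requirements is the delicate part. Keeping the drift and subexponential-norm bounds valid uniformly over the whole epoch is a secondary issue, and it is precisely what the radial projection $\Pi$ onto the interval of width $O(\log d/\sqrt d)$ was introduced to guarantee.
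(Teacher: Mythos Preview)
Your proposal is correct and follows essentially the same approach as the paper: the same Doob decomposition, the same application of the maximal Bernstein inequality (Lemma~\ref{lem:uniform_Bernstein}) with $M=B$ and deviation of order $\sqrt{B\log d}/d$, and the same drift bound $\abs{\bar\beta(\boldy^{(j)})}\le\bar b_{\max}\abs{s_j}$ from Lemma~\ref{lem:lower_bound_for_drift}. The only cosmetic difference is in how the self-referential estimate is closed: the paper feeds the decomposition back into $\abs{\bar\beta(\boldy^{(t)})}$ to obtain a recursive inequality on the drift terms and invokes the combinatorial Lemma~\ref{lem:recursive_inequality}, whereas you bound the drift directly by $\tfrac{\bar b_{\max}B}{d}(\abs{s_m}+D_t)$ and solve the single scalar inequality $D_B\le \tfrac{\bar b_{\max}B}{d}(\abs{s_m}+D_B)+\epsilon_0$ using $\bar b_{\max}B/d=o(1)$. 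Both are standard discrete Gr\"onwall arguments yielding the same bound, and your version is arguably the more economical of the two.
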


\begin{proof}
	For ease of notation, re-index so that $m = 0$. First, observe that we have the decomposition
	\begin{align} \label{eq:martingale_decomposition_for_s}
	s_t - s_0 & = \sum_{k=0}^{t-1} s_{k+1} - s_{k} \nonumber \\
	& = \frac{1}{d}\sum_{k=0}^{t-1} \beta(\boldy^{(k)}) \nonumber \\
	& = \frac{1}{d}\sum_{k=0}^{t-1} \bar{\beta}(\boldy^{(k)}) + \frac{1}{d}\sum_{k=0}^{t-1} \paren*{\beta_k(\boldy^{(k)})- \bar{\beta}(\boldy^{(k)})}.
	\end{align}
	
	Applying the maximal Bernstein bound for martingale sequences (Lemma \ref{lem:uniform_Bernstein}) with $M = B$ and $\epsilon = \sqrt{B\log d}$, we get
	\begin{equation} \label{eq:phase_2a_uniform_bound_on_beta}
	\sup_{0 \leq t \leq B} \abs*{\sum_{k=0}^{t-1} \paren*{\beta_k(\boldy^{(k)})- \bar{\beta}(\boldy^{(k)})}} \leq C\sqrt{B\log d}.
	\end{equation}
	with probability at least $1-1/d^2$.
	
	By Lemma \ref{lem:lower_bound_for_drift}, we have
	\begin{equation*}
	\abs{\bar{\beta}(\boldy^{(t)})} \leq \bar{b}_{max}\abs{s_t}.
	\end{equation*}
	Plugging \eqref{eq:martingale_decomposition_for_s} into the above equation, and using the bound \eqref{eq:phase_2a_uniform_bound_on_beta}, we get
	\begin{align*}
	\abs{\bar{\beta}(\boldy^{(t)})} & \leq \bar{b}_{max} \paren*{\abs{s_0} + \frac{1}{d}\sum_{k=0}^{t-1} \abs*{\bar{\beta}(\boldy^{(k)})} + \frac{1}{d}\abs*{\sum_{k=0}^{t-1} \paren*{\beta_k(\boldy^{(k)})- \bar{\beta}(\boldy^{(k)})}}  } \\
	& \leq \bar{b}_{max} \paren*{\abs{s_0} + \frac{1}{d}\sum_{k=0}^{t-1} \abs*{\bar{\beta}(\boldy^{(k)})} + \frac{C\sqrt{B\log d}}{d}  }.
	\end{align*}
	We can simplify this recursive bound using some combinatorics. Applying Lemma \ref{lem:recursive_inequality} with $\rho = \bar{b}_{max}/d$, $x_t = \abs{\bar{\beta}(\boldy^{(t)})}$, and $\xi = \abs{s_0} + (C\sqrt{B\log d})/d$, we get
	\begin{align*}
	\frac{1}{d}\sum_{t=0}^{B-1} \abs{\bar{\beta}(\boldy^{(t)})} & \leq \paren*{\abs{s_0} + \frac{C\sqrt{B\log d}}{d}} \paren*{\paren*{1+\frac{\bar{b}_{max}}{d}}^B - 1} \\
	& \leq \frac{1.1\bar{b}_{max}B}{d}\paren*{\abs{s_0} + \frac{C\sqrt{B\log d}}{d}}.
	\end{align*}
	Plugging this back into \eqref{eq:martingale_decomposition_for_s}, for any $t \leq B$, we get
	\begin{align*}
	\abs{s_t-s_0} & \leq \frac{1.1\bar{b}_{max}B}{d}\paren*{\abs{s_0} + \frac{C\sqrt{B\log d}}{d}} + \frac{C\sqrt{B\log d}}{d} \\
	& \leq \frac{CB}{d} \paren*{ \abs{s_0} + \sqrt{\frac{\log d}{B}} }
	\end{align*}
	as we wanted.
\end{proof}

\begin{lemma}[Approximation error per epoch for $Q$ kernel] \label{lem:error_per_epoch}
	Fix $\boldy^{(m)}$ for $\tau_1 \leq m \leq \tau_1 + \tau_r$. For $d$ large enough, with probability at least $1-1/d^2$, there is a coupling such that we have
	\begin{equation*}
	\abs*{s\paren*{Q(\boldy^{(m)})} - s\paren*{P^B(\boldy^{(m)})}} \leq \frac{CB^2\log d}{d^2} \paren*{ \abs{s_m} + \sqrt{\frac{\log d}{B}} }.
	\end{equation*}
\end{lemma}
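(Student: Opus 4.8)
The plan is to couple the two batch updates $Q(\boldy^{(m)})$ and $P^B(\boldy^{(m)})$ by using the same random seeds $(\alpha_k,\beta_k)$, $k=1,\dots,B$, in both — in $Q$ these seeds are all evaluated at the fixed starting point $\boldy^{(m)}$, whereas in $P^B$ the $k$-th seed is evaluated at the current iterate $\boldy^{(m+k-1)}$. The $s$-component difference is then a sum of increments $\frac1d\bigl(\beta_k(\boldy^{(m)}) - \beta_k(\boldy^{(m+k-1)})\bigr)$, and the whole task is to show this telescoping difference is small. The mechanism is that within one epoch the iterate does not move far (Lemma \ref{lem:bound_for_distance_moved} controls $|s_{m+k}-s_m|$ by roughly $\frac{B}{d}(|s_m| + \sqrt{\log d / B})$, and the uniform radius control of Lemma \ref{lem:uniform_radius_bound} keeps $r^2$ within $O(\log d/\sqrt d)$ of $1$), so the random maps evaluated along the trajectory are all close to the maps evaluated at $\boldy^{(m)}$.

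The key steps, in order. First I would fix the coupling as above and write
\[
s\bigl(Q(\boldy^{(m)})\bigr) - s\bigl(P^B(\boldy^{(m)})\bigr) = \frac1d\sum_{k=1}^{B} \Bigl(\beta_k(\boldy^{(m)}) - \beta_k(\boldy^{(m+k-1)})\Bigr).
\]
Second, I would establish a pointwise Lipschitz-type estimate for the random map $\beta$ in its state argument: from the explicit formula \eqref{eq:formula_for_beta}, $\beta(r^2,s)$ depends on the state only through $r\cos\theta = s/r$ (hence through $s$ and $r^2$) and through the indicator $\boldsymbol{1}_{\mathcal A(\theta)}$; away from the bad event where the trajectory crosses a sign boundary, one gets $|\beta(\boldy) - \beta(\boldy')| \le L(u,v)\,\|\boldy - \boldy'\|$ with a random Lipschitz constant $L(u,v)$ that is subexponential (a polynomial in $|u|,|v|$, using that $r$ stays bounded below by $1/2$ and above by $C_3$ on the good event). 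Third, I would plug in the displacement bound $\|\boldy^{(m+k-1)} - \boldy^{(m)}\| \le \frac{CB}{d}(|s_m| + \sqrt{\log d/B})$ from Lemma \ref{lem:bound_for_distance_moved} (the $r^2$-displacement is even smaller, $O(\log d/\sqrt d)$, and gets absorbed), giving each summand a bound of the form $\frac1d \cdot L_k \cdot \frac{CB}{d}(|s_m| + \sqrt{\log d/B})$; summing over $k=1,\dots,B$ and bounding $\sum_{k=1}^B L_k \lesssim B\log d$ with probability $1-1/d^2$ (via the maximal Bernstein inequality of Lemma \ref{lem:uniform_Bernstein}, or a crude union bound over $B$ subexponential variables, which suffices since $B = d^{2/3}\log d$ is only polynomial) yields exactly $\frac{CB^2\log d}{d^2}(|s_m| + \sqrt{\log d/B})$.

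The main obstacle is handling the indicator term $\boldsymbol{1}_{\mathcal A(\theta)}$ in \eqref{eq:formula_for_beta}: this is a discontinuous function of the state, so it is genuinely \emph{not} Lipschitz, and when the angle $\theta^{(m+k-1)}$ lands near the boundary where $\sign(\cos\theta\, u + \sin\theta\, v)$ flips, the coupled maps $\beta_k(\boldy^{(m)})$ and $\beta_k(\boldy^{(m+k-1)})$ can differ by an $O(1)$ amount even though the states are close. The resolution is that this "bad set" of $(u,v)$ has measure comparable to the angular displacement, which is itself $O(\|\boldy^{(m+k-1)} - \boldy^{(m)}\| / r) = O(B/d \cdot (|s_m| + \sqrt{\log d/B}))$; so in expectation the indicator-mismatch contributes a term of the same order as the Lipschitz part, and one controls it either in $L^1$ (bounding the probability that any of the $B$ steps hits its bad set by a union bound) or by incorporating it into the high-probability event. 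Care is needed because near $s \approx 0$ — precisely the regime of Phase 2a — the angle $\theta$ is near $\pi/2$ and the geometry is most delicate; but the factor $|s_m| + \sqrt{\log d/B}$ on the right-hand side (with $\sqrt{\log d/B} = d^{-1/3}$) is exactly what absorbs this, so the bound degrades gracefully rather than blowing up.
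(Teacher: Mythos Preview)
Your approach is essentially the paper's: the same coupling via shared random seeds $(u^{(t)},v^{(t)})$, the same split of $\beta_k(\boldy^{(m)}) - \beta_k(\boldy^{(m+k-1)})$ into a Lipschitz part in the $u^2$ and $uv$ coefficients (which the paper handles by taking $\sup_t |u^{(t)}|,\sup_t|v^{(t)}| \le C\sqrt{\log d}$ on a $1-1/d^2$ event and plugging in Lemma~\ref{lem:bound_for_distance_moved}) and the indicator-mismatch part. The one place where your sketch is imprecise is the indicator term: a plain union bound over the $B$ sign-flip events does not give failure probability $\le 1/d^2$, and an $L^1$ bound alone does not deliver the required high-probability statement; the paper resolves this by applying a Chernoff-type inequality for non-independent Bernoullis (Lemma~\ref{lem:Chernoff_for_martingale_difference_seq}) to $\sum_{t=1}^B \mathbf{1}_{\mathcal{A}_t}$, using that each conditional mean is bounded by the angular displacement $\lesssim \tfrac{B}{d}(|s_m|+\sqrt{\log d/B})$, which yields $\sum_t \mathbf{1}_{\mathcal{A}_t} \le \tfrac{CB^2}{d}(|s_m|+\sqrt{\log d/B})$ with probability at least $1-1/d^2$.
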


\begin{proof}
	For ease of notation, we again re-index so that $m=0$. Couple the random mappings $Q$ and $P^B$ by using the same draws for $u^{(t)}, v^{(t)}$, $t=1,2\ldots$. First condition on the probability $1-1/d^2$ event promised by Lemma \ref{lem:bound_for_distance_moved}. Now write
	\begin{align*}
	s\paren*{Q(\boldy^{(0)})} - s\paren*{P^B(\boldy^{(0)})} & = \frac{1}{d}\sum_{t=1}^B \paren*{\beta_t(\boldy) - \beta_t(\boldy^{(t-1)})} \\
	& = \frac{1}{d}\sum_{t=1}^B \paren*{r_t\cos\theta_t - r_0\cos\theta_0}(u^{(t)})^2 + \frac{1}{d}\sum_{t=1}^B \paren*{r_t\sin\theta_t - r_0\sin\theta_0} u^{(t)}v^{(t)} \\
	& \quad\quad\quad - \frac{2}{d} \sum_{t=1}^B \textbf{1}_{\mathcal{A}_t}\cdot (u^{(t)})^2,
	\end{align*}
	where for each $t$, $\mathcal{A}_t$ is the event that
	\begin{equation*}
	\sign(\cos\theta_t u^{(t)} + \sin\theta_t v^{(t)}) \neq \sign(\cos\theta_0 u^{(t)} + \sin\theta_0 v^{(t)}).
	\end{equation*}
	We further condition on the probability $1-1/d^2$ events in which we have
	\begin{equation*}
	\sup_{1 \leq t \leq B} \abs{u^{(t)}}, \sup_{1 \leq t \leq B} \abs{v^{(t)}} \leq C\sqrt{\log d}.
	\end{equation*}
	By Lemma \ref{lem:bound_for_distance_moved}, we see that the first term is bounded as follows:
	\begin{align*}
	\abs*{\frac{1}{d}\sum_{t=1}^B \paren*{r_t\cos\theta_t - r_0\cos\theta_0}(u^{(t)})^2} & \leq \frac{B}{d}\sup_{0 \leq t \leq B-1} \abs{s_t-s_0} \cdot \sup_{1 \leq t \leq B} (u^{(t)})^2 \\
	& \leq  \frac{CB^2 \log d}{d^2}\paren*{\abs*{s_0} + \sqrt{\frac{\log d}{B}}}.
	\end{align*}
	The second term may be bounded similarly. To bound the third term, we use Lemma \ref{lem:Chernoff_for_martingale_difference_seq}. Observe that $\braces{\textbf{1}_{\mathcal{A}_t}}$ is a sequence of Bernoulli variables. If we condition on the high probability event promised by Lemma \ref{lem:bound_for_distance_moved}, we also have
	\begin{equation} \label{eq:bound_for_event_prob}
	\E\braces{\textbf{1}_{A_t} ~|~\mathcal{F}_{k-1}} \leq C\abs{\theta_t-\theta_0} \leq \frac{CB}{d}\paren*{\abs*{s_0} + \sqrt{\frac{\log d}{B}}}.
	\end{equation}
	In Lemma \ref{lem:Chernoff_for_martingale_difference_seq}, set $M=B$, $\epsilon = 1/2$, and $\theta$ to be the right hand side of \eqref{eq:bound_for_event_prob}. This gives
	\begin{equation*}
	\sum_{t=1}^B 1_{\mathcal{A}_t} \leq \frac{CB^2}{d}\paren*{\abs*{s_0} + \sqrt{\frac{\log d}{B}}}
	\end{equation*}
	with probability at least
	\begin{equation*}
	1-\exp\paren*{-C B^2/d \cdot \sqrt{\log d/B}} \geq 1 - \exp\paren*{- C\log^2 d} \geq 1 - 1/d^2
	\end{equation*}
	for $d$ large enough. On this event, the third term is therefore also bounded by $\frac{CB^2 \log d}{d^2}\paren*{\abs*{s_0} + \sqrt{\frac{\log d}{B}}}$.
	
	By adjusting constants if necessary, we can make sure that the total error probability is bounded by $1/d^2$.
\end{proof}

In order to compare $\braces{s_{Bk}}_k$ with our (as yet undefined) reference sequence $\braces{\hat{s}_k}_k$, we will use a more nuanced version of the usual notion of stochastic dominance.

\begin{definition}[Stochastic dominance with error]
	Given two real-valued random variables $X$ and $Y$ defined on the same probability space, we say that \emph{$X$ dominates $Y$ in place up to error $\delta$}, denoted $X \geq_{\delta} Y$, if $X \geq Y$ with probability at least $1-\delta$. In addition, given any two real-valued random variables $X$ and $Y$, we say that \emph{$X$ stochastically dominates $Y$ up to error $\delta$}, if there is a coupling of $X$ and $Y$ for which $X \geq_{\delta} Y$. Denote this relation by $X \succeq_{\delta} Y$. Note that the usual notion of stochastic dominance is equivalent to $X \succeq_0 Y$, which we will also denote using $X \succeq Y$.
\end{definition}

We first establish stochastic dominance for the conditional distribution of $\abs{s_{k+B}}$ given $\boldy^{(k)}$ over that obtained from an approximately Gaussian increment. Following this, we will argue that stochastic dominance of individual steps also implies stochastic dominance for the entire process.

\begin{lemma}[One step stochastic dominance] \label{lem:one_step_stochastic_dominance}
	Define the region
	\begin{equation} \label{eq:defn_of_D}
	\mathcal{D} \coloneqq \braces*{(r^2,s) \in \mathcal{Y} ~\colon~ \abs{r^2-1} \leq \frac{C\log d}{\sqrt{d}}, \abs{s} < \gamma_1 },           
	\end{equation}
	and let $\boldy$ be any point in $\mathcal{D}$. Define the approximation error term
	\begin{equation} \label{eq:defn_of_epsilon(s)}
	\epsilon(s) \coloneqq \frac{CB^2\log d}{d^2} \paren*{\abs{s} \vee \sqrt{\frac{\log d}{B}}}.
	\end{equation}
	For any $a > 0$, we also denote the soft-thresholding operator $\rho_a \colon \R \to \R$ via
	\begin{equation*}
	\rho_a\sqbracket{x} = \sign(x)\paren*{\abs{x}-a}_+.
	\end{equation*}
	Then with $\delta = 1/d^2 + C/\sqrt{B}$, we have
	\begin{equation} \label{eq:stochastic_dominance_of_P_over_g}
	s(P^B(\boldy))^2 \succeq_{\delta} \rho_{\epsilon(s(\boldy))}\sqbracket*{s(\boldy) + B\bar{\beta}(\boldy) + \sqrt{B}\sigma(\boldy)g}^2.
	\end{equation}
\end{lemma}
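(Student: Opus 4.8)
The plan is to reach the Gaussian reference variable, which I will write $G \coloneqq s(\boldy) + B\bar\beta(\boldy) + \sqrt{B}\,\sigma(\boldy)\,g$, through two successive approximations, and to have the soft-thresholding operator $\rho_{\epsilon(s(\boldy))}$ in \eqref{eq:stochastic_dominance_of_P_over_g} absorb the accumulated approximation error. First I would replace the genuine $B$-step iterate $s\paren*{P^B(\boldy)}$ by the ``frozen'' batch increment $s\paren*{Q(\boldy)}$, which is a sum of $B$ i.i.d.\ terms; this comparison is exactly what Lemma~\ref{lem:error_per_epoch} provides. Then I would replace $s\paren*{Q(\boldy)}$ by an honest Gaussian with the same mean and variance, which is the classical Berry--Esseen theorem once we control the per-step variance from below and the per-step third absolute moment of $\beta(\boldy)$ from above, uniformly over $\boldy\in\mathcal D$.

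In detail: apply Lemma~\ref{lem:error_per_epoch} to the fixed point $\boldy\in\mathcal D$ (its hypotheses hold here because, by the radius truncation of Section~\ref{sec:uniform_control_of_r_k}, every iterate of the process stays in the band $\abs{r^2-1}\le C\log d/\sqrt{d}$). On an event $\mathcal{E}_1$ of probability at least $1-1/d^2$ there is a coupling of $P^B(\boldy)$ and $Q(\boldy)$ with $\abs*{s\paren*{Q(\boldy)} - s\paren*{P^B(\boldy)}} \le \epsilon(s(\boldy))$, after enlarging the constant in \eqref{eq:defn_of_epsilon(s)} and using $x+y\le 2(x\vee y)$. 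Hence on $\mathcal{E}_1$ we have $\abs*{s\paren*{P^B(\boldy)}} \ge \paren*{\abs*{s\paren*{Q(\boldy)}} - \epsilon(s(\boldy))}_+$ and therefore, squaring, $s\paren*{P^B(\boldy)}^2 \ge \rho_{\epsilon(s(\boldy))}\sqbracket*{s\paren*{Q(\boldy)}}^2$. For the second step, $s\paren*{Q(\boldy)} - s(\boldy)$ is a sum of $B$ i.i.d.\ copies of the increment in \eqref{eq:formula_for_beta}; since for large $d$ the set $\mathcal D$ lies in a fixed compact subset of the interior of $\mathcal Y$ and that increment is an explicit quadratic form in the subgaussian pair $(u,v)$, one gets uniform bounds $\sigma(\boldy)^2 \ge \sigma_{\min}^2 > 0$ and $\E\abs*{\beta(\boldy) - \bar\beta(\boldy)}^3 \le M_3 < \infty$ on $\mathcal D$. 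The Berry--Esseen bound then gives $\sup_{t\in\R}\abs*{\P\braces*{s\paren*{Q(\boldy)}\le t} - \Phi\paren*{(t-s(\boldy)-B\bar\beta(\boldy))/(\sqrt{B}\,\sigma(\boldy))}} \le C_1/\sqrt{B}$, with $\Phi$ the standard normal c.d.f.\ and $C_1$ depending only on $\sigma_{\min}$ and $M_3$; equivalently the Kolmogorov distance between $s\paren*{Q(\boldy)}$ and $G$ is at most $C_1/\sqrt{B}$.

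It remains to turn this closeness in distribution into a coupling of the type in \eqref{eq:stochastic_dominance_of_P_over_g}. The map $x \mapsto \rho_{\epsilon(s(\boldy))}\sqbracket*{x}^2 = \paren*{(\abs{x} - \epsilon(s(\boldy)))_+}^2$ is a nondecreasing function of $\abs{x}$, and passing to $\abs{\cdot}$ at most doubles a Kolmogorov distance while composing with a nondecreasing function leaves it unchanged, so the Kolmogorov distance between $\rho_{\epsilon(s(\boldy))}\sqbracket*{s(Q(\boldy))}^2$ and $\rho_{\epsilon(s(\boldy))}\sqbracket*{G}^2$ is at most $2C_1/\sqrt{B}$. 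I would then invoke the quantitative coupling characterisation of stochastic order: if two real random variables have c.d.f.'s $F,H$ with $\sup_t\paren*{F(t)-H(t)} \le \delta'$, they admit a coupling under which the first is $\ge$ the second with probability at least $1-\delta'$ (the defect form of Strassen's theorem, realised by a quantile coupling with a $\delta'$ shift). With $\delta' = 2C_1/\sqrt{B}$ this couples $\rho_{\epsilon(s(\boldy))}\sqbracket*{s(Q(\boldy))}^2$ and $\rho_{\epsilon(s(\boldy))}\sqbracket*{G}^2$ off an event $\mathcal{E}_2$ of probability at most $2C_1/\sqrt{B}$; since the coupling from Lemma~\ref{lem:error_per_epoch} already realises $s(P^B(\boldy))$ and $s(Q(\boldy))$ on one space and the extra randomisation is independent, all three variables live together, and on $\mathcal{E}_1 \cap \mathcal{E}_2$ we get $s(P^B(\boldy))^2 \ge \rho_{\epsilon(s(\boldy))}\sqbracket*{s(Q(\boldy))}^2 \ge \rho_{\epsilon(s(\boldy))}\sqbracket*{G}^2$. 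A union bound yields \eqref{eq:stochastic_dominance_of_P_over_g} with $\delta = 1/d^2 + C/\sqrt{B}$ for $C = 2C_1$.

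The two cited inputs carry most of the real content, so within this lemma the one genuinely delicate point is that neither $x\mapsto x^2$ nor $x\mapsto\rho_\epsilon\sqbracket*{x}^2$ is monotone, so ``the c.d.f.'s are close, hence the variables can be coupled closely'' does not by itself transport a comparison of $s(Q(\boldy))$ to one of $s(P^B(\boldy))^2$; the resolution is to route every comparison through $\abs{\cdot}$, on which $\rho_\epsilon\sqbracket*{\cdot}^2$ \emph{is} monotone, while keeping throughout the \emph{same} threshold $\epsilon(s(\boldy))$, a deterministic function of the fixed starting point and never of the random endpoint. The residual checks --- that the Lemma~\ref{lem:error_per_epoch} error term is dominated by $\epsilon(s(\boldy))$, and that $\mathcal D$ is small enough for large $d$ that $\sigma_{\min}^2 > 0$ and $M_3 < \infty$ --- are routine.
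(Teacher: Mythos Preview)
Your proposal is correct and follows essentially the same route as the paper: couple $P^B$ to $Q$ via Lemma~\ref{lem:error_per_epoch} (absorbing the error into the soft threshold), couple $Q$ to the Gaussian via Berry--Esseen, then combine by transitivity of $\succeq_\delta$ through the CDF characterisation. The only cosmetic differences are that the paper applies the two comparisons in the opposite order and, instead of routing through $\abs{\cdot}$, uses that $\rho_\epsilon$ itself is nondecreasing on $\R$ (so it preserves Kolmogorov distance) before squaring via Lemma~\ref{lem:kolmogorov_dist_properties}; noticing this monotonicity of $\rho_\epsilon$ would slightly streamline your argument but changes nothing substantive.
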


\begin{proof}
Fix $\boldy$, and set $W_B \coloneqq \frac{1}{\sigma(\boldy)\sqrt{B}}\sum_{k=1}^B \paren*{\beta_k(\boldy) - \bar{\beta}(\boldy)}$, where $\sigma(\boldy)^2 \coloneqq \Var\braces{\beta(\boldy)}$. The subexponential bound on $\beta(\boldy)$ implies bounds on the 3rd moments, so by Berry-Esseen, its distribution function $F_W$ satisfies
\begin{equation*}
\norm{F_W - \Phi}_\infty \leq \frac{C}{\sqrt{B}}
\end{equation*}
where $\Phi$ is the distribution function of a standard normal random variable, and $C$ is an absolute constant.

Setting $\delta = C/\sqrt{B}$, we have have $W_B \succeq_{\delta} g$ by Lemma \ref{cor:CDF_distance_and_coupling}. Since
\begin{equation*}
s(Q(\boldy)) = s(\boldy) + B\bar{\beta}(\boldy) + \sqrt{B}\sigma(\boldy)\cdot W_B,
\end{equation*}
it is easy to see (Lemma \ref{lem:monotone_transformations}) that
\begin{equation*}
s(Q(\boldy)) \succeq_\delta s(\boldy) + B\bar{\beta}(\boldy) + \sqrt{B}\sigma(\boldy)\cdot g.
\end{equation*}
We next apply Lemma \ref{lem:kolmogorov_dist_properties} and \ref{lem:monotone_transformations} to get
\begin{equation} \label{eq:stochastic_dominance_of_Q_over_g}
\rho_{\epsilon(s(\boldy))}\sqbracket*{s(Q(\boldy))}^2 \succeq_\delta \rho_{\epsilon(s(\boldy))}\sqbracket*{ s(\boldy) + B\bar{\beta}(\boldy) + \sqrt{B}\sigma(\boldy)\cdot g}^2.
\end{equation}

By Lemma \ref{lem:error_per_epoch}, we also have
\begin{equation*}
s(P^B(\boldy))^2 \succeq_{\delta'} \rho_{\epsilon(s(\boldy))}\sqbracket*{s(Q(\boldy))}^2
\end{equation*}
for $\delta' = 1/d^2$. We may combine this with \eqref{eq:stochastic_dominance_of_Q_over_g} using the transitivity of stochastic dominance (Lemma \ref{lem:transitivity_of_stochastic_dominance}), to get the bound we want in \eqref{eq:stochastic_dominance_of_P_over_g}.
\end{proof}

We are now ready to formally define the process $\braces{\hat{s}_k}_k$. Let $L\colon \R\cross\mathcal{Y}\to \R$ be a transition kernel defined by
\begin{equation*}
L(s,\boldy) \coloneqq \rho_{\epsilon(s)}\sqbracket*{b(s) + \sqrt{B}\sigma(\boldy)g},
\end{equation*}
where
\begin{equation} \label{eq:defn_of_b(s)}
b(s) \coloneqq (1+\kappa B/d)s,
\end{equation}
and $0 < \kappa < 1$ a small constant to be determined later, while $\sigma(\boldy)^2 \coloneqq \Var\braces{\beta(\boldy)}$ as before. Again setting $\tau_1 = 0$ for notational convenience, we define
\begin{equation} \label{eq:defn_of_hat_s}
\hat{s}_{k+1} \coloneqq L(\hat{s}_k,\boldy^{(kB\wedge \tau_{2a})}).
\end{equation}

\begin{lemma}[Process-wise stochastic dominance] \label{lem:process_wise_stochastic_dominance}
	Let $\hat{s}_0,\hat{s}_1,\hat{s}_2,\ldots$ be the Markov process defined by the update rule \eqref{eq:defn_of_hat_s}. Then for any positive integer $k$, we have
	\begin{equation*}
	s_{kB \wedge \tau_{2a}}^2 \succeq_{k\delta} \hat{s}_{k\wedge \floor{\tau_{2a}/B}}^2,
	\end{equation*}
	where $\delta = 1/d^2 + C/\sqrt{B}$.
\end{lemma}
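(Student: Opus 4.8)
The plan is to prove the process-wise stochastic dominance by induction on $k$, using the one-step stochastic dominance of Lemma \ref{lem:one_step_stochastic_dominance} as the engine and accumulating the per-step error $\delta = 1/d^2 + C/\sqrt{B}$ additively across steps. The base case $k=0$ is trivial, since $\hat s_0 = s_0$ (we initialize the comparison process at the same point), so $s_0^2 \geq \hat s_0^2$ with probability $1$. For the inductive step, suppose we have a coupling realizing $s_{(k-1)B\wedge\tau_{2a}}^2 \succeq_{(k-1)\delta} \hat s_{(k-1)\wedge\floor{\tau_{2a}/B}}^2$. I would split into two cases depending on whether $\tau_{2a}$ has already occurred by iteration $(k-1)B$: if $(k-1)B \geq \tau_{2a}$, then both indices are frozen and the relation is inherited verbatim with no new error; otherwise the true process takes a genuine epoch step from $\boldy^{((k-1)B)}$, which by Lemma \ref{lem:uniform_radius_bound} lies in the region $\mathcal{D}$ of \eqref{eq:defn_of_D} with high probability (this is where the $r$-control from Section \ref{sec:uniform_control_of_r_k} is silently used — recall we have switched to the projected sequence, so $\abs{r_k^2-1}\leq C\log d/\sqrt d$ holds by fiat).

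The heart of the argument is chaining the one-step dominance onto the inductive hypothesis. Lemma \ref{lem:one_step_stochastic_dominance} gives, conditionally on $\boldy^{((k-1)B)}$,
\begin{equation*}
s(P^B(\boldy^{((k-1)B)}))^2 \succeq_{\delta} \rho_{\epsilon(s)}\sqbracket*{s + B\bar\beta(\boldy^{((k-1)B)}) + \sqrt{B}\sigma(\boldy^{((k-1)B)})g}^2,
\end{equation*}
where $s = s_{(k-1)B}$. I then need to compare the right-hand side with $L(\hat s_{k-1}, \boldy^{((k-1)B)}) = \rho_{\epsilon(\hat s_{k-1})}\sqbracket{b(\hat s_{k-1}) + \sqrt B\sigma(\boldy^{((k-1)B)})g}$. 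Two monotonicity facts are needed here. First, the map $x \mapsto \rho_a[x + \text{const} \cdot g]^2$ applied to the location parameter, composed with the squaring, is monotone in $\abs{x}$ for the relevant regime, so that the inductive hypothesis $s_{(k-1)B}^2 \geq \hat s_{k-1}^2$ (equivalently $\abs{s_{(k-1)B}} \geq \abs{\hat s_{k-1}}$ on the good event) can be pushed through — this is exactly the content of Lemma \ref{lem:monotone_transformations}. Second, I must check that the drift in the comparison process is dominated by the true drift: by Lemma \ref{lem:lower_bound_for_drift}, $\bar\beta(\boldy)/s \geq \bar b_{min}$ on $\mathcal{D}$, and we chose $\kappa$ small enough that $1 + \kappa B/d \leq 1 + \bar b_{min} B/d$, so $\abs{b(\hat s_{k-1})} = (1+\kappa B/d)\abs{\hat s_{k-1}} \leq (1+\bar b_{min}B/d)\abs{\hat s_{k-1}} \leq \abs{\hat s_{k-1} + B\bar\beta}$ with the signs aligned — giving the location-parameter comparison. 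The soft-thresholding radii also need $\epsilon(\hat s_{k-1}) \geq \epsilon(s_{(k-1)B})$ is NOT what we want; rather, since $\abs{s_{(k-1)B}} \geq \abs{\hat s_{k-1}}$ and $\epsilon$ is nondecreasing in $\abs{s}$, we instead use that thresholding by a \emph{larger} amount on the dominated side still preserves dominance (again Lemma \ref{lem:monotone_transformations} / \ref{lem:kolmogorov_dist_properties}), or we simply bound $\epsilon(\hat s_{k-1})$ by $\epsilon(s_{(k-1)B})$ plus the threshold term already absorbed — I will pick whichever of these the cited monotonicity lemmas cleanly supports.

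Finally I would glue the couplings together. Each step introduces a fresh error $\delta$ and, by transitivity of stochastic dominance with error (Lemma \ref{lem:transitivity_of_stochastic_dominance}), the errors add: after $k$ steps the total is $k\delta$, and conditioning on the Markov state $\boldy^{((k-1)B)}$ at each stage is legitimate because $\{\hat s_j\}$ is by construction driven by the true sequence $\boldy^{(jB\wedge\tau_{2a})}$ (so the coupling can be built one epoch at a time, adapted to $\mathcal{F}_{kB}$). The case analysis around $\tau_{2a}$ ensures that once the true process has exited the small-$s$ regime the comparison simply stops, matching the stopped index $k\wedge\floor{\tau_{2a}/B}$ on the right. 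I expect the main obstacle to be the bookkeeping in the inductive step: carefully tracking signs so that "dominance in magnitude" ($\abs{s}\geq\abs{\hat s}$) is genuinely preserved under the combined operation of (i) linear drift amplification, (ii) adding the same Gaussian $g$, and (iii) soft-thresholding by possibly different radii — all while squaring at the end. This requires stating a clean monotonicity lemma for $x \mapsto \E\,[\text{something}](\rho_a[x+cg])^2$ type objects and verifying its hypotheses hold throughout $\mathcal{D}$; the probabilistic content (additivity of errors, Berry–Esseen) is comparatively routine once that is in place.
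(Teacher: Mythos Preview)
Your proposal is essentially correct and follows the same inductive architecture as the paper: induction on $k$, case split on whether $\tau_{2a}$ has occurred, one-step dominance from Lemma~\ref{lem:one_step_stochastic_dominance}, a monotonicity step to pass from $s_{(k-1)B}$ to $\hat s_{k-1}$, and transitivity (Lemma~\ref{lem:transitivity_of_stochastic_dominance}) to accumulate the error $\delta$ per epoch.

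The one place where you wobble --- the treatment of the soft-thresholding radii --- is exactly where the paper is more careful, and your sentence about ``thresholding by a larger amount on the dominated side'' has the direction reversed: since $\epsilon(\cdot)$ is nondecreasing in $|s|$ and $|s_{(k-1)B}|\geq |\hat s_{k-1}|$ on the good event, it is the \emph{dominating} side that carries the larger threshold, which a priori works against you. The paper resolves this cleanly by splitting the one-epoch comparison into two pieces. First, at the \emph{same} point $s = s(\boldy^{((k-1)B)})$, it replaces the true drift $s + B\bar\beta(\boldy)$ by the smaller $b(s) = (1+\kappa B/d)s$ while keeping the threshold $\epsilon(s)$ fixed; this needs only monotonicity of $\rho_\epsilon[x+\sigma g]^2$ in the location $x$ (part~1 of Lemma~\ref{lem:stochastic_dominance_for_truncated_Gaussians}). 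Second, it shows $L(s,\boldy)^2 \succeq L(\hat s,\boldy)^2$ whenever $s^2 \geq \hat s^2$, which moves the location \emph{and} the threshold simultaneously in $s$; this is the nontrivial content of part~2 of Lemma~\ref{lem:stochastic_dominance_for_truncated_Gaussians}, and its proof genuinely requires $d$ large so that the gain in $b(s)$ beats the loss from the larger $\epsilon(s)$. This is precisely the ``clean monotonicity lemma'' you anticipated needing; once it is in hand, the induction and the $\tau_{2a}$ bookkeeping go exactly as you outline.
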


\begin{proof}
	For convenience of notation, we define the auxiliary transitional kernel
	\begin{equation*}
	K(s,\boldy) \coloneqq \rho_{\epsilon(s(\boldy))}\sqbracket*{s(\boldy) + B\bar{\beta}(\boldy) + \sqrt{B}\sigma(\boldy)g}.
	\end{equation*}
	Consider a fixed $\boldy \in \mathcal{D}$. In Lemma \ref{lem:one_step_stochastic_dominance}, we showed that $s(P^B(\boldy))^2 \succeq_{\delta} K(s(\boldy),\boldy)^2$. If we choose $\kappa$ small enough in \eqref{eq:defn_of_b(s)}, then by the drift lower bound in Lemma \ref{lem:lower_bound_for_drift}, replacing the kernel $K$ with $L$ simply reduces the magnitude of the drift while preserving the variance of the Gaussian increment and the magnitude of the soft-thresholding. Using the first part of Lemma \ref{lem:stochastic_dominance_for_truncated_Gaussians} thus gives the bound $K(s(\boldy),\boldy)^2 \succeq L(s(\boldy),\boldy)^2$. This implies through transitivity (Lemma \ref{lem:transitivity_of_stochastic_dominance}) that
	\begin{equation} \label{eq:stochastic_dominance_of_P_over_L}
	s(P^B(\boldy))^2 \succeq_{\delta} L(s(\boldy),\boldy)^2.
	\end{equation}
	
	Treating each epoch of updates as a single step, what we have showed is the stochastic dominance of a single step over a Gaussian increment, conditioned on a fixed starting value for $\boldy$. It is not so easy, however, to conclude that stochastic dominance is preserved when composing multiple steps together. This is because we are working with transition kernels, and not sums of independent random variables. To overcome this, we need to use the second part of Lemma \ref{lem:stochastic_dominance_for_truncated_Gaussians}, which tells us that
	\begin{equation} \label{eq:stochastic_dominance_for_L}
	L(s',\boldy)^2 \succeq L(s,\boldy)^2.
	\end{equation}
	whenever $s^2 \leq (s')^2$.
	
	Let us now prove the claim by induction. The case $k = 0$ is clear since $\hat{s}_0 = s_0$. Now assume that the statement holds for some $k$. Let $\mathcal{G}_k$ be the $\sigma$-algebra generated by $\boldy^{(0)},\ldots \boldy^{(k)}, \hat{s}_0^2,\ldots \hat{s}_k^2$. Condition on $\mathcal{G}_k$ as well as the $1-k\delta$ probability event for which
	\begin{equation*}
	s_{kB \wedge \tau_{2a}}^2 \geq \hat{s}_{k\wedge \floor{\tau_{2a}/B}}^2.
	\end{equation*}
	
	If $\tau_{2a} \leq kB$, then $s_{(k+1)B \wedge \tau_{2a}} = s_{kB \wedge \tau_{2a}}$ and $\hat{s}_{(k+1)\wedge \floor{\tau_{2a}/B}} = \hat{s}_{k\wedge \floor{\tau_{2a}/B}}$, so that
	\begin{equation} \label{eq:inductive_statement_to_be_proved}
	s_{(k+1)B \wedge \tau_{2a}}^2 \geq \hat{s}_{(k+1)\wedge \floor{\tau_{2a}/B}}^2
	\end{equation}
	on the same event. Otherwise, since $\boldy^{(kB)} \in \mathcal{D}$, we have
	\begin{equation*}
	\hat{s}_{k+1}^2 = L(\hat{s}_k,\boldy^{(kB)})^2  \preceq L(s_{kB},\boldy^{(kB)})^2 \preceq_{\delta} s\paren*{P^B(\boldy^{(kB)})}^2 = s_{(k+1)B}^2.
	\end{equation*}
	Here, the first dominance bound follows from \eqref{eq:stochastic_dominance_for_L}, while the second follows from \eqref{eq:stochastic_dominance_of_P_over_L}. This means that we can construct a coupling of the update kernels such that $\hat{s}_{k+1}^2 \leq s_{(k+1)B}^2$ with conditional probability at least $1-\delta$.
	
	If $\tau_{2a} \geq (k+1)B$, then this immediately implies that \eqref{eq:inductive_statement_to_be_proved} holds. On the other hand, if $kB < \tau_{2a} < (k+1)B$, we have $\floor{\tau_{2a}/B} = k$ and $s_{kB}^2 < \gamma_1^2$, so that
	\begin{equation*}
	\hat{s}_{(k+1)\wedge \floor{\tau_{2a}/B}}^2 = \hat{s}_k^2 \leq s_{kB}^2 < \gamma_1^2 \leq s_{\tau_{2a}}^2 = s_{(k+1)B \wedge \tau_{2a}}^2.
	\end{equation*}
	As such, the statement also holds for $k+1$.
\end{proof}

\section{Phase 2a: Small ball probability argument via Paley-Zygmund} \label{sec:phase2a_pt2}

Recall that our goal is to obtain a small ball probability bound for $s_k^2$, for $k \gtrsim d\log d$, thereby proving Theorem \ref{thm:bound_tau_2a}. By the stochastic dominance argument in the last section, it suffices to obtain such a bound for $\hat{s}_k^2$, with the appropriate time rescaling. This is the plan for this section, and we start by observing the following general recursive bounds for moments of adapted sequences.

\begin{lemma}[Recursive moment bounds for adapted sequences] \label{lem:martingale_orthogonality}
	Let $S_1,S_2,\ldots$ be a process adapted to the filtration $\braces{\mathcal{G}_t}$. Then we have
	\begin{equation} \label{eq:general_2nd_moment_recursive_bound}
	\E\braces{S_{t+1}^2} = \E\braces{ \E\braces{S_{t+1}~\vline~\mathcal{G}_t})^2} + \E\braces{(S_{t+1}-\E\braces{S_{t+1}~\vline~\mathcal{G}_t})^2}.
	\end{equation}
	
	\begin{equation} \label{eq:general_4th_moment_recursive_bound}
	\E\braces{S_{t+1}^4}^{1/2} \leq \E\braces{\E\braces{S_{t+1}~\vline~\mathcal{G}_t}^4}^{1/2} + 4\E\braces{(S_{t+1}-\E\braces{S_{t+1}~\vline~\mathcal{G}_t})^4}^{1/2}.
	\end{equation}
\end{lemma}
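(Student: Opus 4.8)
The plan is to derive both identities from the orthogonal decomposition
\[
S_{t+1} = A + Z, \qquad A \coloneqq \E\braces{S_{t+1}~\vline~\mathcal{G}_t}, \qquad Z \coloneqq S_{t+1} - A,
\]
in which $A$ is $\mathcal{G}_t$-measurable and $\E\braces{Z~\vline~\mathcal{G}_t} = 0$. (We assume throughout that $S_{t+1} \in L^4$, as both bounds are otherwise vacuous.) For \eqref{eq:general_2nd_moment_recursive_bound} I would expand $S_{t+1}^2 = A^2 + 2AZ + Z^2$ and take expectations; the cross term vanishes since $\E\braces{AZ} = \E\braces{A\,\E\braces{Z~\vline~\mathcal{G}_t}} = 0$ by the tower property together with the $\mathcal{G}_t$-measurability of $A$, and what remains is exactly $\E\braces{A^2} + \E\braces{Z^2}$, i.e. the claimed identity.

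For the fourth-moment bound \eqref{eq:general_4th_moment_recursive_bound}, I would expand
\[
S_{t+1}^4 = A^4 + 4A^3 Z + 6A^2 Z^2 + 4A Z^3 + Z^4
\]
and observe that $\E\braces{A^3 Z} = \E\braces{A^3\,\E\braces{Z~\vline~\mathcal{G}_t}} = 0$. This is the only place the martingale-type structure is used, and it is precisely what lets the coefficient of $\E\braces{A^4}^{1/2}$ be $1$ rather than something larger (a bare Minkowski argument in $L^4$ would force a coefficient strictly above $1$). For the two surviving cross terms I would apply Cauchy--Schwarz, giving $\E\braces{A^2 Z^2} \le \E\braces{A^4}^{1/2}\E\braces{Z^4}^{1/2}$, and H\"older with exponents $4$ and $4/3$, giving $\E\braces{A Z^3} \le \E\braces{A^4}^{1/4}\E\braces{Z^4}^{3/4}$. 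Abbreviating $a \coloneqq \E\braces{A^4}^{1/4}$ and $z \coloneqq \E\braces{Z^4}^{1/4}$, this yields $\E\braces{S_{t+1}^4} \le a^4 + 6a^2 z^2 + 4a z^3 + z^4$.

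The final step is a short algebraic estimate: $a^4 + 6a^2 z^2 + 4a z^3 + z^4 \le (a^2 + 4 z^2)^2 = a^4 + 8a^2 z^2 + 16 z^4$, because after cancelling $a^4$ and dividing by $z^2$ (the case $z=0$ being immediate) the inequality reads $4az \le 2a^2 + 15 z^2$, which follows from the AM--GM bound $4az \le 2a^2 + 2z^2$. Taking square roots gives $\E\braces{S_{t+1}^4}^{1/2} \le a^2 + 4z^2 = \E\braces{A^4}^{1/2} + 4\E\braces{Z^4}^{1/2}$, which is \eqref{eq:general_4th_moment_recursive_bound}. I expect no genuine obstacle here: the argument is elementary, and the only points requiring minor care are the standing $L^4$ assumption and tracking which conditional cross terms vanish; the constant $4$ is simply whatever the AM--GM step delivers and is not claimed to be optimal.
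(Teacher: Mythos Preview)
Your proof is correct and follows essentially the same route as the paper: both expand $(A+Z)^4$, kill the $A^3Z$ term via the tower property, and bound $\E\braces{A^2Z^2}$ by Cauchy--Schwarz. The only cosmetic difference is in the $AZ^3$ term, where you apply H\"older with exponents $(4,4/3)$ to get $az^3$, whereas the paper writes $Z^3A = Z^2\cdot(ZA)$ and uses the elementary bound $xy \le \tfrac12 x^2 + \tfrac12 y^2$ followed by Cauchy--Schwarz; both estimates land comfortably inside $(a^2+4z^2)^2$.
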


\begin{proof}
	The first equation is standard. To prove the second, we expand and use martingale orthogonality to write
	\begin{align} \label{eq:martingale_orthogonality}
	\E\braces{S_{t+1}^4} &= \E\braces{(S_{t+1}-\E\braces{S_{t+1}~\vline~\mathcal{G}_t})^4} + \E\braces{\E\braces{S_{t+1}~\vline~\mathcal{G}_t}^4} + 6\E\braces{(S_{t+1}-\E\braces{S_{t+1}~\vline~\mathcal{G}_t})^2\E\braces{S_{t+1}~\vline~\mathcal{G}_t}^2} \nonumber \\
	& \quad\quad\quad\quad + 4\E\braces{(S_{t+1}-\E\braces{S_{t+1}~\vline~\mathcal{G}_t})^3\E\braces{S_{t+1}~\vline~\mathcal{G}_t}}.
	\end{align}
	By Cauchy-Schwarz, we have
	\begin{equation*}
	\E\braces{(S_{t+1}-\E\braces{S_{t+1}~\vline~\mathcal{G}_t})^2\E\braces{S_{t+1}~\vline~\mathcal{G}_t}^2} \leq \E\braces{(S_{t+1}-\E\braces{S_{t+1}~\vline~\mathcal{G}_t})^4}^{1/2} \cdot \E\braces{\E\braces{S_{t+1}~\vline~\mathcal{G}_t}^4}^{1/2}.
	\end{equation*}
	Furthermore, the third term can be bounded as follows:
	\begin{align*}
	\E\braces{(S_{t+1}-\E\braces{S_{t+1}~\vline~\mathcal{G}_t})^3\E\braces{S_{t+1}~\vline~\mathcal{G}_t}} & =  \E\braces{(S_{t+1}-\E\braces{S_{t+1}~\vline~\mathcal{G}_t})^2 \cdot \paren*{(S_{t+1}-\E\braces{S_{t+1}~\vline~\mathcal{G}_t})\E\braces{S_{t+1}~\vline~\mathcal{G}_t}}} \\
	& \leq \frac{1}{2}\E\braces{(S_{t+1}-\E\braces{S_{t+1}~|~\mathcal{G}_t})^4} + \frac{1}{2}\E\braces*{(S_{t+1}-\E\braces{S_{t+1}~|~\mathcal{G}_t})^2\E\braces{S_{t+1}~|~\mathcal{G}_t}^2}.
	\end{align*}

	Plugging these into the original equation, we get
	\begin{equation*}
	\E\braces{S_{t+1}^4} \leq \paren*{\E\braces{\E\braces{S_{t+1}~\vline~\mathcal{G}_t})^4}^{1/2} + 4\E\braces{(S_{t+1}-\E\braces{X_{t+1}~\vline~\mathcal{G}_t})^4}^{1/2} }^2
	\end{equation*}
	as we wanted.
\end{proof}

In order to apply these bounds to our sequence $\braces{\hat{s}_k}_k$, we will need estimates of the moments of each increment.

\begin{lemma}[Moment bounds for $\hat{s}_k$ increments]
	We have the following:
	\begin{equation} \label{eq:lower_bound_for_2nd_moment}
	\E\braces*{\paren*{\hat{s}_{k+1} - \E\braces*{\hat{s}_{k+1}~\vline~\mathcal{F}_{kB}}}^2 ~\vline~\mathcal{F}_{kB}} \geq \frac{C_1B}{d^2}
	\end{equation}
	\begin{equation} \label{eq:upper_bound_for_4th_moment}
	\E\braces*{\paren*{\hat{s}_{k+1} - \E\braces*{\hat{s}_{k+1}~\vline~\mathcal{F}_{kB}}}^4 ~\vline~\mathcal{F}_{kB}}^{1/2} \leq \frac{C_2B}{d^2}
	\end{equation}
\end{lemma}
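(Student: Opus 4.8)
The plan is to compute both moments directly from the definition of $\hat{s}_{k+1} = L(\hat{s}_k, \boldy^{(kB \wedge \tau_{2a})})$, where $L(s,\boldy) = \rho_{\epsilon(s)}[b(s) + \sqrt{B}\sigma(\boldy)g]$ with $g \sim \mathcal{N}(0,1)$ independent of $\mathcal{F}_{kB}$. Conditioned on $\mathcal{F}_{kB}$, the quantities $\hat{s}_k$, $\epsilon(\hat{s}_k)$, $b(\hat{s}_k)$, and $\sigma(\boldy^{(kB\wedge\tau_{2a})})$ are all deterministic, so $\hat{s}_{k+1}$ is simply a fixed soft-thresholding of a Gaussian random variable with known mean $\mu := b(\hat{s}_k)$, known standard deviation $\varsigma := \sqrt{B}\sigma(\boldy^{(kB\wedge\tau_{2a})})$, and known threshold $a := \epsilon(\hat{s}_k)$. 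The whole lemma thus reduces to two elementary estimates about the random variable $\rho_a[\mu + \varsigma g]$: a lower bound on its conditional variance, and an upper bound on its conditional fourth central moment.

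For the lower bound \eqref{eq:lower_bound_for_2nd_moment}: since soft-thresholding is a $1$-Lipschitz contraction, one might worry it kills the variance, but it cannot contract the variance below that contributed by any sub-event. First I would note $\varsigma^2 = B\sigma(\boldy)^2 \asymp B/d^2$, using that $\sigma(\boldy)^2 = \Var\{\beta(\boldy)\}$ is bounded above and below by absolute constants for $\boldy$ in the relevant region $\mathcal{D}$ (this follows from the closed-form computations of Appendix \ref{sec:state_space_calculations} together with the uniform radius control of Lemma \ref{lem:uniform_radius_bound}; in $\mathcal{D}$ we have $r \approx 1$ and $|s| < \gamma_1$, so the fluctuation variance is non-degenerate). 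The threshold $a = \epsilon(\hat{s}_k)$ is of order $B^2\log d / d^2 \cdot (|\hat s_k| \vee \sqrt{\log d / B})$, which is much smaller than $\varsigma \asymp \sqrt{B}/d$ once $B = d^{2/3}\log d$ (indeed $\epsilon(s)/\varsigma \lesssim B^{3/2}\log d / d = d^{-1/2}\log^{5/2} d \to 0$), so the soft-thresholding removes only a negligible fraction of the Gaussian's mass and distorts the variance by at most a $(1+o(1))$ factor. A clean way to make this rigorous without casework on $\mu$ is to restrict to a favorable event: choose two disjoint intervals $I_-, I_+$ of length $\varsigma$ each at distance $\Theta(\varsigma)$ on either side of $\mu$ but outside $[-a,a]$ (possible since $a \ll \varsigma$), each carrying probability bounded below by an absolute constant; on $\{g \in I_\pm\}$ the map $\rho_a[\mu + \varsigma g]$ is just a translate of $\varsigma g$, so the conditional variance is at least $c\varsigma^2 \geq C_1 B/d^2$.

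For the upper bound \eqref{eq:upper_bound_for_4th_moment}: again using that $\rho_a[\cdot]$ is $1$-Lipschitz, $|\hat s_{k+1} - \E\{\hat s_{k+1} \mid \mathcal{F}_{kB}\}| \le |\varsigma g - \E\{\varsigma g\}| + (\text{distortion from thresholding})$; since thresholding is a contraction, the fourth central moment of $\rho_a[\mu + \varsigma g]$ is at most the fourth central moment of $\mu + \varsigma g$, which equals $3\varsigma^4 \asymp (B/d^2)^2$. Taking the square root gives the claimed $\le C_2 B/d^2$. I would state this as: $\E\{(\hat s_{k+1} - \E\{\hat s_{k+1}\mid\mathcal{F}_{kB}\})^4 \mid \mathcal{F}_{kB}\} \le \E\{(\rho_a[\mu+\varsigma g] - \E[\rho_a[\mu+\varsigma g]])^4\} \le \E\{(\varsigma g - \E[\varsigma g])^4 \cdot (\text{Lipschitz bookkeeping})\}$ — the cleanest phrasing being that for any $1$-Lipschitz $\phi$ and any random variable $Z$, $\Var\{\phi(Z)\} \le \Var\{Z\}$ and more generally $\E\{(\phi(Z)-\E\phi(Z))^4\}^{1/4} \le 2\,\E\{(Z - \E Z)^4\}^{1/4}$ (via a centering-and-Jensen argument), applied with $Z = \mu + \varsigma g$.

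The main obstacle is the lower bound \eqref{eq:lower_bound_for_2nd_moment}, specifically confirming that $\sigma(\boldy)^2$ is bounded below by an absolute constant uniformly over the region $\mathcal{D}$ where the process lives during Phase 2a, and that the soft-threshold radius $\epsilon(\hat s_k)$ is genuinely negligible relative to $\sqrt{B}\sigma(\boldy)$ for \emph{all} admissible values of $\hat s_k$ (including the degenerate case $\hat s_k$ small, where $\epsilon(\hat s_k) \asymp B^2\log d/d^2 \cdot \sqrt{\log d/B} = B^{3/2}\log^{3/2}d/d^2$, still $\ll \sqrt{B}/d$). The upper bound is essentially immediate from Lipschitzness of $\rho_a$ and Gaussian moment formulas, and the verification that $\varsigma^2 = B\sigma(\boldy)^2 \lesssim B/d^2$ is again just the upper bound on $\Var\{\beta(\boldy)\}$. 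Both constants $C_1, C_2$ will be absolute, depending only on the (absolute) upper and lower bounds for $\sigma(\boldy)^2$ on $\mathcal{D}$ and on $\gamma_1$.
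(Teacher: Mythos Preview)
Your treatment of the upper bound \eqref{eq:upper_bound_for_4th_moment} is correct and matches the paper: both use that $\rho_a$ is a contraction (the paper's Lemma~\ref{lem:4th_moment_of_contraction}) to bound the fourth central moment of $\rho_a[\mu+\varsigma g]$ by a constant times that of $\varsigma g$, which is $3\varsigma^4 \asymp (B/d^2)^2$.

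There is, however, a genuine gap in your lower-bound argument. Your central claim is that the threshold $a=\epsilon(\hat{s}_k)$ is always negligible compared to the scale $\varsigma=\sqrt{B}\sigma(\boldy)\asymp\sqrt{B}/d$, and you justify this by writing $\epsilon(s)/\varsigma \lesssim B^{3/2}\log d/d = d^{-1/2}\log^{5/2}d$. This last equality is an arithmetic slip: with $B=d^{2/3}\log d$ one has $B^{3/2}=d\log^{3/2}d$, so in fact $B^{3/2}\log d/d = \log^{5/2}d$, which diverges. Concretely, when $|\hat{s}_k|$ is of constant order, $\epsilon(\hat{s}_k)\asymp \log^3 d/d^{2/3}$ while $\varsigma\asymp \log^{1/2}d/d^{2/3}$, so $a/\varsigma\asymp\log^{5/2}d\to\infty$. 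In that regime your two intervals ``outside $[-a,a]$ and at distance $\Theta(\varsigma)$ from $\mu$'' cannot be chosen on the grounds that $a\ll\varsigma$; that premise is simply false.

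The paper repairs this with a two-case split on $|\hat{s}_k|$. When $|\hat{s}_k|\leq 1/\log^3 d$, it checks (correctly) that $\epsilon\ll\varsigma$ and uses a triangle-inequality comparison of first absolute moments to show the variance drops by at most $O(\epsilon)$. When $|\hat{s}_k|\geq 1/\log^3 d$, it does \emph{not} compare $\epsilon$ to $\varsigma$; instead it observes that the \emph{mean} $\mu=b(\hat{s}_k)\approx\hat{s}_k$ dwarfs $\epsilon$ (since $\epsilon/\mu\asymp B^2\log d/d^2\to 0$) and indeed $(\mu-\epsilon)/\varsigma\to\infty$, so almost all of the Gaussian mass lies in the region $\{X\geq\epsilon\}$ where $\rho_\epsilon$ is an isometry, and a symmetrization argument recovers essentially the full variance. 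Your interval construction can be salvaged with exactly this casework: in the second regime the intervals $I_\pm$ around $\mu$ lie outside $[-a,a]$ not because $a$ is small relative to $\varsigma$, but because $\mu$ is far from $[-a,a]$ on the scale of $\varsigma$. You need to say this explicitly; the blanket assertion $a\ll\varsigma$ is false and the argument as written does not go through.
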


\begin{proof}
	In this proof we shall, for convenience, denote $X \coloneqq b(\hat{s}_k) + \sqrt{B}\sigma(\boldy^{(kB)})g$ and $\epsilon \coloneqq \epsilon(\hat{s}_k)$. Writing out the definition \eqref{eq:defn_of_hat_s}, we then have $\hat{s}_{k+1} = \rho_{\epsilon}\sqbracket*{X}$. Using the definition of $X$ and the fact that $\boldy^{(kB)} \in \mathcal{D}$, we compute
	\begin{equation} \label{eq:bound_for_var_X}
	\Var\braces*{X~\vline~\mathcal{F}_{kB}} = \Var\braces*{\sqrt{B}\sigma(\boldy^{(kB)})g} = B\sigma(\boldy^{(kB)})^2 \geq \frac{CB}{d^2}.
	\end{equation}
	We need to show that soft-thresholding $X$ does not decrease its variance by two much, and will consider two cases depending on the value of $\hat{s}_k$. We shall suppose WLOG that $\hat{s}_k > 0$.
	
	First, suppose $\hat{s}_k \leq \frac{1}{\log^3 d}$. Then
	\begin{align*}
	\Var\braces*{\rho_{\epsilon}\sqbracket*{X}~\vline~\mathcal{F}_{kB}}^{1/2} & \geq \E\braces*{\abs*{\rho_{\epsilon}\sqbracket*{X}- \E\rho_{\epsilon}\sqbracket*{X}} ~\vline~\mathcal{F}_{kB}} \\
	& \geq \E\braces*{\abs*{X-\E X} - \abs*{X - \rho_{\epsilon}\sqbracket{X}} - \abs*{\E X - \E\rho_{\epsilon}\sqbracket{X}}~\vline~ \mathcal{F}_{kB}} \\
	& \geq \E\braces*{\abs*{X-\E X} ~\vline~ \mathcal{F}_{kB}} - 2\epsilon \\
	& = \sqrt{\frac{2}{\pi}}\Var\braces*{X~\vline~\mathcal{F}_{kB}}^{1/2} - 2\epsilon.
	\end{align*}
	Plugging our assumption on $\hat{s}_k$ into \eqref{eq:defn_of_epsilon(s)}, we get
	\begin{equation*}
	\epsilon \leq \frac{B^2\log d}{d^2} \frac{1}{\log^3 d} = \frac{1}{d^{2/3}} \ll \frac{\log^{1/2} d}{d^{2/3}} = \frac{\sqrt{B}}{d}.
	\end{equation*}
	By \eqref{eq:bound_for_var_X}, the quantity on the right hand side is a lower bound for $\Var\braces*{X~\vline~\mathcal{F}_{kB}}$, giving us what we want.
	
	Now assume instead that $\hat{s}_k \geq \frac{1}{\log^3 d}$. Let $X'$ be an independent copy of $X$. Using a well-known formula for variance, we have
	\begin{align} \label{eq:truncated_var_working}
	\Var\braces*{\rho_{\epsilon}\sqbracket*{X}~\vline~\mathcal{F}_{kB}} & = \frac{1}{2}\E\braces*{\paren*{\rho_{\epsilon}\sqbracket*{X} - \rho_\epsilon\sqbracket{X'}}^2~\vline~\mathcal{F}_{kB}} \nonumber \\
	& \geq \frac{1}{2}\E\braces*{\paren*{\rho_{\epsilon}\sqbracket*{X} - \rho_\epsilon\sqbracket{X'}}^2 \cdot \mathbf{1}\braces{ X, X' \geq \epsilon}~\vline~\mathcal{F}_{kB}} \nonumber \\
	& = \frac{1}{2}\E\braces*{\paren*{X - X'}^2 \cdot \mathbf{1}\braces{ X, X' \geq \epsilon}~\vline~\mathcal{F}_{kB}} \nonumber \\
	& = \frac{B}{2}\sigma(\boldy^{(kB)})^2 \cdot \E\braces*{(g-g')^2 \cdot \mathbf{1}\braces*{g,g' \geq \frac{-b(\hat{s}_k) + \epsilon(\hat{s}_k)}{\sqrt{B}\sigma(\boldy^{(kB)})} }},
	\end{align}
	where $g$ and $g'$ are independent standard normal random variables.
	
	Observe that
	\begin{align*}
	\frac{-b(\hat{s}_k) + \epsilon(\hat{s}_k)}{\sqrt{B}\sigma(\boldy^{(kB)})} & = \frac{\paren*{-(1+\kappa B /d) + B^2\log d/d^2} \hat{s}_k}{\sqrt{B}\sigma(\boldy^{(kB)})} \\
	& \leq  - \frac{\hat{s}_k}{\sqrt{B}\sigma(\boldy^{(kB)})} \\
	& \lesssim  - \frac{d}{\sqrt{B}\log^3 d},
	\end{align*}
	which tends to $- \infty$, so the expectation on the right hand side of \eqref{eq:truncated_var_working} tends to $\E\braces{(g-g')^2} = 2$. As such, the entire quantity on the right hand side is bounded from below by $CB/d^2$.

	Next, to obtain \eqref{eq:upper_bound_for_4th_moment}, we simply apply Lemma \ref{lem:4th_moment_of_contraction} to get
	\begin{align*}
	\E\braces*{\paren*{\hat{s}_{k+1} - \E\braces*{\hat{s}_{k+1}~\vline~\mathcal{F}_{kB}}}^4 ~\vline~\mathcal{F}_{kB}} & \leq 8 \E\braces*{\paren*{b(\hat{s}_k) + \sqrt{B}\sigma(\boldy^{(kB)})g - \E\braces{b(\hat{s}_k) + \sqrt{B}\sigma(\boldy^{(kB)})~\vline~\mathcal{F}_{kB}}}^4\vline\mathcal{F}_{kB}} \\
	& = 8B^2\sigma(\boldy^{(kB)})^4\E\braces{g^4}.
	\end{align*}
	By our constraint that $\boldy^{(kB)} \in \mathcal{D}$, we have $\sigma(\boldy^{kB})^2 \leq \frac{C}{d^2}$, thereby giving the bound we want.
\end{proof}

\begin{lemma}[Recursive moment bounds for $\hat{s}_k$ process] \label{lem:recursive_moment_bounds}
	We have the recursive formulas
	\begin{equation} \label{eq:2nd_moment_recursive_bound}
	\E\braces{\hat{s}_{k+1}^2} \geq \paren*{1 + \frac{\kappa B}{d}\paren*{1-\frac{C_1}{\log^2 d}}}^2\E\braces{\hat{s}_k^2} + \frac{C_2B}{d^2}
	\end{equation}
	\begin{equation} \label{eq:4th_moment_recursive_bound}
	\E\braces{\hat{s}_{k+1}^4}^{1/2} \leq \paren*{1 + \frac{\kappa B}{d}}^2\E\braces{\hat{s}_k^4}^{1/2} + \frac{C_3B}{d^2}.
	\end{equation}
\end{lemma}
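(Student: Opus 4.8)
The plan is to feed the recursive moment identities of Lemma~\ref{lem:martingale_orthogonality} the sequence $\hat s_0,\hat s_1,\hat s_2,\dots$, taking $\braces{\mathcal{G}_k}$ to be the filtration that makes $\hat s_k$ and $\boldy^{(kB\wedge\tau_{2a})}$ measurable (generated by $\mathcal{F}_{kB}$ together with the auxiliary standard Gaussians $g_1,\dots,g_k$ driving $\hat s_1,\dots,\hat s_k$), and re-indexing so that $\tau_1=0$. Writing $\mu_k\coloneqq\E\braces{\hat s_{k+1}\st\mathcal{G}_k}$, identity \eqref{eq:general_2nd_moment_recursive_bound} becomes $\E\braces{\hat s_{k+1}^2}=\E\braces{\mu_k^2}+\E\braces{(\hat s_{k+1}-\mu_k)^2}$, and \eqref{eq:general_4th_moment_recursive_bound} gives $\E\braces{\hat s_{k+1}^4}^{1/2}\leq\E\braces{\mu_k^4}^{1/2}+4\,\E\braces{(\hat s_{k+1}-\mu_k)^4}^{1/2}$. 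The conditional-fluctuation summands are precisely what the preceding lemma controls: taking expectations in \eqref{eq:lower_bound_for_2nd_moment} and \eqref{eq:upper_bound_for_4th_moment} gives $\E\braces{(\hat s_{k+1}-\mu_k)^2}\geq C_1 B/d^2$ and $\E\braces{(\hat s_{k+1}-\mu_k)^4}^{1/2}\leq C_2 B/d^2$, which will become the additive terms in \eqref{eq:2nd_moment_recursive_bound}--\eqref{eq:4th_moment_recursive_bound}. Everything therefore reduces to estimating the conditional mean $\mu_k$.

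By \eqref{eq:defn_of_hat_s} we have $\hat s_{k+1}=\rho_{\epsilon(\hat s_k)}\sqbracket{b(\hat s_k)+\sqrt{B}\,\sigma(\boldy^{(kB\wedge\tau_{2a})})\,g_{k+1}}$ with $g_{k+1}$ independent of $\mathcal{G}_k$ and both $b(\hat s_k)$ and $\sigma(\boldy^{(kB\wedge\tau_{2a})})$ being $\mathcal{G}_k$-measurable, so $\mu_k=h(b(\hat s_k))$, where $h$ is the ($\mathcal{G}_k$-measurable) function obtained by integrating out the Gaussian increment. Now $\rho_a$ is odd, nondecreasing, $1$-Lipschitz, vanishes at $0$, and satisfies $\abs{\rho_a\sqbracket{x}-x}\leq a$; since the Gaussian increment is symmetric, $h$ inherits oddness, monotonicity, $1$-Lipschitzness, $h(0)=0$, and $\abs{h(w)-w}\leq\epsilon(\hat s_k)$ for all $w$. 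This yields two bounds on $\mu_k$: since $h(0)=0$ and $h$ is $1$-Lipschitz, $\abs{\mu_k}\leq\abs{b(\hat s_k)}=(1+\kappa B/d)\abs{\hat s_k}$; and by the reverse triangle inequality $\abs{\mu_k}\geq\abs{b(\hat s_k)}-\epsilon(\hat s_k)$, hence $\mu_k^2\geq b(\hat s_k)^2-2\abs{b(\hat s_k)}\,\epsilon(\hat s_k)$. The first bound, substituted into the $4$th-moment inequality, immediately gives $\E\braces{\mu_k^4}^{1/2}\leq(1+\kappa B/d)^2\E\braces{\hat s_k^4}^{1/2}$, which combined with the increment bound is exactly \eqref{eq:4th_moment_recursive_bound} (with $C_3=4C_2$ after relabelling).

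The $2$nd-moment estimate is the main obstacle, because there the soft-thresholding \emph{shrinks} $\mu_k$ below $b(\hat s_k)$ and we must show this degrades the multiplicative coefficient only by a factor $1-O(1/\log^2 d)$. Taking expectations in the second bound on $\mu_k$,
\[
\E\braces{\mu_k^2}\;\geq\;(1+\kappa B/d)^2\,\E\braces{\hat s_k^2}\;-\;2(1+\kappa B/d)\,\E\braces{\abs{\hat s_k}\,\epsilon(\hat s_k)},
\]
so it suffices to control the cross term. From $\epsilon(s)\leq\frac{CB^2\log d}{d^2}\paren*{\abs{s}+\sqrt{\log d/B}}$ (a crude form of \eqref{eq:defn_of_epsilon(s)}) and $\E\braces{\abs{\hat s_k}}\leq\E\braces{\hat s_k^2}^{1/2}$,
\[
\E\braces{\abs{\hat s_k}\,\epsilon(\hat s_k)}\;\leq\;\frac{CB^2\log d}{d^2}\,\E\braces{\hat s_k^2}\;+\;\frac{CB^{3/2}\log^{3/2}d}{d^2}\,\E\braces{\hat s_k^2}^{1/2},
\]
and Young's inequality splits the last term into a piece $\lambda\,\E\braces{\hat s_k^2}$, with $\lambda$ of order $\frac{\kappa B}{d\log^2 d}$, plus a constant of order $\frac{1}{\lambda}\cdot\frac{B^3\log^3 d}{d^4}$. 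Here the choice $B=d^{2/3}\log d$ is exactly what is needed: one checks $\frac{B^2\log d}{d^2}=O\paren*{\log^3 d/d^{2/3}}=o\paren*{1/\log^2 d}$, so for $d$ large the two coefficient contributions together do not exceed $\frac{\kappa B}{d}\cdot\frac{C_1}{\log^2 d}\,\E\braces{\hat s_k^2}$, which is absorbed into $\paren*{1+\frac{\kappa B}{d}\paren*{1-\frac{C_1}{\log^2 d}}}^2\E\braces{\hat s_k^2}$ on expanding the square; and one checks $\frac{1}{\lambda}\cdot\frac{B^3\log^3 d}{d^4}=o(B/d^2)$, so the leftover constant is dominated by, say, half of the $C_1 B/d^2$ supplied by \eqref{eq:lower_bound_for_2nd_moment}. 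Assembling the three contributions to $\E\braces{\hat s_{k+1}^2}=\E\braces{\mu_k^2}+\E\braces{(\hat s_{k+1}-\mu_k)^2}$ and relabelling constants gives \eqref{eq:2nd_moment_recursive_bound}, which completes the proof.
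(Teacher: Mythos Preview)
Your proof is correct. The fourth-moment argument is essentially the paper's: both exploit that soft-thresholding is an odd $1$-Lipschitz contraction to get $\abs{\mu_k}\leq\abs{b(\hat s_k)}$, and then feed this together with \eqref{eq:upper_bound_for_4th_moment} into \eqref{eq:general_4th_moment_recursive_bound}.

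For the second-moment lower bound, however, you take a genuinely different route. The paper works \emph{pointwise} on $\hat s_k$, splitting into three regimes ($\abs{\hat s_k}\geq d^{-1/3}$; $\log^4 d/d^{2/3}\leq\abs{\hat s_k}\leq d^{-1/3}$; $\abs{\hat s_k}\leq\log^4 d/d^{2/3}$) and in each case comparing $\epsilon(\hat s_k)$ directly against $\frac{\kappa B}{d}\abs{\hat s_k}$ (large $\abs{\hat s_k}$) or against an absolute constant that can be absorbed into the additive $CB/d^2$ term (small $\abs{\hat s_k}$). You instead pass to expectations immediately, bound the cross term $\E\{\abs{\hat s_k}\epsilon(\hat s_k)\}$ using $\epsilon(s)\leq\frac{CB^2\log d}{d^2}(\abs{s}+\sqrt{\log d/B})$, and then balance the $\E\{\hat s_k^2\}^{1/2}$ piece with Young's inequality. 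Your computation is sound: with $B=d^{2/3}\log d$ one checks $\frac{B^2\log d}{d^2}=\frac{\log^3 d}{d^{2/3}}=o\!\paren*{\frac{B}{d\log^2 d}}$ and $\frac{1}{\lambda}\cdot\frac{B^3\log^3 d}{d^4}=O\!\paren*{\frac{\log^7 d}{d^{5/3}}}=o\!\paren*{\frac{B}{d^2}}$, so the multiplicative loss fits inside $\paren*{1+\frac{\kappa B}{d}\paren*{1-\frac{C_1}{\log^2 d}}}^2$ and the additive leftover is swallowed by the variance term from \eqref{eq:lower_bound_for_2nd_moment}. Your approach is more compact and avoids the case split; the paper's version is more elementary in that it never invokes Young, and makes explicit which range of $\hat s_k$ is responsible for the $1/\log^2 d$ degradation versus the additive loss.
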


\begin{proof}
	The second inequality is easier to prove, so we shall start with this. We may use Lemma \ref{lem:4th_moment_of_contraction} to get
	\begin{align*}
	\abs*{\E\braces*{\hat{s}_{k+1}~\vline~\mathcal{F}_{kB}}} & = \abs*{\E\braces*{\rho_{\epsilon(\hat{s}_k)}\sqbracket{b(\hat{s}_k) + \sqrt{B}\sigma(\boldy^{(kB)})g} ~\vline~ \mathcal{F}_{kB}}} \\
	& \leq \abs*{b(\hat{s}_k)} \\
	& = \paren*{1+\frac{\kappa B}{d}}\abs*{\hat{s}_k}.
	\end{align*}
	Plugging this bound together with \eqref{eq:upper_bound_for_4th_moment} into \eqref{eq:general_4th_moment_recursive_bound} gives us \eqref{eq:4th_moment_recursive_bound}.
	
	Next, denoting $X \coloneqq b(\hat{s}_k) + \sqrt{B}\sigma(\boldy^{(kB)})g$ and $\epsilon \coloneqq \epsilon(\hat{s}_k)$ as in the previous lemma, observe that
	\begin{align} \label{eq:from_contraction_to_subtraction}
	\abs*{\E\braces*{\hat{s}_{k+1}~\vline~\mathcal{F}_{kB}}}& =\abs*{\E\braces*{\rho_{\epsilon}\sqbracket{X} ~\vline~ \mathcal{F}_{kB}}} \nonumber \\
	& \geq \paren*{\abs*{\E\braces*{X ~\vline~ \mathcal{F}_{kB}}} - \abs*{\E\braces*{\rho_{\epsilon}\sqbracket{X} - X ~\vline~ \mathcal{F}_{kB}}}}_+ \nonumber \\
	& \geq \paren*{\paren*{1+\frac{\kappa B}{d}}\abs*{\hat{s}_k} - \epsilon(\hat{s}_k)}_+.
	\end{align}
	
	It remains to compare the relative magnitudes of the two terms. For convenience, we reproduce the definition of $\epsilon(\hat{s}_k)$ here:
	\begin{equation*}
	\epsilon(\hat{s}_k) \coloneqq \frac{CB^2\log d}{d^2} \paren*{\abs{\hat{s}_k} \vee \sqrt{\frac{\log d}{B}}}.
	\end{equation*}
	We will divide the proof into three cases, depending on the magnitude of $\hat{s}_k$. WLOG, assume that $\hat{s}_k \geq 0$.
	
	When $\hat{s}_k \geq \frac{1}{d^{1/3}}$, then
	\begin{equation*}
	\epsilon(\hat{s}_k) = \frac{CB^2\log d}{d^2} \hat{s}_k \leq \frac{1}{\log^2 d}\cdot\frac{\kappa B}{d}\hat{s}_k
	\end{equation*}
	for $d$ large enough, and when $\frac{\log^4 d}{d^{2/3}} \leq \hat{s}_k \leq \frac{1}{d^{1/3}}$, one may check that
	\begin{equation*}
	\epsilon(\hat{s}_k) = \frac{CB^2\log d}{d^2} \sqrt{\frac{\log d}{B}} \leq \frac{C}{\log^2 d}
	\cdot\frac{\kappa B}{d}\hat{s}_k.
	\end{equation*}
	In either case, we may plug the bound for $\epsilon(\hat{s}_k)$ into \eqref{eq:from_contraction_to_subtraction} to get
	\begin{equation*}
	(\E\braces{\hat{s}_{k+1} ~|~ \mathcal{F}_{kB}})^2 \geq \paren*{1 + \frac{\kappa B}{d}\paren*{1-\frac{C}{\log^2 d}}}^2\hat{s}_k^2.
	\end{equation*}
	
	Next, when $0 \leq \hat{s}_k \leq \frac{\log^4 d}{d^{2/3}}$, we have
	\begin{equation*}
	\frac{CB^2\log d}{d^2} \paren*{\hat{s}_k \vee \sqrt{\frac{\log d}{B}}} \leq \frac{C\log^3 d}{d},
	\end{equation*}
	so that we have
	\begin{equation*}
	\paren*{\paren*{1 + \frac{\kappa B}{d}}\hat{s}_{k} - \frac{CB^2\log d}{d^2} \paren*{\hat{s}_k \vee \frac{\log d}{\sqrt{B}}}}_+^2 \geq \paren*{1 + \frac{\kappa B}{d}}^2\hat{s}_k^2 - \frac{C\log^3 d}{d} \hat{s}_k.
	\end{equation*}
	Furthermore, we can bound the second term on the right hand side via
	\begin{equation*}
	\frac{\log^3 d}{d} \hat{s}_k \leq \frac{\log^7 d}{d^{5/3}} \lesssim \frac{B}{d^2 \log d}.
	\end{equation*}
	As such, we may write
	\begin{align*}
	\E\braces{\hat{s}_{k+1} ~|~ \mathcal{F}_{kB}}^2 \geq \paren*{1 + \frac{\kappa B}{d}}^2\hat{s}_k^2 - \frac{cB}{d^2}\paren*{\frac{1}{\log d}}.
	\end{align*}
	
	We may thus take expectations to get
	\begin{align*}
	\E\braces{\E\braces{\hat{s}_{k+1} ~|~ \mathcal{F}_{kB}})^2} \geq \paren*{1 + \frac{\kappa B}{d}\paren*{1-\frac{1}{\log^2 d}}}^2\E\braces*{\hat{s}_k^2} - \frac{cB}{d^2}\paren*{\frac{1}{\log^2 d}}.
	\end{align*}
	Combining this with \eqref{eq:general_2nd_moment_recursive_bound} and \eqref{eq:lower_bound_for_2nd_moment} gives \eqref{eq:2nd_moment_recursive_bound}.
\end{proof}

\begin{proof}[Proof of Theorem \ref{thm:bound_tau_2a}]
	For convenience, denote $A = 1 - 1/\log^2 d$. We solve the recursion in \eqref{eq:2nd_moment_recursive_bound} to get
	\begin{equation} \label{eq:first_lower_bound_for_sk_hat}
	\E\braces{\hat{s}_k^2} \geq \frac{(1+\kappa AB/d)^{2k+2} - 1}{(1+\kappa AB / d)^2 - 1} \cdot \frac{C_2B}{d^2} + \paren*{1+\frac{\kappa AB}{d}}^{2k} \hat{s}_0^2.
	\end{equation}
	Note that 
	\begin{equation*}
	\frac{1}{(1+\kappa AB / d)^2 - 1} \cdot \frac{C_2 B}{d^2} \asymp \frac{d}{\kappa AB}\cdot \frac{B}{d^2} \asymp \frac{1}{\kappa d},
	\end{equation*}
	so that we can simplify \eqref{eq:first_lower_bound_for_sk_hat} to
	\begin{equation} \label{eq:lower_bound_for_sk_hat}
	\E\braces{\hat{s}_k^2} \gtrsim \paren*{\paren*{1+\frac{\kappa AB}{d}}^{2k+2} - 1}\frac{C}{\kappa d} + \paren*{1+\frac{\kappa AB}{d}}^{2k} \hat{s}_0^2.
	\end{equation}
	Meanwhile, for any $T \asymp \frac{Cd}{B}\log d \asymp \frac{\log d}{\log(1+\kappa AB/d)}$, we have
	\begin{equation} \label{eq:choice_of_T}
	\paren*{1+\frac{\kappa AB}{d}}^{2T+2} = \exp\paren*{C\log d}.
	\end{equation}
	Putting these together, we see that regardless of the value of $\hat{s}_0$, there is an absolute constant $\gamma$ such that $\E\braces{\hat{s}_T^2} \geq \gamma$ if we choose the constant $C$ in the definition of $T$ to be large enough.
	
	On the other hand, we may solve the second recursion \eqref{eq:4th_moment_recursive_bound} and apply a similar computation as before to get
	\begin{equation} \label{eq:upper_bound_for_sk_hat}
	\E\braces{\hat{s}_k^4}^{1/2} \lesssim \paren*{\paren*{1+\frac{\kappa B}{d}}^{2k+2} - 1}\frac{C}{\kappa d} + \paren*{1+\frac{\kappa B}{d}}^{2k} \hat{s}_0^2.
	\end{equation}
	Taking the ratio of \eqref{eq:lower_bound_for_sk_hat} and \eqref{eq:upper_bound_for_sk_hat}, plugging in the time point $k=T$, we get
	\begin{align*}
	\frac{\E\braces{\hat{s}_T^2}}{\E\braces{\hat{s}_T^4}^{1/2}} & \gtrsim \frac{\paren*{\paren*{1+\kappa AB/d}^{2T+2} - 1}/\kappa d + \paren*{1+\kappa AB/d}^{2T} \hat{s}_0^2}{\paren*{\paren*{1+\kappa B/d}^{2T+2} - 1}/\kappa d + \paren*{1+\kappa B / d}^{2T} \hat{s}_0^2} \\
	& = \frac{\paren*{1+\kappa AB/d}^{2T}}{\paren*{1+\kappa B/d}^{2T}} \cdot \frac{\paren*{1+\kappa AB/d}^2 + \kappa d \hat{s}_0^2 - \paren*{1+\kappa AB/d}^{-2T}}{\paren*{1+\kappa B/d}^2 + \kappa d \hat{s}_0^2 - \paren*{1+\kappa B/d}^{-2T}} \\
	& \asymp \paren*{\frac{1+\kappa AB/d}{1+\kappa B/d}}^{2T}.
	\end{align*}
	Note that in the last equation, we used the definition of $A$ and \eqref{eq:choice_of_T}.
	
	Since
	\begin{equation*}
	\frac{1+\kappa AB/d}{1+\kappa B/d} = \frac{d + \kappa B - \kappa B/\log^2 d}{d + \kappa B} = 1 - \frac{\kappa B}{(d +\kappa B)\log^2 d},
	\end{equation*}
	we may take logs and observe
	\begin{align*}
	\log \paren*{\frac{1+\kappa AB/d}{1+\kappa B/d}}^{2T} & \asymp \frac{d}{B}\log d \cdot \log\paren*{1 - \frac{\kappa B}{(d +\kappa B)\log^2 d}} \\
	& \asymp - \frac{d}{B}\log d \cdot \frac{B}{d\log^2 d} \\
	& = - \frac{1}{\log d}.
	\end{align*}
	
	Putting everything together, we have $\frac{\E\braces{\hat{s}_T^2}}{\E\braces{\hat{s}_T^4}^{1/2}} \gtrsim 1$. As such, for $d$ large enough, we may apply the Paley-Zygmund inequality to see that we may pick some $0 < \gamma_1 < \gamma$ such that $\abs{\hat{s}_T} \geq \gamma_1$, with probability at least some constant $p' > 0$. Let us now condition on the intersection of this event and that promised to us by Lemma \ref{lem:process_wise_stochastic_dominance} when we choose $k=T$. The probability of the intersection is at least
	\begin{equation*}
	p' - T\cdot\paren*{\frac{1}{d^2} + \frac{C}{\sqrt{B}}} = p' - \frac{C\log d}{Bd} - \frac{Cd\log d}{B^{3/2}} \geq p - \frac{C}{\sqrt{\log d}} \eqqcolon p
	\end{equation*} 
	where the inequality holds for $d$ large enough. On this event, we have $\tau_{2a} \leq  TB \leq Cd\log d$ as we wanted.
\end{proof}

\section{Phase 2b: Approximation by drift process} \label{sec:phase_2b}

In the previous two sections, we have bounded the duration of Phase 2a of the SGD process, that is, the time it takes for $\abs{s_k}$ to increase to a constant value. The goal of this section is to bound the duration of Phase 2b in which the iterates converge to the ``basin of convexity'' around $\boldy^*$ or $-\boldy^*$. We will prove the following theorem.

\begin{theorem}[Bound for duration of Phase 2b] \label{lem:bound_for_phase_2b}
	We have $\tau_{2b} - \tau_{2a} \leq Cd$ with probability at least $1-1/d$.
\end{theorem}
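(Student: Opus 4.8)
The plan is to bound the time it takes the correlation coordinate $s_k$ to climb from the constant level $\gamma_1$ up to within $\gamma_2/4$ of $1$, since once $\abs{s_k} \geq 1 - \gamma_2/4$ and $\abs{r_k^2 - 1} \leq C\log d/\sqrt{d}$ (which holds deterministically for the radius‑projected sequence we are working with), the Lyapunov function satisfies $\Psi(\boldy^{(k)}) = r_k^2 - 2\abs{s_k} + 1 \leq \gamma_2$, so $\tau_{2b}$ has already occurred. By the strong Markov property I would re-index so $\tau_{2a} = 0$, and by the symmetry $\boldx^* \mapsto -\boldx^*$ (which flips the sign of $s$ and leaves $\Psi$ invariant) assume $s_0 \geq \gamma_1 > 0$. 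The first observation is that for every $k$ with $0 \leq k < \tau_{2b}$ the state $\boldy^{(k)}$ lies in the region $\mathcal{D}$ of Lemma~\ref{lem:lower_bound_for_drift} with $\epsilon = \gamma_2/4$: the constraint $\Psi(\boldy^{(k)}) > \gamma_2$ together with $r_k^2 \leq 1 + C\log d/\sqrt{d}$ forces $\abs{s_k} < 1 - \gamma_2/2 + C\log d/(2\sqrt{d}) < 1 - \epsilon$ for $d$ large, while $\abs{r_k^2 - 1} \leq C\log d/\sqrt{d} < \eta(\epsilon)$ for $d$ large; here $\gamma_2$ is chosen small enough that Lemma~\ref{lem:lower_bound_for_drift} indeed supplies a constant $\bar{b}_{min} > 0$ for this $\epsilon$. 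Consequently, by the update formula of Theorem~\ref{thm:state_space_MC} and Lemma~\ref{lem:lower_bound_for_drift}, whenever $s_k > 0$ and $k < \tau_{2b}$,
\[
\E\braces*{s_{k+1} - s_k ~\vline~ \mathcal{F}_k} = \frac1d\bar{\beta}(\boldy^{(k)}) \geq \frac{\bar{b}_{min}}{d}\, s_k .
\]

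Next I would convert this into the one-sided recursion $s_{k+1} \geq a\, s_k + \xi_{k+1}$, where $a \coloneqq 1 + \bar{b}_{min}/d$ and $\xi_{k+1} \coloneqq \frac1d\paren*{\beta_{k+1}(\boldy^{(k)}) - \bar{\beta}(\boldy^{(k)})}$ is a martingale difference for $\braces{\mathcal{F}_k}_k$, and iterate it to obtain $s_k \geq a^k \gamma_1 + a^k \tilde{M}_k$, where $\tilde{M}_k \coloneqq \sum_{j=1}^k a^{-j}\xi_j$ is itself a martingale. Because the radius is controlled, $\boldy^{(j)}$ stays in a fixed bounded subset of $\mathcal{Y}$, so Lemma~\ref{lem:psi_1_norm_bounds} gives $\norm{\xi_j}_{\psi_1} \lesssim 1/d$; since $\abs{a^{-j}} \leq 1$, the increments of $\tilde{M}_k$ satisfy the hypotheses of the maximal Bernstein inequality (Lemma~\ref{lem:uniform_Bernstein}) with $K \asymp 1/d$. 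Applying that lemma over the horizon $k \leq Cd$ gives
\[
\P\braces*{\sup_{0 \leq k \leq Cd} \abs{\tilde{M}_k} \geq \epsilon'} \leq 2\exp(-cd) \leq \frac1d
\]
for $d$ large and any fixed constant $\epsilon' > 0$; I will fix $\epsilon' \coloneqq \gamma_1 e^{-C\bar{b}_{min}}/2$, so that $a^{Cd}\epsilon' \leq \gamma_1/2$.

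Finally, on the high-probability event $\mathcal{E} \coloneqq \braces{\sup_{k \leq Cd}\abs{\tilde{M}_k} \leq \epsilon'}$ I would choose the constant $C$ large enough (depending only on $\gamma_1$ and $\bar{b}_{min}$, e.g.\ $C \geq \tfrac{2}{\bar{b}_{min}}\log(2/\gamma_1)$) that $a^{Cd}\gamma_1 \geq 2$ for all large $d$, and run a short self-sustaining induction. Suppose for contradiction that $\tau_{2b} > Cd$. Then every $k \leq Cd$ satisfies $k < \tau_{2b}$; by induction on $k$, the bound $s_j \geq \gamma_1/2 > 0$ for $j < k$ keeps the recursion valid, so $s_k \geq a^k\gamma_1 + a^k\tilde{M}_k \geq a^k\gamma_1 - \gamma_1/2 \geq \gamma_1/2$, closing the induction (the base case $s_0 = \gamma_1$ is immediate). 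Taking $k = Cd$ gives $s_{Cd} \geq a^{Cd}\gamma_1 - \gamma_1/2 \geq 2 - \gamma_1/2 > 1$; but $Cd < \tau_{2b}$ forces $\boldy^{(Cd)} \in \mathcal{D}$, hence $s_{Cd} = \abs{s_{Cd}} < 1 - \gamma_2/4 < 1$, a contradiction. Therefore $\tau_{2b} \leq Cd$ on $\mathcal{E}$, which is the claimed bound with probability at least $1 - 1/d$.

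I expect the main obstacle to be the bookkeeping around the weighted martingale $\tilde{M}_k = \sum_j a^{-j}\xi_j$: it must be set up so the hypotheses of Lemma~\ref{lem:uniform_Bernstein} genuinely hold (it is precisely the need for an \emph{unconditional} subexponential bound on the $\xi_j$ that forces one to work with the radius-projected sequence), and then the constants $\gamma_1$, $\gamma_2$, $\bar{b}_{min}$, $\epsilon'$, $C$ must be ordered carefully enough that the geometric drift $a^{Cd}\gamma_1$ strictly beats both the fluctuation budget $\gamma_1/2$ and the a~priori bound $\abs{s} < 1$, yielding an honest contradiction rather than a near miss. Everything else — passing from the drift lemma to the recursion, the strong Markov and $\pm$ reductions, and translating ``$\abs{s_k}$ close to $1$'' into ``$\Psi(\boldy^{(k)}) \leq \gamma_2$'' — is routine.
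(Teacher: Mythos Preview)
Your argument is correct, but it takes a genuinely different route from the paper's. The paper introduces an auxiliary deterministic \emph{drift sequence} $\bar{\boldy}^{(k+1)} = \bar{\boldy}^{(k)} + (\bar{\alpha}(\bar{\boldy}^{(k)}),\bar{\beta}(\bar{\boldy}^{(k)}))$, first shows that $\bar{s}_k$ reaches the target level in $O(d)$ steps, and then proves that the random sequence $s_k$ tracks $\bar{s}_k$ to within $O(\log d/\sqrt{d})$ via a discrete Gronwall argument that relies on the Lipschitz continuity of $\bar{\beta}$ (Lemma~\ref{lem:Lipschitz_continuity}) and the recursive inequality Lemma~\ref{lem:recursive_inequality}. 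By contrast, you bypass the auxiliary sequence entirely: you extract the one-sided geometric recursion $s_{k+1} \geq (1+\bar b_{min}/d)s_k + \xi_{k+1}$ directly from Lemma~\ref{lem:lower_bound_for_drift}, iterate it, and control the exponentially weighted martingale $\tilde M_k$ in one shot with the maximal Bernstein inequality. Your approach is more elementary in that it never needs the Lipschitz lemma or the Gronwall combinatorics, and it only asks for a constant-order fluctuation bound rather than the sharper $\log d/\sqrt{d}$ control. The paper's approach, on the other hand, is more modular (it cleanly separates the deterministic dynamics from the stochastic error) and yields the finer tracking statement $\abs{s_\tau - \bar s_\tau} \lesssim \log d/\sqrt{d}$, which is not needed here but could be useful in other contexts.
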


As mentioned in the overall proof outline, the idea is to understand the trajectory of the drift process, and then show that the fluctuations do not affect the trajectory by too much. For convenience, we condition on the event $\tau_{2a} < \infty$, and then use the strong Markov property to re-index, setting $\tau_{2a} = 0$.

The drift process $\braces{\bar{\boldy}^{(k)}}_k$ is defined via the deterministic update
\begin{equation*}
\bar{\boldy}^{(k+1)} \coloneqq \bar{\boldy}^{(k)} + (\bar{\alpha}(\bar{\boldy}^{(k)}), \bar{\beta}(\bar{\boldy}^{(k)})),
\end{equation*}
with the initialization $\bar{\boldy}^{(0)} = \boldy^{(0)}$. For simplicity, we denote $\bar{r}_k \coloneqq r(\bar{\boldy}^{(k)})$ and $\bar{s}_k \coloneqq s(\bar{\boldy}^{(k)})$.

\begin{lemma}[Behavior of drift sequence]
	Let $\tau = \min\braces*{k ~\colon~ \bar{s}_k \geq 1 - (\gamma_2 - \epsilon)/2}$ for some small $\epsilon > 0$. Then for $d$ large enough, we have $\tau \leq Cd$, where $C$ is a universal constant depending only on $\gamma_1$, $\gamma_2$, and $\epsilon$.
\end{lemma}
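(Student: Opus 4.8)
The plan is to carry out, at the level of the discrete drift iteration, the constant-time Phase~2 estimate from the continuous-time picture: a single drift step displaces $\bar{\boldy}^{(k)}$ by $\tfrac1d\paren*{\bar{\alpha}(\bar{\boldy}^{(k)}),\bar{\beta}(\bar{\boldy}^{(k)})}$, so $\bar{\boldy}^{(0)},\bar{\boldy}^{(1)},\ldots$ is the $1/d$-Euler discretization of the integral curve, and an $O(1)$ bound on continuous time becomes an $O(d)$ bound on the iteration count. First I would control the radius coordinate. By Lemma~\ref{lem:SGD_drift}, $\bar{\alpha}(r^2,s)=1-r^2$, so $\bar{r}_{k+1}^2-1=\paren*{1-\tfrac1d}\paren*{\bar{r}_k^2-1}$ exactly, whence $\bar{r}_k^2-1=\paren*{1-\tfrac1d}^k\paren*{\bar{r}_0^2-1}$ and $\abs{\bar{r}_k^2-1}\le\abs{\bar{r}_0^2-1}$ for all $k$. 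After the re-indexing $\tau_{2a}=0$, the initialization $\bar{\boldy}^{(0)}=\boldy^{(0)}$ is the state of the chain at time $\tau_{2a}$, so it satisfies $\abs{\bar{r}_0^2-1}\le C_2\log d/\sqrt d$ by the uniform radius control (Lemma~\ref{lem:uniform_radius_bound}) and $\abs{\bar{s}_0}\ge\gamma_1$ by \eqref{eq:tau_2a}; by the reflection symmetry of the drift --- under $s\mapsto-s$ one has $\theta\mapsto\pi-\theta$, $\bar{\alpha}$ is unchanged, and $\bar{\beta}$ is odd, so the dynamics commutes with the reflection --- we may assume $\bar{s}_0\ge\gamma_1>0$.

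Next I would apply Lemma~\ref{lem:lower_bound_for_drift} with its free parameter set to $\epsilon'\coloneqq(\gamma_2-\epsilon)/2>0$, which is legitimate since $\gamma_2$, and hence $\epsilon'$, is a small constant. This produces $\eta(\epsilon')>0$ and $\bar{b}_{min}(\epsilon')>0$ with $\bar{\beta}(r^2,s)\ge\bar{b}_{min}(\epsilon')\,s$ on the band $\mathcal{D}(\epsilon')\coloneqq\braces*{\abs{s}\le1-\epsilon',\ \abs{r^2-1}\le\eta(\epsilon')}$. For $d$ large we have $\abs{\bar{r}_0^2-1}\le C_2\log d/\sqrt d\le\eta(\epsilon')$, so by the previous paragraph $\abs{\bar{r}_k^2-1}\le\eta(\epsilon')$ for all $k$; and for each $k<\tau$ the definition of $\tau$ gives $0<\bar{s}_k<1-(\gamma_2-\epsilon)/2=1-\epsilon'$. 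Hence $(\bar{r}_k^2,\bar{s}_k)\in\mathcal{D}(\epsilon')$ whenever $k<\tau$, so that
\[
\bar{s}_{k+1}=\bar{s}_k+\tfrac1d\,\bar{\beta}(\bar{r}_k^2,\bar{s}_k)\ \ge\ \Bigl(1+\tfrac{\bar{b}_{min}(\epsilon')}{d}\Bigr)\bar{s}_k\qquad(k<\tau);
\]
in particular $\bar{s}_k$ stays positive and is nondecreasing up to time $\tau$, and by iteration $\bar{s}_k\ge\bigl(1+\bar{b}_{min}(\epsilon')/d\bigr)^k\gamma_1$ for every $k\le\tau$.

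Finally I would convert this geometric growth into an iteration count. Using $\log(1+x)\ge x/2$ on $(0,1]$, the lower bound $\bigl(1+\bar{b}_{min}(\epsilon')/d\bigr)^k\gamma_1$ reaches the threshold $1-\epsilon'$ by the step
\[
k_0\ \coloneqq\ \Bigl\lceil\frac{\log\bigl((1-\epsilon')/\gamma_1\bigr)}{\log\bigl(1+\bar{b}_{min}(\epsilon')/d\bigr)}\Bigr\rceil\ \le\ Cd
\]
for $d$ large, where $C=C(\gamma_1,\gamma_2,\epsilon)$. If $\tau>k_0$ then the iterated bound from the previous paragraph applies at $k=k_0\le\tau$ and forces $\bar{s}_{k_0}\ge1-\epsilon'$, contradicting the minimality of $\tau$; hence $\tau\le k_0\le Cd$, which is the claim. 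The one point that requires care is ensuring $(\bar{r}_k^2,\bar{s}_k)$ stays in the domain $\mathcal{D}(\epsilon')$ of the drift lower bound throughout $k<\tau$: this is exactly what forces us to use the \emph{exact} geometric decay of $\bar{r}_k^2-1$ (together with $d$ large and the Phase~1 bound $\abs{\bar{r}_0^2-1}\le C_2\log d/\sqrt d$) rather than any cruder estimate, and to take the free parameter of Lemma~\ref{lem:lower_bound_for_drift} to be $(\gamma_2-\epsilon)/2$, which makes the constraint $\abs{s}\le1-\epsilon'$ automatic below $\tau$. Beyond this, the argument is routine.
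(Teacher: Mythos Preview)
Your argument is correct and follows essentially the same route as the paper's proof: use the exact contraction $\bar r_{k}^2-1=(1-1/d)^k(\bar r_0^2-1)$ together with the Phase~1 radius bound to keep the iterates in the band where Lemma~\ref{lem:lower_bound_for_drift} applies, deduce the one-step geometric growth $\bar s_{k+1}\ge(1+c/d)\bar s_k$ for $k<\tau$, and convert this into $\tau\le Cd$. Your exposition is in fact a bit more careful than the paper's (you make the reflection symmetry explicit and choose the free parameter of Lemma~\ref{lem:lower_bound_for_drift} to be exactly $(\gamma_2-\epsilon)/2$, which makes the $\abs{s}\le1-\epsilon'$ constraint tautological below $\tau$, whereas the paper takes $\gamma_2/4$), but the substance is the same.
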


\begin{proof}
	 First, choose $\epsilon$ in Lemma \ref{lem:lower_bound_for_drift} to be equal to $\gamma_2/4$. Let $d$ be large enough so that $\abs{r_0^2-1} \leq C\frac{\log d}{\sqrt{d}} \leq \eta$, where $\eta = \eta(\epsilon)$ is the required value in Lemma \ref{lem:lower_bound_for_drift}. By Lemma \ref{lem:SGD_drift}, we see that $\abs{\bar{r}_k^2-1} \leq \eta$ for all $k \leq \tau$. This allows us to use Lemma 2.4 to observe that the recursive inequality $\bar{s}_{k+1} \geq (1+c/d)\bar{s}_k$ applies whenever $k \leq \tau$, where $c$ is a constant only depending on $\gamma_2$. By the definition of $\tau$, we have
	\begin{equation*}
	(1+c/d)^{\tau-1}\gamma_1 < 1 - (\gamma_2 - \epsilon)/2,
	\end{equation*}
	which we can solve to get 
	\begin{equation*}
	\tau \leq \frac{\log\paren*{(2-\gamma_2+\epsilon)/2\gamma_1}}{\log(1+c/d)} + 1 \leq Cd. \qedhere
	\end{equation*}
\end{proof}

\begin{lemma}[Approximation error]
	We have
	\begin{equation} \label{eq:approximation_error}
	\abs{s_{\tau}-\bar{s}_{\tau}} \leq \frac{C\log d}{\sqrt{d}}
	\end{equation}
	with probability at least $1-2/d$.
\end{lemma}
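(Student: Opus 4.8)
\noindent\emph{Proof plan.} Recall that $\bar{\boldy}^{(0)}=\boldy^{(0)}$, so $s_0=\bar{s}_0$ and $r_0^2=\bar{r}_0^2$, and that the previous lemma gives $\tau\le Cd$. Write $e_k\coloneqq\abs{s_k-\bar{s}_k}$. Subtracting the $s$‑component of the update \eqref{eq:state_space_update} from that of the drift update and telescoping gives the Doob‑type decomposition
\[
s_k-\bar{s}_k=\frac1d\sum_{j=0}^{k-1}\bigl(\bar\beta(\boldy^{(j)})-\bar\beta(\bar{\boldy}^{(j)})\bigr)+\frac1d\sum_{j=0}^{k-1}\bigl(\beta_j(\boldy^{(j)})-\bar\beta(\boldy^{(j)})\bigr),
\]
where the first sum is a deterministic ``drift mismatch'' term and the second is a martingale whose increments are exactly the Phase~2 fluctuations. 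The plan is to bound the fluctuation term by a maximal Bernstein inequality, bound the drift mismatch by a Lipschitz estimate on $\bar\beta$ fed back into $e_j$, and close the loop with a discrete Gr\"onwall inequality; the fact that $\tau\le Cd$ is what keeps the accumulated error at scale $\log d/\sqrt d$.

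\noindent\textbf{Fluctuation term.} Since we are working with the projected sequence (and on the event of Lemma~\ref{lem:uniform_radius_bound}), we have $r_j\le C_3$ for every $j$ in the relevant window, so by the $\psi_1$‑norm bounds of Lemma~\ref{lem:psi_1_norm_bounds} the martingale differences $\beta_j(\boldy^{(j)})-\bar\beta(\boldy^{(j)})$ satisfy the conditional MGF hypothesis of Lemma~\ref{lem:uniform_Bernstein} with a universal parameter $K$. Applying that lemma with $M=\lceil Cd\rceil$ and $\epsilon\asymp\sqrt{d\log d}$ (so that $\epsilon\ll MK$ and the sub‑Gaussian branch of the tail is active) yields
\[
\sup_{0\le k\le\tau}\Bigl|\tfrac1d\textstyle\sum_{j=0}^{k-1}\bigl(\beta_j(\boldy^{(j)})-\bar\beta(\boldy^{(j)})\bigr)\Bigr|\ \le\ \frac{C\sqrt{d\log d}}{d}\ \le\ \frac{C\log d}{\sqrt d}
\]
with probability at least $1-1/d$.

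\noindent\textbf{Drift mismatch term.} On $\{1/2\le r\le 2\}$ the map $(r^2,s)\mapsto\bar\beta(r^2,s)=1-s-\tfrac1\pi(2\theta-\sin 2\theta)$ from \eqref{eq:expectation_of_beta} is Lipschitz with a universal constant $L$: although $\theta=\arccos(s/r)$ has unbounded derivative as $\theta\to 0$, the factor $\tfrac{d}{d\theta}(2\theta-\sin 2\theta)=4\sin^2\theta$ cancels the singularity, giving $\partial_s(2\theta-\sin 2\theta)=-4\sin\theta/r$ and $\partial_{r^2}(2\theta-\sin 2\theta)=2s\sin\theta/r^3$, both bounded on that region. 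Since $\abs{\bar r_0^2-1}=\abs{r_0^2-1}\le C_2\log d/\sqrt d$, we have $\bar r_j^2=1+(1-1/d)^j(\bar r_0^2-1)\in[3/4,5/4]$ for all $j$, and the projected $r_j^2$ lies in the same interval by \eqref{eq:radius_uniform_concentration}; hence both $(r_j^2,s_j)$ and $(\bar r_j^2,\bar s_j)$ lie in $\{1/2\le r\le2\}$ and $\abs{r_j^2-\bar r_j^2}\le C\log d/\sqrt d$. Therefore $\abs{\bar\beta(\boldy^{(j)})-\bar\beta(\bar{\boldy}^{(j)})}\le L\bigl(C\log d/\sqrt d+e_j\bigr)$.

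\noindent\textbf{Closing the argument.} Combining the two bounds, for every $k\le\tau\le Cd$ we get
\[
e_k\ \le\ \frac{Lk}{d}\cdot\frac{C\log d}{\sqrt d}+\frac{C\log d}{\sqrt d}+\frac{L}{d}\sum_{j=0}^{k-1}e_j\ \le\ \frac{C'\log d}{\sqrt d}+\frac{L}{d}\sum_{j=0}^{k-1}e_j,
\]
and the discrete Gr\"onwall inequality (Lemma~\ref{lem:recursive_inequality}) gives $e_k\le\frac{C'\log d}{\sqrt d}(1+L/d)^k\le\frac{C'\log d}{\sqrt d}\,e^{LC}$ for $k\le\tau$. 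Taking $k=\tau$ yields \eqref{eq:approximation_error}, with failure probability $1/d$ from the Bernstein bound plus $1/d$ from Lemma~\ref{lem:uniform_radius_bound} (the latter needed only to identify the iterate with the genuine SGD iterate), i.e.\ at most $2/d$ in total. The main obstacle is the Lipschitz control of $\bar\beta$ uniformly up to the degenerate direction $\theta=0$, together with the bookkeeping that keeps the accumulated error at scale $\log d/\sqrt d$ — which works precisely because $\tau=O(d)$.
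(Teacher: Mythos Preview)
Your proof is correct and follows the same approach as the paper: Doob decomposition into a martingale fluctuation term plus a drift--mismatch term, maximal Bernstein (Lemma~\ref{lem:uniform_Bernstein}) for the former, and Lipschitz control of $\bar\beta$ fed into the discrete Gr\"onwall inequality (Lemma~\ref{lem:recursive_inequality}) for the latter. Your direct derivative computation showing that the factor $\tfrac{d}{d\theta}(2\theta-\sin 2\theta)=4\sin^2\theta$ cancels the $1/\sin\theta$ singularity in $\partial\theta/\partial s$ is in fact a small improvement over the paper, which invokes Lemma~\ref{lem:Lipschitz_continuity} on a domain $\mathcal D$ bounded away from $\theta=0$; your version gives a Lipschitz bound on all of $\{1/2\le r\le 2\}$ and thereby sidesteps having to argue separately that both $\boldy^{(j)}$ and $\bar\boldy^{(j)}$ remain inside $\mathcal D$ throughout the window.
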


\begin{proof}
	Recall that by the discussion at the end of Section \ref{sec:uniform_control_of_r_k}, we have
	\begin{equation} \label{eq:bound_on_r_in_sec6}
	\sup_{0 \leq k \leq \tau} \abs{r_k^2-1} \leq \frac{C\log d}{\sqrt{d}}.
	\end{equation}
	Using Lemma \ref{lem:uniform_Bernstein}, there is also a probability $1-1/d$ event over which we have
	\begin{equation} \label{eq:phase_2b_uniform_bound_on_beta}
	\sup_{0 \leq t \leq \tau} \abs*{\sum_{k=0}^{t-1} \paren*{\beta_k(\boldy^{(k)})- \bar{\beta}(\boldy^{(k)})}} \leq C\sqrt{d}\log d.
	\end{equation}
	
	We will show that \eqref{eq:approximation_error} holds when conditioned on both of these events. First, for any $t \leq \tau$, we may write
	\begin{equation*}
	\bar{s}_t - \bar{s}_0 = \sum_{k=0}^{t-1} \paren*{ \bar{s}_{k+1} - \bar{s}_{k} } = \frac{1}{d}\sum_{k=0}^{t-1} \bar{\beta}(\bar{\boldy}^{(k)}),
	\end{equation*} 
	and similarly we have \eqref{eq:martingale_decomposition_for_s}, which we reproduce here:
	\begin{equation*}
	s_t - s_0 = \frac{1}{d}\sum_{k=0}^{t-1} \bar{\beta}(\boldy^{(k)}) + \frac{1}{d}\sum_{k=0}^{t-1} \paren*{\beta_k(\boldy^{(k)})- \bar{\beta}(\boldy^{(k)})}.
	\end{equation*}
	
	Subtracting these two equations, and recalling that $\bar{s}_0 = s_0$, we get
	\begin{align} \label{eq:recursive_bound_for_s_t-bar_s_t}
	\abs*{s_t-\bar{s}_t} & \leq \frac{1}{d}\abs*{\sum_{k=0}^{t-1} \paren*{\beta_k(\boldy^{(k)})- \bar{\beta}(\boldy^{(k)})}} + \frac{1}{d}\abs*{ \sum_{k=0}^{t-1} \paren*{ \bar{\beta}(\boldy^{(k)}) - \bar{\beta}(\bar{\boldy}^{(k)}) } } \nonumber \\
	& \leq \frac{C\log d}{\sqrt{d}} + \frac{1}{d}\sum_{k=0}^{t-1} \abs*{ \bar{\beta}(\boldy^{(k)}) - \bar{\beta}(\bar{\boldy}^{(k)})  }.
	\end{align}
	where the bound for the first term on the right hand side comes from \eqref{eq:phase_2b_uniform_bound_on_beta}
	
	Meanwhile, by Lemma \ref{lem:Lipschitz_continuity}, we also have
	\begin{align}
	\abs*{ \bar{\beta}(\boldy^{(t)}) - \bar{\beta}(\bar{\boldy}^{(t)}) } \leq L \paren*{ \abs*{r_{t} - \bar{r}_{t}} + \abs*{s_{t}-\bar{s}_{t}} }
	\end{align}
	and plugging in \eqref{eq:recursive_bound_for_s_t-bar_s_t} and \eqref{eq:bound_on_r_in_sec6} gives
	\begin{equation*}
	\abs*{ \bar{\beta}(\boldy^{(t)}) - \bar{\beta}(\bar{\boldy}^{(t)}) } \leq L \paren*{ \frac{C\log d}{\sqrt{d}} + \frac{1}{d}\sum_{k=0}^{t-1} \abs*{ \bar{\beta}(\boldy^{(k)}) - \bar{\beta}(\bar{\boldy}^{(k)})  }}.
	\end{equation*}
	
	We are now in a position to apply Lemma \ref{lem:recursive_inequality} with $x_t = \frac{1}{d}\abs*{ \bar{\beta}(\boldy^{(t)}) - \bar{\beta}(\bar{\boldy}^{(t)}) }$, $\rho = L/d$, and $\xi = C\log d/\sqrt{d}$. Doing so, we get
	\begin{align*}
	\frac{1}{d}\sum_{k=0}^{\tau-1} \abs*{ \bar{\beta}(\boldy^{(k)}) - \bar{\beta}(\bar{\boldy}^{(k)})  } & \leq \frac{C\log d}{\sqrt{d}} \cdot \paren*{(1+L/d)^{\tau} - 1} \\
	& \leq \frac{C \log d}{\sqrt{d}}
	\end{align*}
	as we wanted.
\end{proof}

\begin{proof}[Proof of Lemma \ref{lem:bound_for_phase_2b}]
	Set $\epsilon = \gamma_2/2$. Combining the previous two lemmas, we have
	\begin{align*}
	\abs{s_\tau} & \geq \abs{\bar{s}_\tau} - \abs{s_\tau - \bar{s}_\tau} \\
	& \geq 1 - \frac{\gamma_2 - \epsilon}{2} - \frac{C\log d}{\sqrt{d}},
	\end{align*}
	and combined with \eqref{eq:bound_on_r_in_sec6} gives
	\begin{align*}
	\Psi(\boldy^{(\tau)}) & = r_{\tau}^2 - 2\abs{s_\tau} + 1 \\
	& = r_{\tau}^2 - 1 + 2(1-\abs{s_\tau}) \\
	& \leq \frac{C\log d}{\sqrt{d}} + \gamma_2 - \epsilon \\
	& \leq \gamma_2,
	\end{align*}
	where the last inequality holds for $d$ large enough. As such, we have $\tau_{2b} \leq \tau \leq Cd$.
\end{proof}

\section{Linear convergence in Phase 3 and proof of Theorem \ref{thm:main_theorem}} \label{sec:proof_of_main_thm}

\begin{proof}[Proof of Theorem \ref{thm:main_theorem}]
	To summarize, we have showed that $\tau_1 \leq Cd\cdot(\log d + \log\abs{r_0^2-1})$ with probability at least $1-C/\log d$ (Lemma \ref{lem:bound_for_duration_of_phase1}), $\tau_{2a} - \tau_1 \leq Cd\log d$ with probability at least $p$ (Theorem \ref{thm:bound_tau_2a}), and $\tau_{2b} - \tau_{2a} \leq Cd$ with probability at least $1-1/d$ (Lemma \ref{lem:bound_for_phase_2b}). Putting all of these together gives
	\begin{align*}
	\tau_{2b} \leq  Cd\log \paren*{d \cdot \norm{\boldx^{(0)}}\vee 1}
	\end{align*}
	with probability at least $p - C/\log d - C/d$, which is larger than $p/2$ for $d$ large enough. Unfortunately, this is not good enough for our purposes and we need to do a bit more work to bring down the error probability.
	
	Let us condition on the $1-1/d$ probability event for which \eqref{eq:radius_uniform_concentration_pt2} holds so that we have uniform control over $\braces{r_k^2}$ over an appropriate timescale (more precisely, we use the coupling argument explained in the discussion after Lemma \ref{lem:uniform_radius_bound}). Define $A \coloneqq Cd\log \paren*{d \cdot C_3}$, where $C_3$ is the same constant used in \eqref{eq:radius_uniform_concentration_pt2}. Then by the strong Markov property, for any $k_0 > 0$, we have
	\begin{align*}
	\P\braces*{\tau_{2b} - \tau_1 \geq A + k_0 } & = \P\braces*{\inf_{0 \leq k < k_0 + A} \Psi(\boldy^{(\tau_1+k)}) > \gamma_2 } \\
	& = \E\braces*{ \P\braces*{\inf_{k_0 \leq k < k_0 + A} \Psi(\boldy^{(\tau_1+k)}) > \gamma_2 ~\vline~ \mathcal{F}_{\tau_1 + k_0}} \mathbf{1}\braces*{\inf_{0 \leq k < k_0} \Psi(\boldy^{(\tau_1+k)}) > \gamma_2 }} \\
	& \leq \paren*{1-p/2} \cdot \P\braces*{\tau_{2b} - \tau_1 \geq k_0 },
	\end{align*}
	which implies that
	\begin{equation*}
	\P\braces*{\tau_{2b} - \tau_1 \geq A + k_0 ~\vline~ \tau_{2b} - \tau_1 \geq k_0} \leq 1 - p/2.
	\end{equation*}
	
	As such, we have
	\begin{align*}
	\P\braces*{\tau_{2b} - \tau_1 \geq tA} & = \prod_{k=0}^{t-1} \P\braces*{\tau_{2b} - \tau_1 \geq (k+1)A ~\vline~ \tau_{2b} - \tau_1 \geq kA} \\
	& \leq (1-p/2)^t.
	\end{align*}
	If we set $t = \frac{\log(10)}{\log(1-p/2)}$, we see that with probability at least $0.9 - C/\log d - 1/d$,
	\begin{align*}
	\tau_{2b} & \leq tA + \tau_1 \\
	& \lesssim d\log d + d\log \paren*{d \cdot \norm{\boldx^{(0)}}\vee 1} \\
	& \leq d\cdot\paren*{\log d + \log \paren*{\norm{\boldx^{(0)}}\vee 1}}.
	\end{align*}
	As mentioned earlier, we will take $T = \tau_{2b}$.
	
	It remains to establish linear convergence of the process to $\pm\boldy^*$ during the iterations following $T$. To do this, we will follow the ideas of Section 3 in \cite{Tan2017}. First, we choose $\gamma_2$ in \eqref{eq:tau_2b} to be equal to $\pi^2/320$, and define a stopping time $\tau \coloneqq \min\braces{k \geq \tau_{2b} ~\colon~ \Psi(\boldy^{(k)}) \geq \pi^2/16}$. The stopping time argument in \cite{Tan2017} then implies that $\P\braces{\tau = \infty} \geq 0.95$. Also, we may use Lemma 2.2 therein to guarantee conditional contraction in each step. Namely, if $\Psi(\boldy^{(k)}) \leq \pi^2/16$ for any $k$, then
	\begin{equation*}
	\E\braces*{\Psi(\boldy^{(k+1)}) ~\vline~ \mathcal{F}_k} \leq \paren*{1-\frac{1}{2d}}\Psi(\boldy^{(k)}).
	\end{equation*}
	
	If we write $Y_k \coloneqq (1-1/2d)^{-k}\Psi(\boldy^{(T+k)})\cdot \mathbf{1}_{\tau > T+k}$, then the above bound allows us to compute
	\begin{align*}
	\E\braces*{Y_{k+1} ~\vline~ \mathcal{F}_{T+k}} & = \paren*{1-\frac{1}{2d}}^{-k-1}\E\braces*{\Psi(\boldy^{(T+k+1)}) ~\vline~ \mathcal{F}_{T+k}} \mathbf{1}_{\tau > {T+k}} \\
	& \leq \paren*{1-\frac{1}{2d}}^{-k-1}\paren*{1-\frac{c}{d}}\Psi(\boldy^{(T+k)})\mathbf{1}_{\tau > T+k} \\
	& \leq \paren*{1-\frac{1}{2d}}^{-k}\Psi(\boldy^{(T+k)})\mathbf{1}_{\tau > T + k-1} \\
	& = Y_k,
	\end{align*}
	and we see that $\braces{Y_k}_k$ is a supermartingale with respect to the filtration $\braces{\mathcal{F}_{T+k}}_k$.
	
	By the supermartingale inequality, there is a probability $0.95$ event over which
	\begin{equation*}
	\sup_{k \geq 0}Y_k \leq 20\Psi(\boldy^{(\tau_{2b})}) \leq 20\gamma_2.
	\end{equation*}
	On the intersection of this event, and that on which $\tau = \infty$, we have that
	\begin{equation*}
	\Psi(\boldy^{(T+k)}) \leq 20\gamma_2\paren*{1-\frac{1}{2d}}^{k} \leq \paren*{1-\frac{1}{2d}}^{k}
	\end{equation*}
	for all $k \geq 0$ as we wanted. If we total the measure of the excluded bad events, the final success probability is at least $0.8 - C/\log d - 1/d$ as promised.
\end{proof}

We now discuss some straightforward extensions of this result. First, the lack of a high probability guarantee is a little unfortunate, and results from having to apply the supermartingale inequality. We are unsure whether this can be overcome theoretically, but we can easily modify the algorithm so that convergence holds with high probability. To do this, we use the ``majority vote'' procedure described in \cite{Tan2017}. The price we have to pay is an additional $\log(1/\delta)$ factor on the number of iterations, where $\delta$ is the total error probability we can tolerate.

Second, in the algorithm we presented, we required fresh samples to be used in every update step. Once we are in the linear convergence regime, however, samples can actually be reused so long as we choose uniformly from $\Omega(d)$ of them (see \cite{Tan2017}).

\section{Conclusion and discussion} \label{sec:conclusion}

In this paper, we have analyzed the convergence of constant step-size stochastic gradient descent for the non-convex, non-smooth phase retrieval objective \eqref{eq:intensities_squared_loss}, for which we assume Gaussian sampling vectors, and use an arbitrary initialization. The main idea was to view the SGD sequence as a Markov chain on a summary state space, and then use the natural $1/d$ step size scaling to argue that as $d$ tends to infinity, the process trajectory converges to something we understand. We believe that our proof framework and techniques will have applications beyond the vanilla phase retrieval model, and indeed inform the theory of non-convex optimization in general.

\subsection{Noisy measurements and non-Gaussian measurements}

We have analyzed phase retrieval in the noiseless setting, as is customary in the literature. It is easy to see that the arguments still go through for an additive noise model, except that we may now need to either use batch updates, or reduce the step size further, in order to get convergence. It will also be interesting to see whether the results we have can be extended to the setting of non-Gaussian sampling vectors. Our state space argument relies on the rotational symmetry of the sampling vectors, and this clearly will not hold in the non-Gaussian setting. However, it may be possible for this to be overcome using approximations.

\subsection{Extensions to other single index models}

Phase retrieval is an example of a single index model with the link function $f(t) = \abs{t}$. One can easily check that the state space argument generalizes to models with other link functions. Less clear is how to generalize the other arguments in the paper. Nonetheless, we expect this to be not too difficult, so long as we assume some natural regularity conditions on the link function. Finally, we conjecture that similar ideas can work for analyzing SGD for low-rank matrix sensing, since heuristically what makes everything work is the underlying low-dimensional structure in the problem.

\subsection{Nonconvex optimization}

In the introduction to this paper, we have already talked about the growing interest in a theoretical understanding of first-order methods applied to non-convex problems. Here, we reiterate that the main contribution of our paper should be seen as not simply providing a convergence guarantee for SGD, but also one that has close to optimal sample and computational complexity. We are able to achieve this through a careful analysis of the ``essential dynamics'' of the SGD process, as represented by the summary state space.

Standard proofs of convergence for first order methods applied to convex problems proceed by tracking one of the following three quantities: $\norm{\boldx_k - \boldx^*}^2$, $f(\boldx_k) - f(\boldx^*)$, or $\norm{\grad f(\boldx^*)}$. Under our framework, this can be seen as implicitly using a one-dimensional state space. In non-convex optimization, however, it makes sense to use a multi-dimensional state space, whereby we measure ``progress'' in terms of multiple quantities. The number of such quantities one needs to track for a given problem can then perhaps be used to define a notion of ``complexity'' for that problem.

\subsection*{Acknowledgements}

Y.T. was partially supported by NSF CCF-1740855, and completed part of this manuscript while visiting the Simons Institute for the Theory of Computing. R.V. was partially supported by U.S. Air Force Grant FA9550-18-1-0031. Y.T. would like to thank Xiang Cheng, Jelena Diakonikolas, Michael Jordan, and Yian Ma for helpful discussions.

\nocite{*}
\bibliographystyle{acm}
\bibliography{Projects-PR_random_init}

\appendix

\section{State space calculations} \label{sec:state_space_calculations}

Recall that we let $\boldx^{(0)},\boldx^{(1)},\boldx^{(2)},\ldots$ denote the sequence obtained by iteratively performing the SGD update \eqref{eq:SGD_update} with constant step size $\eta = \frac{1}{d}$.

\begin{lemma}
	The sequence $\boldx^{(0)},\boldx^{(1)},\boldx^{(2)},\ldots$ is a Markov chain whose transition kernel has the random mapping representation
	\begin{equation}
	\boldx^{(k+1)} = \boldx^{(k)} + \frac{1}{d}\Delta(\boldx^{(k)}),
	\end{equation}
	where
	\begin{equation} \label{eq:defn_of_Delta}
	\Delta(\boldx) = \paren*{\inprod{\bolda,\boldx^*-\boldx} - 2\mathbf{1}_{\mathcal{A}}\cdot\inprod{\bolda,\boldx^*}}\bolda,
	\end{equation}
	$\bolda \sim \Unif(\sqrt{d}S^{d-1})$, and $\mathcal{A}$ is the event that $\sign\paren{\inprod{\bolda,\boldx}} \neq \sign\paren{\inprod{\bolda,\boldx^*}}$.
\end{lemma}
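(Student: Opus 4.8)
The plan is to treat this lemma as an algebraic rewriting of the update rule \eqref{eq:SGD_update}, together with a one-line observation giving the Markov property. Abbreviate $p \coloneqq \inprod{\bolda^{(k)},\boldx^{(k)}}$ and $q \coloneqq \inprod{\bolda^{(k)},\boldx^*}$. Assumption (C) gives $b^{(k)} = \abs{q}$, so the increment in \eqref{eq:SGD_update} with $\eta = 1/d$ is exactly $\boldx^{(k+1)} - \boldx^{(k)} = \frac{1}{d}\paren*{\sign(p)\abs{q} - p}\bolda^{(k)}$. It therefore suffices to verify the scalar identity
\begin{equation*}
\sign(p)\abs{q} - p = (q - p) - 2\,\mathbf{1}\braces*{\sign(p) \neq \sign(q)}\cdot q,
\end{equation*}
after which \eqref{eq:defn_of_Delta} drops out by factoring $\bolda^{(k)}$ out and rewriting $q - p = \inprod{\bolda^{(k)},\boldx^* - \boldx^{(k)}}$ and $q = \inprod{\bolda^{(k)},\boldx^*}$, with $\mathcal{A}$ being precisely the event $\{\sign(p)\neq\sign(q)\}$.

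First I would dispatch the identity by a case split on the relative signs of $p$ and $q$. If $\sign(p)=\sign(q)$, then $\sign(p)\abs{q} = \sign(q)\abs{q} = q$, so the left side equals $q - p$ and the indicator on the right vanishes, matching. If $\sign(p)\neq\sign(q)$, then $\sign(p)\abs{q} = -\sign(q)\abs{q} = -q$, so the left side equals $-q - p$, while the right side equals $(q-p) - 2q = -q - p$; again they agree. The only remaining configuration is $p = 0$ or $q = 0$, which is a $\mathbb{P}$-null event under the continuous law $\bolda^{(k)} \sim \Unif(\sqrt{d}\,\mathbb{S}^{d-1})$ whenever $\boldx^{(k)},\boldx^*\neq 0$, so it can be absorbed into a null set once any fixed convention for $\sign(0)$ is adopted.

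For the Markov property and the random-mapping representation, I would invoke Assumption (A): the sampling vector $\bolda^{(k)}$ is independent of $\bolda^{(1)},\ldots,\bolda^{(k-1)}$, hence of $\boldx^{(0)},\ldots,\boldx^{(k)}$, since the latter are deterministic functions of those earlier vectors and the fixed initialization. Consequently $\boldx^{(k+1)}$ is obtained from $\boldx^{(k)}$ by applying the single randomized map $\boldx \mapsto \boldx + \frac{1}{d}\Delta(\boldx)$ with fresh randomness $\bolda = \bolda^{(k)}$ drawn from $\Unif(\sqrt{d}\,\mathbb{S}^{d-1})$ independently of the past; this is exactly the assertion that $\braces{\boldx^{(k)}}_k$ is a Markov chain whose transition kernel is the pushforward of $\Unif(\sqrt{d}\,\mathbb{S}^{d-1})$ under this map.

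There is no substantive obstacle here: essentially all the content is the sign bookkeeping in the scalar identity, and the only point requiring a word of care is the boundary event $\{p=0\}\cup\{q=0\}$, which is harmless because it is a null set for the continuous sampling distribution.
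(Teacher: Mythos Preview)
Your proposal is correct and follows essentially the same route as the paper: both rewrite the scalar factor $\sign(p)\abs{q}-p$ as $(q-p)-2\,\mathbf{1}\{\sign(p)\neq\sign(q)\}\,q$, with the paper doing this via $\sign(p)\abs{q}=\sign(p)\sign(q)\,q$ and you via an explicit case split. You are more careful than the paper about the Markov property and the null-set boundary case, but these are elaborations rather than a different argument.
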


\begin{proof}
	The fact that the sequence is a Markov chain is clear. After dropping indices, we may write the update step \eqref{eq:SGD_update} as
	\begin{align*}
	\Delta(\boldx) & \coloneqq \paren*{\sign(\inprod{\bolda,\boldx})\abs{\inprod{\bolda,\boldx^*}}-\inprod{\bolda,\boldx}}\bolda \\
	& = \paren*{\sign(\inprod{\bolda,\boldx})\sign(\inprod{\bolda,\boldx^*})\inprod{\bolda,\boldx^*}-\inprod{\bolda,\boldx}}\bolda \\
	& = \paren*{\inprod{\bolda,\boldx^*-\boldx} - 2\mathbf{1}\braces{\inprod{\bolda,\boldx}\inprod{\bolda,\boldx^*} <0}\inprod{\bolda,\boldx^*}}\bolda
	\end{align*}
\end{proof}

\begin{proof}[Proof of Theorem \ref{thm:state_space_MC}]
Abusing notation slightly, let us define the state space updates
\begin{equation*}
\alpha(\boldx) \coloneqq d\cdot\paren*{r^2(\boldx+\frac{1}{d}\Delta(\boldx)) - r^2(\boldx)}
\end{equation*}
\begin{equation*}
\beta(\boldx) \coloneqq d\cdot \paren*{s(\boldx+\frac{1}{d}\Delta(\boldx)) - s(\boldx)}.
\end{equation*}

Our goal is to compute formulas for the distributions for $\alpha(\boldx)$ and $\beta(\boldx)$, in particular showing that they depend on $\boldx$ only through $\boldy$. To this end, we simplify notation, denoting $r = r(\boldx)$, $s = s(\boldx)$, and $\theta = \theta(\boldx)$. Furthermore, define $\boldx^\perp \coloneqq \frac{\boldP_{\boldx^*}^\perp\boldx}{\norm{\boldP_{\boldx^*}^\perp\boldx}}$.

We can decompose $\boldx$ into its components parallel and perpendicular to $\boldx^*$, writing
\begin{equation} \label{eq:formula_for_x}
\boldx = r\cos\theta \boldx^* + r\sin\theta \boldx^\perp.
\end{equation}
Let $\bolda \sim \Unif(\sqrt{d}S^{d-1})$ be the random vector used to generate $\Delta(\boldx)$. We also have the orthogonal decomposition
\begin{equation} \label{eq:formula_for_u}
\bolda = a_1 \boldx^* + a_2 \boldx^\perp + \boldr
\end{equation}
where $a_1$, $a_2$ are the marginals of $\bolda$ along $\boldx^*$ and $\boldx^\perp$ respectively, and $\boldr$ is defined as the remainder in the decomposition above.

Combining these formulas, we immediately get
\begin{equation} \label{eq:formula_for_x_dot_u}
\inprod{\bolda,\boldx} = r\cos\theta a_1 + r\sin\theta a_2.
\end{equation}

To compute the formula for $\alpha(\boldx)$, we first expand
	\begin{align} \label{eq:expanding_alpha}
	\alpha(\boldx) & = d\cdot \paren*{r^2(\boldx + \Delta(\boldx)/d) - r^2(\boldx)} \nonumber \\
	& = d \cdot\paren*{\norm{\boldx+\Delta(\boldx)/d}^2 - \norm{\boldx}^2} \nonumber \\
	& = 2\inprod{\boldx,\Delta(\boldx)} + \norm{\Delta(\boldx)}^2/d.
	\end{align}
	Now use equations \eqref{eq:defn_of_Delta}, \eqref{eq:formula_for_u} and \eqref{eq:formula_for_x_dot_u} to write
	\begin{equation*}
	\inprod{\Delta(\boldx),\boldx} = \paren*{\inprod{\bolda,\boldx^*-\boldx} - 2\mathbf{1}_{\mathcal{A}} \cdot \inprod{\bolda,\boldx^*}}\inprod{\bolda,\boldx},
	\end{equation*}
	and
	\begin{align} \label{eq:squared_norm_for_Delta}
	\frac{\norm{\Delta(\boldx)}^2}{d} & = \frac{\norm{\bolda}^2}{d}\cdot \paren*{\inprod{\bolda,\boldx^*-\boldx}^2 + 4\inprod{\bolda,\boldx^*}^2\mathbf{1}_{\mathcal{A}} - 4\inprod{\bolda,\boldx^*-\boldx}\inprod{\bolda,\boldx^*}\mathbf{1}_{\mathcal{A}}} \nonumber \\
	& = \inprod{\bolda,\boldx^*-\boldx}^2 + 4\inprod{\bolda,\boldx}\inprod{\bolda,\boldx^*}\mathbf{1}_{\mathcal{A}}
	\end{align}
	
	Plugging these into \eqref{eq:expanding_alpha}, we have
	\begin{align*}
	\alpha(\boldx) & = 2\inprod{\bolda,\boldx^*-\boldx}\inprod{\bolda,\boldx} +  \inprod{\bolda,\boldx^*-\boldx}^2 \\
	& = \inprod{\bolda,\boldx^*}^2-\inprod{\bolda,\boldx}^2 \\
	& = a_1^2 - \paren*{r\cos\theta a_1 + r\sin\theta a_2}^2  \\
	& = (1-r^2\cos^2\theta) a_1^2-r^2\sin^2\theta a_2^2 +r^2\sin\theta\cos\theta a_1a_2
	\end{align*}
	
	For the second formula, we note that $\beta(\boldx) = \inprod{\Delta(\boldx),\boldx^*}$, and write
	\begin{align*}
	\beta(\boldx) & = \paren*{\inprod{\bolda,\boldx^*-\boldx} - 2\mathbf{1}_{\mathcal{A}} \cdot\inprod{\bolda,\boldx^*}}\inprod{\bolda,\boldx^*} \\
	& = \paren*{a_1(1-r\cos\theta) - a_2r\sin\theta - 2\textbf{1}_{\mathcal{A}}a_1}a_1 \\
	& = (1-r\cos\theta)a_1^2 - r\sin\theta a_1a_2 - 2\textbf{1}_{\mathcal{A}}a_1^2.
	\end{align*}
	Here, the second equality follows from a combination of \eqref{eq:formula_for_u}, \eqref{eq:formula_for_x} and \eqref{eq:formula_for_x_dot_u}.
\end{proof}

\begin{lemma}[Expectations involving $\mathcal{A}$]
	The distribution of the event $\mathcal{A} = \mathcal{A}(\theta)$ depends only on the angle $\theta$ between $\boldx$ and $\boldx^*$. Furthermore, using the formula \eqref{eq:formula_for_u}, we have the following identities.
	\begin{equation}
	\P\braces{\mathcal{A}(\theta)} = \frac{\theta}{\pi}
	\end{equation}
	\begin{equation} \label{eq:angle_identity_1}
	\E\braces{a_1^2 \mathbf{1}_{\mathcal{A}(\theta)}} = \frac{1}{2\pi}\paren*{2\theta-\sin(2\theta)}
	\end{equation}
	\begin{equation} \label{eq:angle_identity_2}
	\E\braces{a_1a_2 \mathbf{1}_{\mathcal{A}(\theta)}} = \frac{1}{2\pi}\paren*{\cos(2\theta)-1}
	\end{equation}
\end{lemma}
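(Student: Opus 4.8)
The plan is to reduce all three claims to a two-dimensional computation exploiting the rotational symmetry of the sampling vector. Since $\inprod{\bolda,\boldx^*} = a_1$ and, by \eqref{eq:formula_for_x_dot_u}, $\inprod{\bolda,\boldx} = r\cos\theta\,a_1 + r\sin\theta\,a_2$ with $r > 0$, the event $\mathcal{A}$ is exactly
\[
\mathcal{A} = \braces*{\sign(\cos\theta\,a_1 + \sin\theta\,a_2) \neq \sign(a_1)},
\]
which depends only on the pair $(a_1,a_2)$ and the angle $\theta$; this already gives the first assertion. The structural fact I would invoke next is that the $2$-dimensional marginal $(a_1,a_2)$ of $\Unif(\sqrt{d}\,\mathbb{S}^{d-1})$ is \emph{rotationally symmetric} in $\R^2$ — this follows from the orthogonal invariance of the uniform measure on the sphere under rotations of $\mathrm{span}(\boldx^*,\boldx^\perp)$. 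Writing $\rho := \sqrt{a_1^2+a_2^2}$, we may therefore represent $(a_1,a_2) = (\rho\cos\phi,\rho\sin\phi)$ with $\phi \sim \Unif[0,2\pi)$ independent of $\rho$.

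Under this change of variables $a_1 = \rho\cos\phi$ and $\cos\theta\,a_1 + \sin\theta\,a_2 = \rho\cos(\phi-\theta)$, so $\mathcal{A}$ is precisely the event that $\cos\phi$ and $\cos(\phi-\theta)$ have opposite signs, i.e.\ that $\phi$ lies in the symmetric difference of the two arcs $\{\cos\phi > 0\}$ and $\{\cos(\phi-\theta)>0\}$, each of length $\pi$. A short computation, valid for $\theta \in [0,\pi]$ (the range of $\arccos$), identifies this symmetric difference as $(-\tfrac{\pi}{2},\theta-\tfrac{\pi}{2}) \cup (\tfrac{\pi}{2},\theta+\tfrac{\pi}{2})$, of total length $2\theta$, whence $\P\braces{\mathcal{A}(\theta)} = 2\theta/(2\pi) = \theta/\pi$.

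For the two moment identities I would use that $a_1^2\mathbf{1}_{\mathcal{A}}$ and $a_1 a_2\mathbf{1}_{\mathcal{A}}$ each factor as $\rho^2$ times a function of $\phi$ alone, and that $\rho$ is independent of $\phi$. Since $\norm{\bolda}^2 = d$ almost surely, symmetry gives $\E\braces{a_i^2} = 1$ and hence $\E\braces{\rho^2} = 2$. It then remains to evaluate $\E\braces{\cos^2\phi\,\mathbf{1}_{\mathcal{A}}}$ and $\E\braces{\cos\phi\sin\phi\,\mathbf{1}_{\mathcal{A}}}$ by integrating $\tfrac{1+\cos 2\phi}{2}$ and $\tfrac{\sin 2\phi}{2}$ over the arc found above; each integral splits into two pieces which turn out to coincide, giving $\tfrac{1}{2\pi}\cdot\tfrac{2\theta-\sin 2\theta}{2}$ and $\tfrac{1}{2\pi}\cdot\tfrac{\cos 2\theta-1}{2}$ respectively, and multiplying by $\E\braces{\rho^2}=2$ yields \eqref{eq:angle_identity_1} and \eqref{eq:angle_identity_2}. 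The only real obstacle is bookkeeping: pinning down the arc $\mathcal{A}$ as a function of $\theta$ correctly, and handling the degenerate cases $\theta \in \{0,\pi\}$ where $\boldx^\perp$ is undefined but both sides vanish by continuity; the trigonometric integrals themselves are entirely routine.
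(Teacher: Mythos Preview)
Your proposal is correct and follows essentially the same route as the paper: write $(a_1,a_2)$ in polar form, use the rotational invariance of the two-dimensional marginal to obtain independence of the angle and the radius, identify $\mathcal{A}$ as a pair of arcs of total length $2\theta$, and then reduce the moment computations to elementary trigonometric integrals combined with $\E\rho^2 = 2$. If anything, your write-up is more explicit than the paper's (you name both arcs and address the endpoint cases $\theta\in\{0,\pi\}$), but there is no substantive difference in approach.
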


\begin{proof}
	For the first identity, let $\tilde{\bolda} \coloneqq (a_1,a_2)$. We may write $a_1 = \norm{\tilde{\bolda}}\cos t$ and $a_2 = \norm{\tilde{\bolda}}\sin t$. Event $\mathcal{A}$ occurs precisely when $\frac{\pi}{2} \leq t \leq \frac{\pi}{2} + t$.
	
	To prove \eqref{eq:angle_identity_1}, observe that $t$ is independent of $\norm{\tilde{\bolda}}$. This allows us to compute:
	\begin{align*}
	\E\braces{a_1^2 \mathbf{1}_\mathcal{A}} & = \E\braces{(a_1^2+a_2^2)}\E\braces{ \cos^2(t) \mathbf{1}_\mathcal{A}} \\
	& = 2\cdot\frac{1}{\pi} \int_{\frac{\pi}{2}}^{\frac{\pi}{2}+\theta} \cos^2(t)dt \\
	&= \frac{2}{\pi} \int_{0}^{\theta} \sin^2(t)dt \\
	& = \frac{1}{\pi} \int_0^\theta 1 - \cos(2t) dt \\
	& = \frac{1}{\pi} \sqbracket*{t-\frac{\sin(2t)}{2}}_0^\theta \\
	& = \frac{1}{\pi}\paren*{\theta-\frac{\sin(2\theta)}{2}}.
	\end{align*}
	A similar calculation yields \eqref{eq:angle_identity_2}.
\end{proof}

\begin{proof}[Proof of Lemma \ref{lem:SGD_drift}]
	It is easy to compute $\E u^2 = 1$, and $\E uv = 0$. Applying this and \eqref{eq:angle_identity_1} in the previous lemma to \eqref{eq:formula_for_alpha} and \eqref{eq:formula_for_beta} yields \eqref{eq:expectation_of_alpha} and \eqref{eq:expectation_of_beta}.
\end{proof}

\section{Properties of subexponential random variables} \label{sec:properties_of_subexponentials}

Subexponential random variables are defined in terms of tail bounds, and can also be equivalently defined as elements of an Orlicz space with Orlicz norm
\[
\norm{X}_{\psi_1} \coloneqq \inf\braces{C \colon \E\exp(X/C) \leq 2}.
\]

One may easily check that this is a norm, which allows for easy tail bounds for random variables that are sums of subexponential random variables. We will only state the propreties of subexponential variables needed in our paper, and refer the interested reader to the textbook \cite{Vershynin}.

\begin{lemma}[$\psi_1$-norm for $\alpha(\boldy)$ and $\beta(\boldy)$] \label{lem:psi_1_norm_bounds}
	Let $\mathcal{D} \subset \mathcal{Y}$ be a compact domain. Then the subexponential norms of $\alpha(\boldy)$ and $\beta(\boldy)$ are uniformly bounded for $\boldy \in \mathcal{Y}$.
\end{lemma}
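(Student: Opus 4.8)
The plan is to reduce everything to a single classical fact: the marginals of the uniform distribution on $\sqrt{d}\mathbb{S}^{d-1}$ are sub-Gaussian with a $\psi_2$-norm bounded by a universal constant, \emph{independent of $d$}. Concretely, if $\boldw \sim \Unif(\sqrt{d}\mathbb{S}^{d-1})$ then $\norm{\boldw}_{\psi_2} \leq C$ for an absolute constant $C$ (see \cite{Vershynin}), so in particular its first two coordinates $u$ and $v$ each satisfy $\norm{u}_{\psi_2}, \norm{v}_{\psi_2} \leq C$ uniformly in $d$. Applying the product bound $\norm{XY}_{\psi_1} \leq \norm{X}_{\psi_2}\norm{Y}_{\psi_2}$ for sub-Gaussian $X, Y$, it follows that $\norm{u^2}_{\psi_1}$, $\norm{v^2}_{\psi_1}$, and $\norm{uv}_{\psi_1}$ are all bounded by $C^2$, again uniformly in $d$.

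Next I would rewrite the coefficients appearing in \eqref{eq:formula_for_alpha} and \eqref{eq:formula_for_beta} purely in terms of the state variables, using $r\cos\theta = s$ and $r\sin\theta = \sqrt{r^2 - s^2}$ (valid since $s^2 \leq r^2$ on $\mathcal{Y}$), so that $r^2\cos^2\theta = s^2$, $r^2\sin^2\theta = r^2 - s^2$, and $r^2\sin\theta\cos\theta = s\sqrt{r^2-s^2}$. Thus $\alpha(\boldy)$ has deterministic coefficients $1 - s^2$, $-(r^2 - s^2)$, $s\sqrt{r^2-s^2}$, and $\beta(\boldy)$ has coefficients $1 - s - 2\boldsymbol{1}_{\mathcal{A}}$ and $-\sqrt{r^2-s^2}$. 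Each of these is bounded in absolute value by a continuous function of $\boldy = (r^2,s)$ on $\mathcal{Y}$ (with the indicator $\boldsymbol{1}_{\mathcal{A}} \in \braces{0,1}$ contributing at most an additive $2$), hence by a constant $C_{\mathcal{D}}$ uniformly over the compact set $\mathcal{D}$.

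Combining the two ingredients through the triangle inequality and positive-homogeneity of the $\psi_1$-norm, for every $\boldy \in \mathcal{D}$ we get $\norm{\alpha(\boldy)}_{\psi_1} \leq \paren{\abs{1-s^2} + \abs{r^2-s^2} + \abs{s}\sqrt{r^2-s^2}}\,C^2 \leq 3 C_{\mathcal{D}} C^2$ and $\norm{\beta(\boldy)}_{\psi_1} \leq \paren{\abs{1-s} + 2 + \sqrt{r^2-s^2}}\,C^2 \leq 3 C_{\mathcal{D}} C^2$, which is the desired uniform bound. The only genuinely non-routine step is the first one — the dimension-free sub-Gaussianity of the sphere marginals — and there I would simply invoke the standard result in \cite{Vershynin}; the remainder is bookkeeping with the convexity of the $\psi_1$-norm and the compactness of $\mathcal{D}$.
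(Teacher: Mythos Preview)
Your proposal is correct and follows essentially the same route as the paper: bound the deterministic coefficients uniformly over the compact set $\mathcal{D}$, invoke the dimension-free sub-Gaussianity $\norm{u}_{\psi_2},\norm{v}_{\psi_2}\lesssim 1$ of the sphere marginals, and combine via the triangle inequality and the product bound $\norm{XY}_{\psi_1}\leq\norm{X}_{\psi_2}\norm{Y}_{\psi_2}$. You are slightly more explicit than the paper in that you also spell out the bound for $\beta(\boldy)$ (including the harmless indicator $\boldsymbol{1}_{\mathcal{A}}$) and work in the $(r^2,s)$ coordinates rather than $(r,\theta)$, but these are cosmetic differences.
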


\begin{proof}
	Suppose we have $r(\boldy) \leq R$ for all $\boldy \in \mathcal{Y}$. Then
	\begin{align*}
	\norm{\alpha(\boldy)}_{\psi_1} & = \norm{(1-r^2\cos^2\theta) \cdot u^2-r^2\sin^2\theta \cdot v^2 +r^2\sin\theta\cos\theta \cdot uv}_{\psi_1} \\
	& \leq \abs{1-r^2\cos^2\theta} \cdot\norm{u^2}_{\psi_1} +r^2\sin^2\theta \cdot\norm{v^2}_{\psi_1} +\abs{r^2\sin\theta\cos\theta} \cdot\norm{uv}_{\psi_1} \\
	& \leq \paren{R^2+1} \cdot \paren*{\norm{u^2}_{\psi_1} + \norm{v^2}_{\psi_1} + \cdot\norm{uv}_{\psi_1}} \\
	& \leq \paren{R^2+1} \cdot \paren*{\norm{u}_{\psi_2}^2 + \norm{v}_{\psi_2}^2 + \cdot\norm{u}_{\psi_2}\norm{v}_{\psi_2}} \\
	& \lesssim R^2 + 1.
	\end{align*}
	The first inequality is an application of the triangle inequality, the third follows from the following basic property for subexponential random variables: $\norm{XY}_{\psi_1} \leq \norm{X}_{\psi_2}\norm{Y}_{\psi_2}$. Finally, it is easy to check that $\norm{u}_{\psi_2} \lesssim 1$.
\end{proof}

\section{Lemmas for drift} \label{sec:drift_lemmas}

\begin{proof}[Proof of Lemma \ref{lem:lower_bound_for_drift}]
	For each fixed $r$, $\bar{\beta}$ is an odd function with respect to $s$, so it suffices to prove the statement for $s>0$. Let us now compute its derivative with respect to $s$. Differentiating \eqref{eq:expectation_of_beta} first with respect to $\theta$, we get
	\begin{align*}
	\frac{d}{d\theta}\bar{\beta}(r^2,s) & = r\sin\theta - \frac{1}{\pi}\paren*{2-2\cos(2\theta)} \\
	& = r\sin\theta - \frac{4}{\pi}\sin^2\theta.
	\end{align*}
	Next, note that since $s = r\cos\theta$, we have
	\begin{equation*}
	\frac{d\theta}{ds} = \paren*{\frac{ds}{d\theta}}^{-1} = -\frac{1}{r\sin\theta}
	\end{equation*}
	Using the chain rule and then simplifying, we thereby get
	\begin{align*}
	\frac{d}{ds}\bar{\beta}(r^2,s) & = \paren*{r\sin\theta - \frac{4}{\pi}\sin^2\theta} \cdot \paren*{-\frac{1}{r\sin\theta}} \\
	& = \frac{4}{\pi} \frac{\sqrt{r^2-s^2}}{r^2} - 1.
	\end{align*}
	
	From this expression, we can see that for any fixed $r > 0$, the function $s \mapsto \bar{\beta}(r^2,s)$ is concave downwards on $[0,r]$, with $\bar{\beta}(r^2,0) = 0$, and $\partial_s\bar{\beta}(r^2,s)\vline_{s=0} > 0$. This implies that the graph of $\bar{\beta}(r^2,s)$ as a function of $s$ lies beneath the line passing through the origin with slope $\partial_s \bar{\beta}(r^2,s)\vline_{s=0}$. When $r \geq \frac{1}{2}$, we have
	\begin{equation*}
	\partial_s \bar{\beta}(r^2,s)\vline_{s=0} = \frac{4}{\pi r} - 1 < \frac{8}{\pi} - 1
	\end{equation*}
	thereby giving the upper bound.
	
	For the lower bound, first note that concavity also implies that $\bar{\beta}(r^2,s) \geq \frac{s}{s'}\bar{\beta}(r^2,s')$ for any $0 < s < s' < r$. Now, one may easily check that $\bar{\beta}(1,1) = 0$, so that $\bar{\beta}(1,1-\epsilon) > 0$ for $\epsilon$ small enough. By continuity, there is some $\eta > 0$ for which
	\begin{equation*}
	\inf_{\abs{r^2-1} \leq \eta} \bar{\beta}(r^2,1-\epsilon) \geq \frac{\bar{\beta}(1,1-\epsilon)}{2}.
	\end{equation*}
	
	Indeed, since $\partial_r \bar{\beta}(r^2,s) =  -\cos\theta$, one may even provide a precise formula if one wishes. Set $\underline{b} \coloneqq \frac{\bar{\beta}(1,1-\epsilon)}{2(1-\epsilon)}$. This is the universal constant we want.
\end{proof}

\begin{lemma}[Lipschitz continuity] \label{lem:Lipschitz_continuity}
	Fix $\epsilon > 0$ in the previous lemma, and by making $\epsilon$ and $\eta$ smaller if necessary, assume that $\abs{s} < r - \epsilon/2$ for all $(r^2,s) \in \mathcal{D}$, where $\mathcal{D} \coloneqq \braces{(r^2,s) \in \mathcal{Y} ~\colon~ \abs{s} \leq 1-\epsilon, \abs{r^2-1} \leq \eta}$. Then $\bar{\beta}$ is Lipschitz continuous on $\mathcal{D}$ with Lipschitz constant bounded by a universal constant $L$ depending only on $\epsilon$.
\end{lemma}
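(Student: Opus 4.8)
The plan is to show that $\bar\beta$, regarded as a function of $(r,s)$, is continuously differentiable on $\mathcal{D}$ with both partial derivatives bounded in absolute value by a constant depending only on $\epsilon$, and then to invoke the mean value inequality on the convex set $\mathcal{D}$. Recall from \eqref{eq:expectation_of_beta} that $\bar\beta(r^2,s) = 1 - s - \tfrac{1}{\pi}(2\theta - \sin 2\theta)$ with $\theta = \arccos(s/r)$. On $\mathcal{D}$ we have $r\ge\sqrt{1-\eta}$, so $r$ is bounded away from $0$, and the standing hypothesis $\abs{s} < r - \epsilon/2$ keeps $\cos\theta = s/r$ bounded away from $\pm 1$; hence $\theta$ lies in a compact subinterval of $(0,\pi)$ and $\bar\beta$ is smooth there. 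Moreover, once $\eta$ is small relative to $\epsilon$ the $\mathcal{Y}$-constraint $s^2\le r^2$ is inactive on $\mathcal{D}$, so in the $(r,s)$ coordinates $\mathcal{D}$ is simply the rectangle $[\sqrt{1-\eta},\sqrt{1+\eta}]\times[-(1-\epsilon),\,1-\epsilon]$, which is convex.

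Next I would record the two partials. The $s$-derivative was already computed in the proof of Lemma \ref{lem:lower_bound_for_drift}:
\[
\partial_s\bar\beta(r^2,s) = \frac{4}{\pi}\,\frac{\sqrt{r^2-s^2}}{r^2} - 1 ,
\]
and since $\sqrt{r^2-s^2}\le r$ and $r\ge\sqrt{1-\eta}$, this is bounded in absolute value by $\tfrac{4}{\pi\sqrt{1-\eta}} + 1$ on $\mathcal{D}$. For the $r$-derivative, differentiating $\theta = \arccos(s/r)$ gives $\partial_r\theta = \tfrac{s}{r\sqrt{r^2-s^2}}$, and since $\tfrac{d}{d\theta}(2\theta-\sin2\theta) = 4\sin^2\theta = 4(r^2-s^2)/r^2$, the chain rule gives
\[
\partial_r\bar\beta(r^2,s) = -\frac{1}{\pi}\cdot\frac{4(r^2-s^2)}{r^2}\cdot\frac{s}{r\sqrt{r^2-s^2}} = -\frac{4}{\pi}\,\frac{s\sqrt{r^2-s^2}}{r^3} ,
\]
whose absolute value is at most $\tfrac{4}{\pi}\cdot\tfrac{\abs{s}}{r^2}\le\tfrac{4}{\pi(1-\eta)}$ on $\mathcal{D}$.

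Writing $L$ for the larger of these two bounds (which, since $\eta = \eta(\epsilon)$, depends only on $\epsilon$), the proof concludes as follows: for $\boldy,\boldy'\in\mathcal{D}$ with coordinates $(r,s)$ and $(r',s')$, the segment joining them stays in $\mathcal{D}$ by convexity, so integrating $\nabla\bar\beta$ along it and using that the dual norm of $\ell^1$ is $\ell^\infty$ yields $\abs{\bar\beta(\boldy) - \bar\beta(\boldy')} \le \sup_{\mathcal{D}}\norm{\nabla\bar\beta}_\infty\paren*{\abs{r-r'} + \abs{s-s'}} \le L\paren*{\abs{r-r'} + \abs{s-s'}}$, which is the asserted bound. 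There is no serious obstacle here; the only point requiring the lemma's hypothesis $\abs{s}<r-\epsilon/2$ is to keep $s/r$ away from $\pm 1$ so that $\arccos$ is differentiable and $\partial_r\theta$ does not blow up, and the only mild bookkeeping is checking that $\mathcal{D}$ is convex in the $(r,s)$ chart.
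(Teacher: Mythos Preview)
Your argument is correct. Both you and the paper exploit the same underlying fact---$\bar\beta$ is smooth on $\mathcal{D}$ and $\mathcal{D}$ is compact and bounded away from the singularity $\abs{s}=r$---but the routes differ in presentation. The paper first splits off the linear $-s$ piece by the triangle inequality, obtaining $\abs{\bar\beta(\boldy)-\bar\beta(\boldy')}\le\abs{s-s'}+\tfrac{4}{\pi}\abs{\theta-\theta'}$, and then simply observes that $\theta=\arccos(s/r)$ is jointly differentiable in $(r^2,s)$ on a compact set bounded away from $r=\abs{s}$, hence Lipschitz there, without writing down the derivatives. You instead compute $\partial_s\bar\beta$ and $\partial_r\bar\beta$ explicitly and bound them directly. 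Your version is more quantitative (and incidentally reveals that $\partial_r\bar\beta=-\tfrac{4}{\pi}s\sqrt{r^2-s^2}/r^3$ actually stays bounded even as $\abs{s}\to r$, so the hypothesis $\abs{s}<r-\epsilon/2$ is needed only to ensure differentiability of $\theta$ along the segment, not to control the size of the gradient of $\bar\beta$ itself); the paper's version is shorter and avoids the bookkeeping of checking convexity of $\mathcal{D}$ in the $(r,s)$ chart.
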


\begin{proof}
	We have
	\begin{align*}
		\abs*{\bar{\beta}(\boldy) - \bar{\beta}(\boldy')} & \leq \abs*{s-s'} + \frac{2}{\pi}\abs{\theta-\theta'} + \frac{1}{\pi} \abs*{\sin(2\theta) - \sin(2\theta')} \\
		& \leq \abs*{s-s'} + \frac{4}{\pi}\abs{\theta-\theta'}.
	\end{align*}
	The first term is trivially bounded by $\norm{\boldy-\boldy'}$. Next, observe that $\theta = \arccos(s/\sqrt{r^2})$, which is jointly differentiable in $r^2$ and $s$, and so is Lipschitz continuous with respect to these coordinates on a compact set bounded away from $r=s$. By assumption, $\mathcal{D}$ is such a compact set.
\end{proof}

\section{Facts about Kolmogorov distance and stochastic dominance}

\begin{lemma}[Properties of Kolmogorov distance] \label{lem:kolmogorov_dist_properties}
	For any real-valued random variables $X$ and $Y$, we have
	\begin{equation}
	\norm{F_{X^2}-F_{Y^2}}_\infty \leq 2\norm{F_X-F_Y}_\infty.
	\end{equation}
	In addition, for any constant $c \in \R$, we have
	\begin{equation}
	\norm{F_{X+c}-F_{Y+c}}_\infty = \norm{F_X-F_Y}_\infty
	\end{equation}
\end{lemma}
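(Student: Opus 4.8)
The plan is to establish both identities directly from the definitions of the cumulative distribution functions, exploiting the fact that the events $\braces{X+c \le t}$ and $\braces{X^2 \le t}$ can both be rewritten in terms of the law of $X$ at nearby points. The translation identity is immediate: since $F_{X+c}(t) = \P\braces{X \le t-c} = F_X(t-c)$ and likewise $F_{Y+c}(t) = F_Y(t-c)$, the substitution $u = t-c$ (a bijection of $\R$) gives
\[
\norm{F_{X+c}-F_{Y+c}}_\infty = \sup_{t\in\R}\abs{F_X(t-c)-F_Y(t-c)} = \sup_{u\in\R}\abs{F_X(u)-F_Y(u)} = \norm{F_X-F_Y}_\infty .
\]

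For the squaring identity, write $\Delta \coloneqq \norm{F_X-F_Y}_\infty$. For $t<0$ both $F_{X^2}(t)$ and $F_{Y^2}(t)$ vanish, so it suffices to bound the difference for $t\ge 0$. Setting $a=\sqrt t \ge 0$, I would use $\braces{X^2\le t} = \braces{-a\le X\le a}$ together with the identity $\P\braces{-a\le X\le a} = \P\braces{X\le a} - \P\braces{X<-a} = F_X(a) - F_X((-a)^-)$, where $F_X((-a)^-) \coloneqq \lim_{b\uparrow -a}F_X(b) = \P\braces{X<-a}$. The analogous formula holds for $Y$, so
\[
F_{X^2}(t)-F_{Y^2}(t) = \paren*{F_X(a)-F_Y(a)} - \paren*{F_X((-a)^-)-F_Y((-a)^-)} .
\]
The first parenthesized term has absolute value at most $\Delta$ by definition; the second also has absolute value at most $\Delta$, since $\abs{F_X((-a)^-)-F_Y((-a)^-)} = \lim_{b\uparrow -a}\abs{F_X(b)-F_Y(b)} \le \Delta$. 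The triangle inequality then yields $\abs{F_{X^2}(t)-F_{Y^2}(t)}\le 2\Delta$ for every $t$, and taking the supremum over $t$ completes the argument.

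Since the statement is elementary, there is no serious obstacle; the only point requiring a moment's care is the appearance of the left limit $F_X((-a)^-)$, which is forced by the fact that $\braces{X^2\le t}$ corresponds to the \emph{closed} interval $[-a,a]$ rather than a half-line, so that an atom of $X$ (or $Y$) at $-a$ must be accounted for. This is handled by writing $[-a,a] = \bigcap_n (-a-1/n,\,a]$ and passing to the monotone limit of half-line events, together with the observation (used implicitly above) that the Kolmogorov distance controls left limits as well as values of the distribution functions.
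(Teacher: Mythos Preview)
Your proof is correct and follows essentially the same approach as the paper: express $\braces{X^2 \le t}$ as the interval event $\braces{-\sqrt t \le X \le \sqrt t}$, rewrite it as a difference of CDF values, and apply the triangle inequality. Your version is in fact slightly more careful than the paper's, which silently writes $\P\braces{-a \le X \le a} = F_X(a) - F_X(-a)$ without addressing possible atoms at $-a$; your treatment of the left limit $F_X((-a)^-)$ closes that small gap.
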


\begin{proof}
	For any $t \geq 0$, we have
	\begin{align*}
	\abs{F_{X^2}(t)-F_{Y^2}(t)} & = \P\braces{X^2 \leq t} - \P\braces{Y^2 \leq t} \\
	& = \P\braces{-t \leq X \leq t} - \P\braces{-t \leq Y \leq t} \\
	& = F_X(t) - F_X(-t) + F_Y(t) - F_Y(-t) \\
	& \leq \abs{F_X(t)-F_Y(t)} + \abs{F_X(-t) - F_Y(-t)} \\
	& \leq 2\norm{F_X-F_Y}_\infty
	\end{align*}
	as we wanted. The second identity can be obtained similarly.
\end{proof}

\begin{lemma}[Characterization of stochastic dominance in terms of CDFs] \label{lem:characterization_by_CDFs}
	Let $X$ and $Y$ be real-valued random variables. Then for any $0 < \delta < 1$, $X$ stochastically dominates $Y$ up to error $\delta$ if and only if their CDFs satisfy $F_X \leq F_Y + \delta$.
\end{lemma}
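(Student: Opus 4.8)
The plan is to prove the two directions separately: the implication ``$X \succeq_\delta Y \Rightarrow F_X \le F_Y + \delta$'' is an immediate union bound, while the converse is proved by producing an explicit coupling built from quantile functions. For the easy direction, suppose $(X,Y)$ live on a common probability space with $\P\braces{X \ge Y} \ge 1-\delta$. Fixing $t \in \R$ and observing that $\braces{X \le t} \cap \braces{X \ge Y} \subseteq \braces{Y \le t}$, we get
\begin{equation*}
F_Y(t) = \P\braces{Y \le t} \ge \P\braces{X \le t,\ X \ge Y} \ge \P\braces{X \le t} - \P\braces{X < Y} \ge F_X(t) - \delta,
\end{equation*}
so $F_X \le F_Y + \delta$ pointwise.

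For the converse, assume $F_X(t) \le F_Y(t) + \delta$ for all $t$, and let $F_X^{-1}(u) \coloneqq \inf\braces{t \in \R : F_X(t) \ge u}$ be the quantile function, which satisfies the standard equivalence $F_X^{-1}(u) \le t \iff u \le F_X(t)$, and similarly for $Y$. The crux of the argument is the quantile inequality
\begin{equation} \label{eq:quantile_shift_ineq}
F_X^{-1}(u) \ge F_Y^{-1}(u-\delta) \qquad \text{for every } u \in (\delta, 1),
\end{equation}
which I would prove by contradiction: if some $t$ satisfied $F_X^{-1}(u) \le t < F_Y^{-1}(u-\delta)$, the two defining equivalences would give $u \le F_X(t)$ and $F_Y(t) < u-\delta$, hence $F_X(t) > F_Y(t) + \delta$. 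Granting \eqref{eq:quantile_shift_ineq}, I take $U$ uniform on $(0,1)$, set $X \coloneqq F_X^{-1}(U)$, and set $Y \coloneqq F_Y^{-1}(\phi(U))$, where $\phi$ is the circular shift $\phi(u) = u-\delta$ on $(\delta,1)$ and $\phi(u) = u+1-\delta$ on $(0,\delta]$. Since $\phi$ is a measure-preserving bijection of $(0,1)$ up to a null set, $\phi(U)$ is again uniform, so $X$ and $Y$ have the prescribed marginals; and on the event $\braces{U \in (\delta,1)}$, which has probability $1-\delta$, \eqref{eq:quantile_shift_ineq} yields $X = F_X^{-1}(U) \ge F_Y^{-1}(U-\delta) = Y$. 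Hence this coupling witnesses $X \succeq_\delta Y$.

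The only step requiring genuine care is the bookkeeping around generalized inverses of possibly discontinuous distribution functions — the equivalence $F^{-1}(u) \le t \iff u \le F(t)$, the measurability and measure-preservation of $\phi$, and the handling of the null sets on which $\phi$ or the quantile functions behave degenerately — but none of this is actually difficult. An equivalent, more conceptual route would be to note that the lemma is exactly the Strassen-type minimax identity $\inf_{\text{couplings}} \P\braces{X < Y} = \sup_{t} \paren*{F_X(t) - F_Y(t)}_+$, whose ``$\ge$'' half is the computation in the first paragraph and whose ``$\le$'' half is the coupling just constructed; I would nonetheless present the direct version for self-containedness.
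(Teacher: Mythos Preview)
Your proof is correct and takes essentially the same approach as the paper's: both prove the converse via the quantile inequality $F_X^{-1}(u) \ge F_Y^{-1}(u-\delta)$ and then couple by feeding a uniform variable and its circular shift by $\delta$ into the two quantile functions. The only differences are cosmetic---you argue the quantile inequality by contradiction where the paper uses a limiting sequence, and you shift the argument of $F_Y^{-1}$ down rather than that of $F_X^{-1}$ up---but the construction is the same.
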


\begin{proof}
	The forward implication is trivial, so we only need to show the backward implication. Let $q_X$ and $q_Y$ denote the quantile functions for $X$ and $Y$ respectively. We claim that for any $0 \leq t \leq 1-\delta$, we have $q_X(t+\delta) \geq q_Y(t)$. To see this, first recall the definition $q_Y(t) = \inf\braces{a ~\colon~ F_Y(a) \geq t}$. By definition, we thus have a decreasing sequence $x_n \downarrow q_X(t+\delta)$ such that $F_X(x_n) \geq t + \delta$ for each $n$. Then by assumption, $F_X(a) \leq F_Y(a) + \delta$ for any $a \in \R$, so
	\begin{equation*}
	F_Y(x_n) \geq F_X(x_n) - \delta \geq (t+\delta) - \delta = t,
	\end{equation*}
	which implies that
	\begin{equation*}
	q_Y(t) \leq \liminf_{n\to\infty}x_n = q_X(t+\delta)
	\end{equation*}
	as claimed.
	
	Let $U$ be uniformly distributed on $[0,1]$, and on the same probability space, define $U'(\omega) = U(\omega) + \delta \mod 1$. Then $U'$ is also uniformly distributed on $[0,1]$. Next, it easy to check that $q_X(U') \sim F_X$ and $q_Y(U) \sim F_Y$ (this is a standard construction in probability theory). With probability $1-\delta$, we have $\delta \leq U' = U + \delta \leq 1$. Conditioning on the event in which this occurs, we have
	\begin{equation*}
	q_X(U') = q_X(U+\delta) \geq q_Y(U),
	\end{equation*}
	which gives us the coupling we want.
\end{proof}

\begin{lemma}[Transitivity of stochastic dominance] \label{lem:transitivity_of_stochastic_dominance}
	Let $X$, $Y$, and $Z$ be random variables, $\delta, \delta' > 0$ such that $X \succeq_{\delta} Y$ and $Y \succeq_{\delta'} Z$. Then $X \succeq_{\delta+\delta'} Z$.
\end{lemma}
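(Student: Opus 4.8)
The plan is to bypass the need to glue the two given couplings together and instead argue entirely at the level of cumulative distribution functions, using the characterization of stochastic dominance with error established in Lemma \ref{lem:characterization_by_CDFs}. The point is that although $X \succeq_\delta Y$ and $Y \succeq_{\delta'} Z$ come with couplings that a priori live on unrelated probability spaces (so that a direct gluing argument would require invoking a measure-gluing lemma along the common marginal $Y$), the CDF reformulation turns the claim into a trivial triangle-type inequality.

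First I would apply the forward direction of Lemma \ref{lem:characterization_by_CDFs} to each hypothesis: from $X \succeq_\delta Y$ we obtain the pointwise bound $F_X(a) \leq F_Y(a) + \delta$ for all $a \in \R$, and from $Y \succeq_{\delta'} Z$ we obtain $F_Y(a) \leq F_Z(a) + \delta'$ for all $a \in \R$. (Strictly speaking Lemma \ref{lem:characterization_by_CDFs} is stated for $0 < \delta < 1$; the degenerate cases $\delta + \delta' \geq 1$ are vacuous since the conclusion $X \succeq_{\delta+\delta'} Z$ then holds trivially, e.g. by any independent coupling, so we may assume $\delta, \delta' \in (0,1)$ and $\delta + \delta' < 1$.)

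Next I would simply chain the two inequalities: for every $a \in \R$,
\begin{equation*}
F_X(a) \leq F_Y(a) + \delta \leq F_Z(a) + \delta' + \delta,
\end{equation*}
so that $F_X \leq F_Z + (\delta + \delta')$ pointwise. Finally, invoking the backward (nontrivial) direction of Lemma \ref{lem:characterization_by_CDFs} with error parameter $\delta + \delta'$ produces an explicit coupling of $X$ and $Z$ on a common probability space for which $X \geq Z$ with probability at least $1 - (\delta + \delta')$, which is exactly the assertion $X \succeq_{\delta+\delta'} Z$.

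I do not anticipate a genuine obstacle here; the only thing to be careful about is the bookkeeping of the degenerate range $\delta + \delta' \geq 1$ and the fact that the real content has already been packaged into Lemma \ref{lem:characterization_by_CDFs} (whose backward direction is where the quantile-function coupling construction lives). If one instead preferred a self-contained coupling proof, the main subtlety would be the gluing step — constructing a single probability space carrying $X$, $Y$, $Z$ with the prescribed pairwise couplings — which requires disintegrating along $Y$; the CDF route is cleaner precisely because it avoids this.
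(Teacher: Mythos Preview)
Your proposal is correct and is exactly the approach the paper takes: the paper's proof is the single sentence ``This follows from the characterization in the previous lemma,'' referring to Lemma \ref{lem:characterization_by_CDFs}, and you have spelled out precisely that chaining-of-CDFs argument. Your added remark handling the degenerate case $\delta + \delta' \geq 1$ is a reasonable bit of bookkeeping that the paper leaves implicit.
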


\begin{proof}
	This follows from the characterization in the previous lemma.
\end{proof}

\begin{lemma}[Preservation of stochastic dominance under monotone transformations] \label{lem:monotone_transformations}
	Let $X$ and $Y$, and $Z$ be random variables, $\delta > 0$ such that $X \succeq_{\delta} Y$. Suppose $X$ and $Y$ have range $R \subset \R$, and $\tau\colon R \to R$ is a non-decreasing transformation. Then $\tau(X) \succeq_{\delta} \tau(Y)$.
\end{lemma}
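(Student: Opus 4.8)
The plan is to argue directly from the coupling definition of $\succeq_\delta$ rather than from the CDF characterization, since monotone maps compose cleanly with couplings but interact awkwardly with the jump points of a distribution function. By hypothesis $X \succeq_\delta Y$, so there is a probability space carrying random variables $\tilde X$ and $\tilde Y$ with $\tilde X \sim X$, $\tilde Y \sim Y$, and such that the event $\mathcal{E} \coloneqq \{\tilde X \geq \tilde Y\}$ has probability at least $1-\delta$. Since $\tilde X$ and $\tilde Y$ take values in $R$, the domain of $\tau$, I would push this coupling forward: define $\tilde U \coloneqq \tau(\tilde X)$ and $\tilde V \coloneqq \tau(\tilde Y)$ on the same probability space. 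Because $\tau$ is Borel measurable (any monotone function is, having at most countably many discontinuities), $\tilde U$ has the law of $\tau(X)$ and $\tilde V$ has the law of $\tau(Y)$, so $(\tilde U,\tilde V)$ is a genuine coupling of $\tau(X)$ and $\tau(Y)$.

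It then remains only to check that $\tilde U \geq \tilde V$ on an event of probability at least $1-\delta$, and for this the event $\mathcal{E}$ already works: on $\mathcal{E}$ we have $\tilde X \geq \tilde Y$, and since $\tau$ is non-decreasing this yields $\tau(\tilde X) \geq \tau(\tilde Y)$, i.e. $\tilde U \geq \tilde V$. Thus $\tilde U \geq_\delta \tilde V$ in the ``dominance in place'' sense, and since $(\tilde U,\tilde V)$ realizes the correct pair of marginals, this is exactly the statement $\tau(X) \succeq_\delta \tau(Y)$.

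There is essentially no obstacle; the only mild point is the measurability of $\tau$ needed to ensure $\tilde U,\tilde V$ are bona fide random variables with the asserted laws, which is automatic for monotone $\tau$. An alternative route through Lemma \ref{lem:characterization_by_CDFs} — verifying $F_{\tau(X)} \leq F_{\tau(Y)} + \delta$ — is possible but requires tracking preimages of half-lines under $\tau$, so I would prefer the one-line coupling argument above.
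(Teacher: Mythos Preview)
Your proposal is correct and is exactly the argument the paper has in mind; the paper's own proof reads in full ``This is obvious.'' You have simply written out the coupling argument that makes the obviousness explicit, including the minor measurability remark, so there is nothing to add.
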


\begin{proof}
	This is obvious.
\end{proof}

\begin{corollary}[Kolmogorov distance and coupling] \label{cor:CDF_distance_and_coupling}
	Let $X$ and $Y$ be real-valued random variables such that their CDFs satisfy $\norm{F_X-F_Y}_\infty \leq \delta$ for some $0 < \delta < 1$. Then $X$ stochastically dominates $Y$ up to error $\delta$.
\end{corollary}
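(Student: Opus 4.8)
The plan is to deduce this immediately from Lemma \ref{lem:characterization_by_CDFs}, which already does all the substantive work: it asserts that $X \succeq_\delta Y$ holds if and only if the CDFs satisfy $F_X \leq F_Y + \delta$ pointwise. So the only thing to check is that the Kolmogorov-distance hypothesis implies this one-sided CDF inequality, which is trivial. Concretely, I would argue as follows. By definition of the supremum norm, $\norm{F_X - F_Y}_\infty \leq \delta$ means $\abs{F_X(a) - F_Y(a)} \leq \delta$ for every $a \in \R$; in particular $F_X(a) - F_Y(a) \leq \delta$, i.e. $F_X(a) \leq F_Y(a) + \delta$ for all $a$. This is exactly the condition in the statement of Lemma \ref{lem:characterization_by_CDFs}, so its backward implication produces a coupling of $X$ and $Y$ on a common probability space for which $X \geq Y$ with probability at least $1 - \delta$. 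By the definition of stochastic dominance with error, this says precisely $X \succeq_\delta Y$, as claimed.

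There is no real obstacle here — the corollary is a one-line consequence of the preceding lemma, and the content (the quantile-function construction with the shifted uniform variable $U' = U + \delta \bmod 1$) has already been carried out in the proof of Lemma \ref{lem:characterization_by_CDFs}. If one wished to make the corollary self-contained, the only extra step would be to reproduce that coupling argument, but since the lemma is stated and proved above it suffices to invoke it. I would therefore keep the proof to the two sentences above: extract $F_X \leq F_Y + \delta$ from the hypothesis, then cite Lemma \ref{lem:characterization_by_CDFs}.
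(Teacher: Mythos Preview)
Your proposal is correct and matches the paper's approach: the corollary is stated without proof precisely because it follows immediately from Lemma \ref{lem:characterization_by_CDFs} via the trivial observation that $\norm{F_X-F_Y}_\infty \leq \delta$ implies $F_X \leq F_Y + \delta$. There is nothing more to add.
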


\begin{lemma}[Stochastic dominance ordering for truncated Gaussians] \label{lem:stochastic_dominance_for_truncated_Gaussians}
	The following hold:
	\begin{enumerate}
		\item Fix $\sigma^2, \epsilon > 0$. For any $b > 0$, define the random variable $X_b \coloneqq \rho_{\epsilon}\sqbracket{b +\sigma g}^2$. Then whenever $b' > b$, we have $X_{b'} \succeq X_{b}$.
		\item Fix $\sigma_0^2$, set $\sigma^2 = \frac{B\sigma_0^2}{d^2}$. Let $\epsilon(s)$ and $b(s)$ be defined as in \eqref{eq:defn_of_epsilon(s)} and \eqref{eq:defn_of_b(s)} respectively	.
		Consider the collection of random variables $Y_s \coloneqq \rho_{\epsilon(s)}\sqbracket{b(s) + \sigma g}^2$ for $-1/2 < s < 1/2$. For $d$ large enough, whenever $1/2 > s' > s > 0$, we have $Y_{s'} \succeq Y_{s}$.
	\end{enumerate}
	
\end{lemma}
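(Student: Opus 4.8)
The plan is to reduce both parts to monotonicity of one explicit distribution function. The key algebraic fact is that $\abs{\rho_\epsilon[x]} = (\abs{x}-\epsilon)_+$, so for every $t \ge 0$ one has $\braces{\rho_\epsilon[x]^2 \le t} = \braces{\abs{x} \le \sqrt{t}+\epsilon}$. Applying this with $x = b + \sigma g$ and rescaling by $\sigma$, the CDF of $\rho_\epsilon[b+\sigma g]^2$ at any $t \ge 0$ equals $G\paren*{b/\sigma,\,(\sqrt t+\epsilon)/\sigma}$, where
\[
G(\mu,h) \coloneqq \Phi(h-\mu)-\Phi(-h-\mu) = \P\braces{\abs{\mathcal{N}(\mu,1)} \le h}.
\]
A one-line computation gives $\partial_\mu G = \phi(h+\mu)-\phi(h-\mu) = -\phi(h-\mu)(1-e^{-2h\mu})$ and $\partial_h G = \phi(h-\mu)+\phi(h+\mu) = \phi(h-\mu)(1+e^{-2h\mu})$; in particular $\partial_\mu G < 0$ and $\partial_h G > 0$ whenever $\mu,h>0$. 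With this in hand, both conclusions follow from the characterization of stochastic dominance in terms of CDFs (Lemma \ref{lem:characterization_by_CDFs}, in the limiting case $\delta\downarrow 0$): it is enough to show the relevant CDF is non-increasing in the parameter.

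For part (1), only the center $\mu = b/\sigma$ depends on $b$, while the half-width $h = (\sqrt t+\epsilon)/\sigma$ stays fixed and positive (this is where $\epsilon>0$ is used). Since $b>0$ forces $\mu>0$, the sign of $\partial_\mu G$ shows $b \mapsto F_{X_b}(t)$ is decreasing for every $t\ge 0$, hence $b'>b>0$ gives $F_{X_{b'}}\le F_{X_b}$ pointwise, i.e. $X_{b'}\succeq X_b$. This part is immediate.

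For part (2), both $\mu_s = b(s)/\sigma$ and $h_s = (\sqrt t+\epsilon(s))/\sigma$ move with $s$, and they act against each other: enlarging $b(s)$ pushes the interval away from the origin (lowering the probability), while enlarging $\epsilon(s)$ widens it (raising the probability). The chain rule yields
\[
\frac{d}{ds}F_{Y_s}(t) = \frac{\phi(h_s-\mu_s)}{\sigma}\paren*{-(1-e^{-2h_s\mu_s})\,b'(s) + (1+e^{-2h_s\mu_s})\,\epsilon'(s)},
\]
so $F_{Y_s}$ is non-increasing in $s$ precisely when $\tanh(h_s\mu_s) \ge \epsilon'(s)/b'(s)$. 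I would then split on the kink of $\epsilon(\cdot)$ at $s_0 \coloneqq \sqrt{\log d/B} = d^{-1/3}$. On $0<s\le s_0$, $\epsilon(s)$ is constant, so $\epsilon'(s)=0$ and the inequality holds trivially. On $s\ge s_0$, we have $\epsilon'(s) = CB^2\log d/d^2 = C(\log d)^3/d^{2/3} \to 0$ and $b'(s)\ge 1$, while $h_s\mu_s \ge \epsilon(s)b(s)/\sigma^2 \ge s^2\cdot CB\log d/\sigma_0^2 \ge C(\log d)^2/\sigma_0^2 \to \infty$ uniformly over this range of $s$ and over $t\ge 0$; hence $\tanh(h_s\mu_s)\to 1$, beating $\epsilon'(s)/b'(s)$ for $d$ large. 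Combining the two regions, $F_{Y_s}$ is non-increasing on $(0,1/2)$ for $d$ large enough (independently of $s$ and $t$), which gives $Y_{s'}\succeq Y_s$ whenever $1/2>s'>s>0$.

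The main obstacle is exactly this quantitative balance in part (2): one must check that the drift $b(s)$ always dominates the thresholding radius $\epsilon(s)$ strongly enough, uniformly in $s\in(0,1/2)$ and in the level $t\ge 0$. This is where the particular choices enter — with $B = d^{2/3}\log d$ and $\epsilon(s)$ as in \eqref{eq:defn_of_epsilon(s)}, there is no widening at all below the kink $s_0 = d^{-1/3}$, and above it $h_s\mu_s \gtrsim (\log d)^2$ diverges, leaving ample room against $\epsilon'(s)\to 0$.
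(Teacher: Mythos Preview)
Your proposal is correct and follows essentially the same approach as the paper: write the CDF of $\rho_\epsilon[b+\sigma g]^2$ as $\Phi(h-\mu)-\Phi(-h-\mu)$, differentiate in the parameter, and for part (2) split on the kink of $\epsilon(\cdot)$ and use the lower bound $h_s\mu_s \ge \epsilon(s)b(s)/\sigma^2 \gtrsim (\log d)^2$ on $s\ge s_0$. Your repackaging via $G(\mu,h)$ and the $\tanh$ criterion is a tidy way of stating exactly the paper's comparison of $\phi(h-\mu)/\phi(h+\mu)=e^{2h\mu}$ against $(b'+\epsilon')/(b'-\epsilon')$.
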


\begin{proof}
	Throughout, we let $g$ denote a standard normal random variable. We start by proving the first statement. Let $F_b$ denote the CDFs of $X_b$ and $X_{b'}$ respectively. For any given $a > 0$, we wish to show that $F_b(a) \geq F_{b'}(a)$, and it suffices to show that $\frac{d}{db}F_b(a) \leq 0$ for $b > 0$. In order to do this, we write out $F_s$ in terms of the Gaussian CDF. We have
	\begin{align*}
	F_b(a) & = \P\braces{X_s \leq a} \\
	& = \P\braces{ -\sqrt{a} -\epsilon \leq \sigma g + b \leq \sqrt{a} + \epsilon} \\
	& = \P\braces*{ g \leq \frac{\sqrt{a}+\epsilon-b(s)}{\sigma}} - \P\braces*{g \leq \frac{-\sqrt{a}-\epsilon-b}{\sigma}} \\
	& = \Phi\paren*{\frac{\sqrt{a}+\epsilon-b}{\sigma}} - \Phi\paren*{\frac{-\sqrt{a}-\epsilon-b}{\sigma}}.
	\end{align*}
	Hence, we have
	\begin{equation*}
	\frac{d}{db}F_b(a) = \frac{1}{\sigma}\paren*{\phi\paren*{\frac{-\sqrt{a}-\epsilon-b}{\sigma}} - \phi\paren*{\frac{\sqrt{a}+\epsilon-b}{\sigma}}}.
	\end{equation*}
	Since $\phi$ is even, and decreases away from $0$, and $\abs{-\sqrt{a}-\epsilon-b} > \abs{\sqrt{a}+\epsilon-b}$ for $b>0$, the quantity on the right hand side is negative as we wanted.
	
	The second statement is proved similarly. We let $F_s$ denote the CDF of $Y_s$ and will show that $\frac{d}{ds} F_s(a) \leq 0$. As before we compute
	\begin{equation*}
	F_s(a) = \Phi\paren*{\frac{\sqrt{a}+\epsilon(s)-b(s)}{\sigma}} - \Phi\paren*{\frac{-\sqrt{a}-\epsilon(s)-b(s)}{\sigma}}.
	\end{equation*}
	Hence, we have
	\begin{align*}
	\frac{d}{ds}F_s(a) = \frac{\epsilon'(s)-b'(s)}{\sigma}\phi\paren*{\frac{\sqrt{a}+\epsilon(s)-b(s)}{\sigma}} - \frac{-\epsilon'(s)-b'(s)}{\sigma}\phi\paren*{\frac{-\sqrt{a}-\epsilon(s)-b(s)}{\sigma}}.
	\end{align*}
	
	For $0 < s < \sqrt{\frac{\log d}{B}}$, we have $\epsilon'(s) = 0$, and it is clear that this quantity is nonpositive. For $s > \sqrt{\frac{\log d}{B}}$, first observe that for any $x, y$, we have $e^{-(x-y)^2}/e^{-(x+y)^2} = e^{4xy}$. As such, we compute
	\begin{align*}
	\phi\paren*{\frac{\sqrt{a}+\epsilon(s)-b(s)}{\sigma}}\Big/ \phi\paren*{\frac{-\sqrt{a}-\epsilon(s)-b(s)}{\sigma}} & = \exp\paren*{\frac{2(\sqrt{a}+\epsilon(s))b(s)}{\sigma^2}} \\
	& \geq \exp\paren*{\frac{d^2}{B\sigma_0^2}\cdot \frac{CB^2 \log d}{d^2} \cdot s^2} \\
	& \geq \exp\paren*{\frac{d^2}{B\sigma_0^2}\cdot \frac{CB^2 \log d}{d^2} \cdot \frac{\log d}{B}} \\
	& = \exp\paren*{\frac{C\log^2 d}{\sigma_0^2}}.
	\end{align*}
	
	For $d$ large enough, this ratio is greater than
	\begin{equation*}
	\frac{b'(s)+\epsilon'(s)}{b'(s)-\epsilon'(s)} = \frac{1+ \kappa B/d + CB^2\log d /d^2}{1 + \kappa B/d - CB^2\log d / d^2},
	\end{equation*}
	which converges to $1$ as $d$ tends to infinity. This concludes the proof of the claim.
\end{proof}

\section{Combinatorial lemmas}

\begin{lemma} \label{lem:shrinking_partial_sums}
	Let $x_1,x_2,\ldots$ be a sequence of real numbers. Denote the partial sums by $s_t = \sum_{i=1}^t x_i$. For any $0 < \rho < 1$, define the sequence $w_1,w_2,\ldots$ via the recursive formula
	\begin{equation*}
	w_{t+1} \coloneqq \rho w_t + x_{t+1}.
	\end{equation*}
	If there is some positive integer $M$ and some $C > 0$ such that $\abs{s_t} \leq C$ for all $t \leq M$, then we also have $\abs{w_t} \leq 2C + \abs{w_0}$ for all $t \leq M$.
\end{lemma}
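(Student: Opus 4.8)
\emph{Proof proposal.} The plan is to solve the recursion explicitly and then control the resulting weighted sum of increments by summation by parts. Unrolling $w_{t+1} = \rho w_t + x_{t+1}$ starting from $w_0$ gives the closed form
\[
w_t = \rho^t w_0 + \sum_{i=1}^t \rho^{t-i} x_i ,
\]
so it suffices to bound $\abs*{\sum_{i=1}^t \rho^{t-i} x_i}$ by $2C$ for every $t \le M$; then the triangle inequality together with $\rho^t \le 1$ immediately yields $\abs{w_t} \le \abs{w_0} + 2C$.

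To bound the weighted sum, I would write $x_i = s_i - s_{i-1}$ with the convention $s_0 = 0$ and apply Abel summation with weights $a_i = \rho^{t-i}$. Using $a_t = 1$ and $a_i - a_{i+1} = \rho^{t-i} - \rho^{t-i-1} = -(1-\rho)\rho^{t-i-1}$, this rearrangement gives
\[
\sum_{i=1}^t \rho^{t-i} x_i = s_t - (1-\rho)\sum_{i=1}^{t-1} \rho^{t-i-1} s_i .
\]
The purpose of this step is that it expresses the weighted sum of increments entirely in terms of the partial sums $s_i$, which are the quantities we are assuming to be bounded.

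Finally, I would take absolute values, use $\abs{s_i} \le C$ for all $i \le t \le M$, and apply the geometric-series estimate $\sum_{i=1}^{t-1} \rho^{t-i-1} = \sum_{k=0}^{t-2} \rho^k \le \tfrac{1}{1-\rho}$ to obtain
\[
\abs*{\sum_{i=1}^t \rho^{t-i} x_i} \le C + (1-\rho)\cdot C \cdot \frac{1}{1-\rho} = 2C .
\]
Combined with the closed form this gives $\abs{w_t} \le \rho^t\abs{w_0} + 2C \le \abs{w_0} + 2C$ for all $t \le M$, as claimed. I do not expect any genuine obstacle here; the only points requiring care are the index bookkeeping in the summation by parts and the observation that the factor $1-\rho$ cancels exactly, so that the final bound is uniform in $\rho \in (0,1)$.
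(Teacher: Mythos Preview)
Your proposal is correct and essentially identical to the paper's proof: both arrive at the representation $w_t = \rho^t w_0 + s_t - (1-\rho)\sum_{i=1}^{t-1}\rho^{t-i-1}s_i$ and then bound it by the triangle inequality and the geometric series. The only cosmetic difference is that you derive this formula by unrolling the recursion and applying Abel summation, whereas the paper verifies it directly by induction; the resulting estimate is the same.
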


\begin{proof}
	We first prove by induction that the following representation for $w_t$ holds:
	\begin{equation} \label{eq:formula_for_w_t}
	w_t = s_t - (1-\rho)\sum_{i=1}^{t-1} \rho^{t-i-1} s_i + \rho^t w_0.
	\end{equation}
	First assume that the formula holds for some $t$. Then starting from the definition, we use the inductive hypothesis to write
	\begin{align*}
	w_{t+1} & = \rho w_t + x_{t+1} \\
	& = \rho \cdot \paren*{s_t - (1-\rho)\sum_{i=1}^{t-1} \rho^{t-1-i} s_i + \rho^t w_0} + x_{t+1} \\
	& = \rho s_t - (1-\rho)\sum_{i=1}^{t-1} \rho^{t-i} s_i + \rho^{t+1} w_0 + (s_{t+1} - s_t) \\
	& = s_{t+1} - (1-\rho)\sum_{i=1}^t \rho^{t-i} s_i + \rho^{t+1}w_0.
	\end{align*}
	
	Applying the triangle inequality to \eqref{eq:formula_for_w_t}, we get
	\begin{align*}
	\abs{w_t} & \leq \abs{s_t} + (1-\rho)\sum_{i=1}^t \rho^{i-1}\abs{s_{t-i}} + \rho^t \abs{w_0}\\
	& \leq C + (1-\rho)\sum_{i=1}^\infty \rho^{i-1}C + \abs{w_0} \\
	& \leq 2C + \abs{w_0}
	\end{align*}
	as we wanted.
\end{proof}

\begin{lemma} \label{lem:recursive_inequality}
	Let $x_1,x_2,\ldots$ be a sequence of non-negative real numbers. Denote the partial sums by $s_t = \sum_{i=1}^t x_i$. Let $\rho > 0$ and $\xi > 0$ be such that we have the recursive inequality
	\begin{equation*}
	x_t \leq \rho(\xi + s_{t-1}).
	\end{equation*}
	Then for all $t$, we have
	\begin{equation*}
	s_t \leq \xi\cdot\paren*{(1+\rho)^{t+1}-1}.
	\end{equation*}
\end{lemma}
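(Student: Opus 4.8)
The plan is to prove the slightly sharper bound $s_t \le \xi\paren*{(1+\rho)^t - 1}$ by induction on $t$, from which the stated inequality follows immediately since $(1+\rho)^t - 1 \le (1+\rho)^{t+1}-1$. The key observation is that the hypothesis on $x_t$ converts directly into a clean multiplicative recursion for the \emph{shifted} partial sums $s_t + \xi$, so that the additive constant $\rho\xi$ gets absorbed into the geometric factor.

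First I would record the one-step consequence of the hypothesis. Using the trivial identity $s_t = s_{t-1} + x_t$ together with $x_t \le \rho(\xi + s_{t-1})$, and then adding $\xi$ to both sides, one gets
\begin{equation*}
s_t + \xi \le s_{t-1} + \rho(\xi + s_{t-1}) + \xi = (1+\rho)(s_{t-1} + \xi).
\end{equation*}
In other words, writing $u_t \coloneqq s_t + \xi$, we have $u_t \le (1+\rho)\,u_{t-1}$ for every $t \ge 1$.

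Next I would iterate this bound. Since $s_0 = 0$ (the empty sum), we have $u_0 = \xi$, and hence by a one-line induction $u_t \le (1+\rho)^t u_0 = (1+\rho)^t \xi$ for all $t \ge 0$. Unwinding the definition of $u_t$ then gives $s_t = u_t - \xi \le \xi\paren*{(1+\rho)^t - 1} \le \xi\paren*{(1+\rho)^{t+1} - 1}$, which is exactly the claimed estimate. The base case $t = 0$ is immediate, and the inductive step is precisely the one-step inequality above.

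There is no genuine obstacle here: the only point that requires a moment's care is recognizing that the recursive hypothesis should be paired with $s_t = s_{t-1} + x_t$ and then translated by $\xi$, after which the statement reduces to a routine geometric-series induction. (One could also leave the bound as $(1+\rho)^t - 1$; the extra factor of $(1+\rho)$ in the statement only makes it weaker and is harmless for the applications.)
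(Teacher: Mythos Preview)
Your proof is correct and takes essentially the same approach as the paper: both derive the one-step recursion $s_t \le (1+\rho)s_{t-1} + \rho\xi$ from $s_t = s_{t-1} + x_t$ and then solve it. The only cosmetic difference is that you absorb the additive term via the shift $u_t = s_t + \xi$, whereas the paper sums the resulting geometric series directly; both routes give the sharper bound $s_t \le \xi\paren*{(1+\rho)^t - 1}$, which you correctly note already implies the stated inequality.
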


\begin{proof}
	We have
	\begin{align*}
	s_t & = s_{t-1} + x_t \\
	& \leq s_{t-1} + \rho(\xi + s_{t-1}) \\
	& \leq (1+\rho)s_{t-1} + \rho \xi.
	\end{align*}
	Solving this recursion, we get
	\begin{equation*}
	s_t \leq \rho \xi \cdot \sum_{i=0}^{t-1} (1+\rho)^i = \xi\cdot\paren*{(1+\rho)^{t+1}-1}. \qedhere
	\end{equation*} 
\end{proof}

\section{Concentration inequalities} \label{sec:uniform_Bernstein}

\begin{proof}[Proof of Lemma \ref{lem:uniform_Bernstein}]
	This essentially follows from Theorem 1 in \cite{Howard2018}. For completeness and clarity, however, we give a direct proof of this here using the same technique.
	
	First fix $\epsilon > 0$. Let $\lambda \leq \frac{1}{2K}$, and for each positive integer $t$, we write
	\begin{equation*}
	L_t(\lambda) \coloneqq \exp(\lambda S_t - \lambda^2K^2t).
	\end{equation*}
	Observe that
	\begin{align*}
	\E\braces{L_{t+1}(\lambda) ~\vline~ \mathcal{G}_t} = \E\braces{e^{\lambda X_{t+1} - \lambda^2K^2} ~\vline~ \mathcal{G}_t}L_t(\lambda) & \leq L_t(\lambda),
	\end{align*}
	so that the sequence $L_t(\lambda)$ forms a supermartingale, and remains so if we extend this to time 0 by setting $L_0(\lambda) = 1$. We may now apply the supermartingale inequality. We have
	\begin{align} \label{eq:supermartingale_argument}
	\P\braces{\exists t \leq M ~\colon~ S_t \geq \lambda K^2 M + \epsilon/2 } & \leq \P\braces{\exists t \leq M ~\colon~ S_t \geq \lambda K^2 t + \epsilon/2 } \nonumber \\
	& = \P\braces{\exists t \leq M ~\colon~ L_t(\lambda) \geq e^{\lambda\epsilon/2} } \nonumber \\
	& \leq \E\braces{L_0(\lambda)}\cdot e^{-\lambda\epsilon/2} \nonumber \\
	& \leq e^{-\lambda\epsilon/2}.
	\end{align}
	
	It remains to choose $\lambda$ appropriately in order to get the tail bound we want. If $\epsilon \leq MK$, then we set $\lambda = \epsilon/2K^2M$, observing that for this choice of $\lambda$,
	\begin{equation*}
	\lambda \leq \frac{MK}{2K^2M} \leq \frac{1}{2K},
	\end{equation*}
	and $L_t(\lambda)$ is indeed a supermartingale, and \eqref{eq:supermartingale_argument} holds. Plugging our choice of $\lambda$ into the left hand and right hand sides, this yields the bound
	\begin{equation*}
	\P\braces{\exists t \leq M ~\colon~ S_t \geq \epsilon } \leq e^{-\epsilon^2/4mK^2}.
	\end{equation*}
	
	On the other hand, if $\epsilon > MK$, then we pick $\lambda = 1/2K$. Once again plugging this into \eqref{eq:supermartingale_argument}, we get
	\begin{equation*}
	\P\braces{\exists t \leq M ~\colon~ S_t \geq \epsilon } \leq e^{-\epsilon/4K}.
	\end{equation*}
	Putting these two bounds together gives the upper tail in \eqref{eq:uniform_Bernstein}, and considering the negative of the sequence gives the lower tail.
\end{proof}

\begin{lemma}[Chernoff for non-independent Bernoullis] \label{lem:Chernoff_for_martingale_difference_seq}
	Let $X_1,X_2,\ldots,X_M$ be a sequence of Bernoulli random variables adapted to a filtration $\braces{\mathcal{G}_t}$, and let $\theta$ be such that for $1 \leq t \leq M$, we have
	\begin{equation*}
	\E\braces{X_t~\vline~\mathcal{G}_{t-1}} \leq \theta
	\end{equation*}
	Then for any $0 < \epsilon < 1$, we have
	\begin{equation*}
	\P\braces*{\sum_{i=1}^M X_i \geq (1+\epsilon)M\theta} \leq e^{-M\theta\epsilon^2/3}
	\end{equation*}
\end{lemma}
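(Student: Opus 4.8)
The plan is to run the standard multiplicative Chernoff argument, adapted to the filtered setting exactly as in the proof of Lemma \ref{lem:uniform_Bernstein}: replace the ``product of conditional MGFs'' step (which would need independence) by a supermartingale built from conditional moment generating functions. First I would fix $\lambda > 0$ and exploit the fact that each $X_t \in \{0,1\}$, so that $e^{\lambda X_t} = 1 + (e^\lambda - 1)X_t$ \emph{exactly}. Taking conditional expectations and using the hypothesis $\E\braces{X_t \mid \mathcal{G}_{t-1}} \leq \theta$ together with $1+x \leq e^x$ gives the pointwise bound
\[
\E\braces{e^{\lambda X_t} \mid \mathcal{G}_{t-1}} \;=\; 1 + (e^\lambda - 1)\,\E\braces{X_t \mid \mathcal{G}_{t-1}} \;\leq\; 1 + (e^\lambda - 1)\theta \;\leq\; \exp\paren*{(e^\lambda - 1)\theta}.
\]

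Next I would set $L_t(\lambda) \coloneqq \exp\paren*{\lambda \sum_{i=1}^t X_i - t(e^\lambda - 1)\theta}$ for $t \geq 1$ and $L_0(\lambda) \coloneqq 1$. The display above shows that $\braces{L_t(\lambda)}$ is a supermartingale with respect to $\braces{\mathcal{G}_t}$, so $\E\braces{L_M(\lambda)} \leq 1$, i.e. $\E\braces{\exp(\lambda \sum_{i=1}^M X_i)} \leq \exp\paren*{M\theta(e^\lambda - 1)}$. Applying Markov's inequality to $\exp(\lambda \sum_i X_i)$ at level $(1+\epsilon)M\theta$ then yields
\[
\P\braces*{\sum_{i=1}^M X_i \geq (1+\epsilon)M\theta} \;\leq\; \exp\paren*{M\theta(e^\lambda - 1) - \lambda(1+\epsilon)M\theta}.
\]
Optimizing the exponent over $\lambda > 0$ with the choice $\lambda = \log(1+\epsilon)$ makes $e^\lambda - 1 = \epsilon$ and leaves the bound $\exp\paren*{-M\theta\bigl[(1+\epsilon)\log(1+\epsilon) - \epsilon\bigr]}$.

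To finish, it suffices to invoke the elementary inequality $(1+\epsilon)\log(1+\epsilon) - \epsilon \geq \epsilon^2/3$ for $0 \leq \epsilon \leq 1$. This can be checked by setting $h(\epsilon) \coloneqq (1+\epsilon)\log(1+\epsilon) - \epsilon - \epsilon^2/3$, computing $h(0) = 0$, $h'(\epsilon) = \log(1+\epsilon) - \tfrac{2}{3}\epsilon$ with $h'(0) = 0$, and $h''(\epsilon) = \tfrac{1}{1+\epsilon} - \tfrac{2}{3}$; thus $h'$ is increasing on $[0,\tfrac12]$ and decreasing on $[\tfrac12,1]$, and since $h'(1) = \log 2 - \tfrac23 > 0$ we get $h' \geq 0$, hence $h \geq 0$, on $[0,1]$. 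The whole argument is routine; the only mildly delicate points are maintaining the supermartingale property (in place of independence) and verifying this last scalar inequality, neither of which should cause real difficulty.
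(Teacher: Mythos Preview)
Your proposal is correct and follows essentially the same route as the paper: both arguments bound the conditional MGF via $e^{\lambda X_t}=1+(e^\lambda-1)X_t$ and $1+x\le e^x$ to obtain $\E\{e^{\lambda S_M}\}\le \exp(M\theta(e^\lambda-1))$, after which the paper simply says ``the rest is the standard Chernoff argument'' while you spell out the optimization $\lambda=\log(1+\epsilon)$ and the scalar inequality $(1+\epsilon)\log(1+\epsilon)-\epsilon\ge \epsilon^2/3$. The only cosmetic difference is that you phrase the iteration as a supermartingale $L_t(\lambda)$ whereas the paper unrolls the recursion for $\E\{e^{\lambda S_t}\}$ directly.
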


\begin{proof}
	Denote $S_t \coloneqq \sum_{i=1}^t X_i$, observe that for any $\lambda > 0$,
	\begin{align*}
	\E\braces{e^{\lambda S_t}} & = \E\braces{\E\braces{e^{\lambda X_t} ~\vline~\mathcal{G}_{t-1} } e^{\lambda S_{t-1}}} \\
	& \leq \E\braces*{\exp\paren*{(e^\lambda-1)\E\braces{X_t~\vline~\mathcal{G}_{t-1}}} e^{\lambda S_{t-1}} } \\
	& \leq \exp\paren*{\theta(e^\lambda-1)} \E\braces{e^{\lambda S_{t-1}} },
	\end{align*}
	so that
	\begin{equation*}
	\E\braces{e^{\lambda S_M}} \leq \exp\paren*{M\theta(e^\lambda-1)}.
	\end{equation*}
	The rest of the proof is exactly the same as that of the regular Chernoff's inequality (see \cite{Vershynin}).
\end{proof}

\begin{lemma}[Moment bounds for contractions of random variables] \label{lem:4th_moment_of_contraction}
	Let $X$ be a random variable, $\rho\colon\R \to \R$ a contraction (i.e. a map such that $\abs{\rho(x)-\rho(y)} \leq \abs{x-y}$ for all $x$, $y$.) Then $\E\braces{\paren*{\rho(X)-\E\rho(X)}^4} \leq 8\E\braces{\paren*{X-\E X}^4}$. If in addition, $\rho$ is an odd function and $X$ is symmetric, then for any $b \in \R$, we also have $\abs*{\E\braces*{\rho(X + b)}} \leq \abs{b}$.
\end{lemma}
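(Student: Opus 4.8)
The plan is to treat the two assertions separately; in each case the key device is a symmetrization (independent-copy) trick.

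\textbf{Fourth moment bound.} The subtlety here is that centering does not commute with $\rho$, so one cannot directly compare $\rho(X)-\E\rho(X)$ against $X-\E X$. Instead I would introduce an independent copy $X'$ of $X$ and set $Z \coloneqq \rho(X)$, $Z' \coloneqq \rho(X')$. First record the elementary identity, valid for any random variable $W$ with independent copy $W'$ and mean $\mu$,
\begin{equation*}
\E\braces*{\paren*{W-W'}^4} = 2\E\braces*{\paren*{W-\mu}^4} + 6\paren*{\E\braces*{\paren*{W-\mu}^2}}^2 \geq 2\E\braces*{\paren*{W-\mu}^4},
\end{equation*}
which follows by expanding the fourth power and using that $W-\mu$ and $W'-\mu$ are independent with mean zero. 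Applying this with $W = Z$ gives $\E\braces{\paren{\rho(X)-\E\rho(X)}^4} \leq \tfrac12 \E\braces{\paren{\rho(X)-\rho(X')}^4}$; since $\rho$ is a contraction, $\abs{\rho(X)-\rho(X')} \leq \abs{X-X'}$ pointwise, so this is at most $\tfrac12 \E\braces{\paren{X-X'}^4}$. Using the identity once more in the reverse direction (with $W = X$) bounds this by $\E\braces{\paren{X-\E X}^4} + 3\paren{\Var X}^2$, and $\paren{\Var X}^2 \leq \E\braces{\paren{X-\E X}^4}$ by Cauchy--Schwarz. Collecting terms gives $\E\braces{\paren{\rho(X)-\E\rho(X)}^4} \leq 4\E\braces{\paren{X-\E X}^4}$, which is in fact stronger than the stated constant $8$.

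\textbf{Bound for $\E\braces{\rho(X+b)}$.} Here I would symmetrize over the sign of $X$. Since $X$ is symmetric, $-X$ has the same distribution as $X$, so
\begin{equation*}
\E\braces*{\rho(X+b)} = \tfrac12 \E\braces*{\rho(X+b) + \rho(-X+b)}.
\end{equation*}
Because $\rho$ is odd, $\rho(-X+b) = -\rho(X-b)$, so the integrand equals $\rho(X+b) - \rho(X-b)$, which by the contraction property has absolute value at most $\abs{(X+b)-(X-b)} = 2\abs{b}$. Passing the absolute value through the expectation yields $\abs{\E\braces{\rho(X+b)}} \leq \tfrac12 \cdot 2\abs{b} = \abs{b}$, as claimed; note that no reduction to the case $b \geq 0$ is needed, since the argument is sign-agnostic.

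The only genuine obstacle is in the first part: one must resist centering $\rho(X)$ directly and instead route the estimate through the decoupled quantity $\E\braces{\paren{W-W'}^4}$, on which the contraction hypothesis applies cleanly; once that identity is in hand the rest is a short computation.
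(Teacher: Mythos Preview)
Your proof is correct and uses the same independent-copy symmetrization as the paper. The paper's version of the first part is slightly looser: it passes from $\E\braces{(\rho(X)-\E\rho(X))^4}$ to $\E\braces{(\rho(X)-\rho(X'))^4}$ via Jensen's inequality (losing a factor of $2$) rather than via your exact identity $\E\braces{(W-W')^4} = 2\E\braces{(W-\mu)^4} + 6(\Var W)^2$, which is why it lands on the constant $8$ where you obtain $4$; for the second claim the paper instead subtracts $\rho(X)$ (which has mean zero by oddness and symmetry) rather than symmetrizing over the sign of $X$, a cosmetic difference.
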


\begin{proof}
	Let $X'$ be an independent copy of $X$. By Jensen's inequality, followed by applying the contraction inequality pointwise, we have
	\begin{equation*}
	\E\braces{\paren*{\rho(X)-\E\rho(X)}^4} \leq \E\braces{\paren*{\rho(X)-\rho(X')}^4} \leq \E\braces*{\paren*{X-X'}^4}.
	\end{equation*}
	Writing $X-X'$ = $X - \E X + \E X' - X'$ and expanding, we can bound the right hand side via
	\begin{align*}
	\E\braces*{\paren*{X-X'}^4} & = \E\braces*{\paren*{X-\E X}^4} + 6\E\braces{\paren*{X-\E X}^2\paren*{X'-\E X'}^2} + \E\braces*{\paren*{X'-\E X'}^4} \\
	& \leq 8\E\braces{\paren*{X-\E X}^4}.
	\end{align*}
	Here, the inequality follows from Cauchy-Schwarz.
	
	For the second claim, assume $b > 0$ and write
	\begin{align*}
	\E\braces*{\rho(X + b)} & = \E\braces*{\rho(X + b) - \rho(X)} \\
	& \leq \E\braces*{\abs*{\rho(X + b) - \rho(X)}} \\
	& \leq \E\braces*{\abs*{(X+b) - X}} \\
	& = b.
	\end{align*}
	The first equality follows from the oddness of $\rho$ and the symmetry of $X$, while the second follows from contraction. If $b < 0$, the statement may be proved similarly.
\end{proof}

\end{document}